\newcommand{\cfbox}[2]{%
\setlength{\fboxrule}{1.4pt}%
    \colorlet{currentcolor}{.}%
    {\color{#1}%
    \fbox{\color{currentcolor}#2}}%
}
\newcommand{\cfboxR}[3]{%
\setlength{\fboxrule}{#1}%
    \colorlet{currentcolor}{.}%
    {\color{#2}%
    \fbox{\color{currentcolor}#3}}%
}
\definecolor{green1}{rgb}{.0,0,.7}
\definecolor{blue1}{rgb}{.3,.3,1}
\definecolor{red1}{rgb}{.35,.65,.25}
\definecolor{black}{rgb}{0,0,0}
\newcommand{\bx}{\mathbf{x}}
\def\algoname{\textbf{Proposed method}}
\newtheorem{claim}{Claim}
\newcommand{\mdG}[1]{  #1}  %
\begin{document}

\title{Removing Camera Shake via \\Weighted Fourier Burst Accumulation}

\author{Mauricio~Delbracio %
        and~Guillermo~Sapiro%
\IEEEcompsocitemizethanks{
\IEEEcompsocthanksitem This work was partially funded by: 
ONR, ARO, NSF, NGA, and AFOSR.
\IEEEcompsocthanksitem The authors are with the
Department of Electrical and Computer Engineering at Duke University.

e-mail: \{mauricio.delbracio, guillermo.sapiro\}@duke.edu

}%
}%

\maketitle

\begin{abstract}
Numerous recent approaches attempt to remove image blur due to camera shake, 
either with one or multiple input images, by
explicitly solving an inverse and inherently ill-posed deconvolution problem.
If the photographer takes a burst of images, a modality available in virtually 
all modern digital cameras, we show that it is possible to combine them
to get a clean sharp version.
This is done without explicitly solving any blur estimation and subsequent  inverse problem. 
The proposed algorithm is strikingly simple: it performs a weighted average in the Fourier domain,
with weights depending on the Fourier spectrum magnitude.
\mdG{The method can be seen as a generalization of the \emph{align and average} procedure, with a weighted average, 
motivated by hand-shake physiology and theoretically supported, taking place in the Fourier domain. }
The method's rationale is that camera shake has a random nature and therefore 
each image in the burst is generally blurred differently. 
Experiments with real camera data, and extensive comparisons, show that the proposed Fourier Burst Accumulation (FBA) algorithm
 achieves state-of-the-art results an order of magnitude faster, with simplicity for on-board implementation on camera phones.
 Finally, we also present experiments in real high dynamic range (HDR) scenes, showing how the method can be 
 straightforwardly extended to HDR photography.

\end{abstract}

\begin{IEEEkeywords}
multi-image deblurring, burst fusion, camera shake,  low light photography, high dynamic range.
\end{IEEEkeywords}

\IEEEpeerreviewmaketitle

\section{Introduction}
\label{sec:introduction}

\IEEEPARstart{O}{ne} of the most challenging experiences in photography is taking images in low-light environments.
The basic principle of photography is the accumulation of photons in the sensor during a given exposure time. 
In general, the more photons reach the surface the better the quality of the final image, as the photonic noise is reduced. However, this
basic principle requires the photographed scene to be static and that there is no relative motion between the camera and the scene.
Otherwise,  the photons  will be accumulated in neighboring pixels, generating a loss of sharpness (blur). 
This problem is significant when shooting with hand-held cameras, the most popular photography device today, in dim light conditions. 

Under reasonable hypotheses, the 
camera shake can  be modeled mathematically as a convolution,
\begin{equation}
v = u \star k + n,
\label{eq:convolution}
\end{equation}
where $v$ is the noisy blurred observation, $u$ is the latent sharp image, $k$ is an unknown blurring kernel and $n$ is additive white noise.
For this model to be accurate, the camera movement has to be essentially a rotation in its optical axis   with negligible
in-plane rotation, e.g.,~\cite{whyte2012non}. 
The kernel $k$ results from several blur sources: light diffraction due to the finite aperture,
out-of-focus, light integration in the photo-sensor,  and relative motion between the camera and 
the scene during the exposure. To get enough photons per pixel in a typical low light scene, the camera needs 
to capture light for a period of tens to hundreds of milliseconds. 
In such a situation (and assuming that the scene is static and the user/camera has correctly set the focus),
the dominant contribution to the blur kernel is the camera shake ---mostly due to hand tremors.

Current cameras can take a burst of images, this being popular also in camera phones. This has been exploited in several approaches for 
accumulating photons in the different images and then forming an image with less noise (mimicking a longer exposure 
time a posteriori, see e.g.,~\cite{buades2009note}). However, this principle is disturbed if the images in the burst are blurred. 
The classical mathematical formulation  as a multi-image deconvolution, seeks to
solve an inverse problem where the unknowns are the multiple blurring operators and the underlying sharp image.  
This procedure, although it produces good results~\cite{zhang2013multi},  is computationally very expensive, 
and very sensitive to a good estimation of the blurring kernels, an extremely challenging task by itself. 
Furthermore, since the inverse problem is ill-posed it relies on priors either, or both, for the calculus of the blurs and the latent sharp image.

Camera shake originated from hand tremor vibrations is essentially random~\cite{carignan2010quantifying,gavant2011physiological,xiao2006camera}. 
This implies that the movement of the camera in an individual image of the burst is  independent 
of the movement in another one.  Thus, the blur in one frame will be different from the one in another image of the burst. 
Our work is built on this basic principle. We present an algorithm that aggregates a burst of images (or more than one burst for high dynamic range), taking what is less blurred of each frame 
to build an image that is sharper and less noisy than all the images in the burst.
The algorithm is straightforward to implement and conceptually simple. It takes as input a series of registered images and computes a weighted average of the Fourier coefficients of the images in the burst. 
Similar ideas have been explored by Garrel et al.~\cite{garrel2012highly} in the context of astronomical images,
where a sharp clean image is produced from a video affected by atmospheric turbulence blur.

With the availability of accurate gyroscope and accelerometers in, for example, phone cameras, the registration can be obtained ``for free,''
 rendering the whole algorithm very efficient for on-board implementation. Indeed, one could envision a mode transparent to the user, where 
 every time he/she takes a picture, it is actually a burst or multiple bursts with different parameters each. 
 The set is then processed on the fly and only the result is saved. Related modes are 
 currently available in ``permanent open'' cameras.
The explicit computation of the blurring kernel, as commonly done in the literature, is completely avoided. This is not only an unimportant 
hidden variable for the task at hand, but as mentioned above, still leaves the ill-posed and computationally very expensive task of solving the 
inverse problem.

Evaluation through synthetic and real experiments shows that the final image quality is significantly improved. This is done without explicitly 
performing deconvolution, which generally introduces artifacts and also a significant overhead.
Comparison to state-of-the-art multi-image deconvolution algorithms shows that our approach produces similar or better results while being 
orders of magnitude faster and simpler.
The proposed algorithm does not assume any prior on the latent image; exclusively relying  on the randomness of hand tremor. 

A preliminary short version of this work was  submitted to a conference~\cite{delbracio2015burst}. The present version incorporates
a more detailed analysis of the burst aggregation algorithm and its implementation. We also introduce a detailed comparison to lucky image 
selection techniques~\cite{fried1978probability},  
where from a series of short exposure images, the sharpest ones are selected, aligned and averaged into a single frame.
Additionally, we present experiments in real high dynamic range (\textsc{hdr}) scenes showing how the method can be  extended to 
hand-held \textsc{hdr} photography.

The remaining of the paper is organized as follows.
Section~\ref{sec:relatedWork} discusses the related work and the substantial differences to what we propose.
Section~\ref{sec:fba} explains how a burst can be combined in the Fourier domain to recover a sharper image, while in 
Section~\ref{sec:fbaAnalysis} we present an empirical analysis of the algorithm performance through the simulation of camera
shake kernels. Section~\ref{sec:algodetails}  details the algorithm implementation and in Section~\ref{sec:results} we present results 
of the proposed aggregation procedure in real data, comparing
the algorithm to state-of-the-art  multi-image deconvolution methods. 
Section~\ref{sec:hdr} presents experiments in real high dynamic range (HDR) scenes, 
showing how the method can be  straightforwardly extended to HDR photography.
Conclusions are finally summarized in Section~\ref{sec:conclusions}.

\section{Related Work}
\label{sec:relatedWork}

Removing  camera shake blur is one of the most challenging problems in image processing.
Although in the last decade several image restoration algorithms have emerged giving outstanding performance, 
their success is still very dependent on the scene. Most image deblurring algorithms 
cast the problem as a deconvolution with either a known (non-blind) or an unknown blurring kernel (blind).
See e.g., the review by Kundur and Hatzinakos~\cite{kundur1996blind}, where a discussion of 
the most classical methods is presented.\\

\noindent \textbf{Single image blind deconvolution.}
Most blind deconvolution algorithms  try to estimate the latent image without any other input than the 
noisy blurred image itself. A representative work is the one by Fergus {\it et al.}~\cite{fergus2006removing}.  
This variational method sparked many competitors  seeking to combine natural image priors, assumptions 
on the blurring operator,  and complex optimization frameworks, to simultaneously estimate both the blurring 
kernel and the sharp image~\cite{shan2008high,cai2009blindsingle, krishnan2011blind,xu2013unnatural,michaeli2014blind}.
Fergus et al.~\cite{fergus2006removing} approximated the  heavy-tailed distribution of the gradient of natural
images using a Gaussian mixture. In \cite{shan2008high}, the authors exploited the use of  sparse priors for both the
sharp image and the blurring kernel. Cai et al.~\cite{cai2009blindsingle} proposed a joint optimization framework,
that simultaneously maximizes the sparsity of the blur kernel and the sharp image in a curvelet and a framelet 
systems respectively. Krishnan et al. ~\cite{krishnan2011blind} introduced as a prior the ratio
between the $\ell_1$ and the $\ell_2$ norms on the high frequencies of an image. This normalized sparsity measure
gives low cost for the sharp image.
In~\cite{xu2013unnatural} the authors discussed \emph{unnatural}  sparse representations of the image that
 mainly retain edge information. This representation is used to estimate the motion kernel.
Michaeli and Irani~\cite{michaeli2014blind} recently proposed to use as an image prior the recurrence of small 
natural image patches across different scales. The idea is that the cross-scale patch occurrence should be 
maximal for sharp images.

Several attempt to first estimate the degradation operator and then applying a non-blind deconvolution algorithm.
For instance, \cite{cho2009fast} accelerates the kernel estimation step by using fast image filters for explicitly detecting and  
restoring strong edges in the latent sharp image.
Since the blurring kernel has typically a very small support, the kernel estimation problem is better conditioned than estimating 
the kernel and the sharp image together. In~\cite{levin2009understanding,levin2011efficient},
the authors concluded that it is better to  first solve a maximum a posteriori estimation of the kernel than 
of the latent image and the kernel simultaneously.
 However, even in non-blind deblurring, i.e., when the blurring kernels are known, the problem is 
generally ill-posed, because the blur introduces zeros in the frequency domain. 
Thereby avoiding explicit inversion, as here proposed,  becomes critical.\\

\noindent \textbf{Multi-image blind deconvolution.}
Two or more input images can improve the estimation of both the underlying image 
and the blurring kernels.  Rav-Acha  and Peleg~\cite{rav2005two} claimed that 
\emph{``Two motion-blurred images are better than one,''}  if the direction of the 
blurs are different.
In~\cite{yuan2007image} the authors proposed to capture two photographs:  
one having a short exposure time, noisy but sharp, and one with a long exposure, 
blurred but with low noise. The two acquisitions are complementary, and the sharp 
one is used to estimate the motion kernel of the blurred one.

Close to our work are papers on multi-image blind deconvolution~\cite{zhang2013multi,cai2009blind,chen2008robust,
sroubek2012robust,zhu2012deconvolving}. 
In~\cite{cai2009blind} the authors showed that given multiple observations, the sparsity of the image under 
a tight frame is a good measurement of the clearness of the recovered image. Having multiple input images
improves the accuracy of identifying the motion blur kernels, reducing the illposedness of the problem.
Most of these multi-image algorithms introduce cross-blur penalty functions
between image pairs. However this has the problem of growing combinatorially with the number 
of images in the burst.
This idea is extended in~\cite{zhang2013multi} using a Bayesian framework for coupling all the unknown
 blurring kernels and the latent image in a unique prior. Although this prior has numerous good mathematical
 properties, its optimization is very slow. The algorithm  produces very good results but it may take several
 minutes or even hours  for a typical burst of 8-10 images of several megapixels.
The very recent work by Park and  Levoy~\cite{park2014gyro} relies on an attached gyroscope, now present
 in many phones and tablets, to align all the input images and to get an estimation of the blurring kernels. 
 Then, a multi-image non-blind deconvolution algorithm is applied.

By taking a burst of images, the multi-image deconvolution problem becomes less ill-posed allowing the use of simpler priors.
This is explored in~\cite{ito2014blurburst} where the authors adopted a total variation prior on the underlying sharp image. 

All these papers propose kernel estimation and to solve an inverse problem of image deconvolution. 
The main inconvenience of tackling this problem as a deconvolution, on top of the computational burden, 
is that if the convolution model is not accurate or the kernel is not accurately estimated, 
the restored image will contain strong artifacts (such as ringing). \\

\noindent \textbf{Lucky imaging.}
A popular technique in astronomical photography, known as \emph{lucky imaging} or \emph{lucky exposures}, 
is to take a series of thousands of short-exposure images and then select and fuse only the sharper ones~\cite{law2006lucky}. 
Fried~\cite{fried1978probability} showed that the probability of getting a sharp lucky short-exposure 
image through turbulence follows a negative exponential. Thus, when the captured series or video is 
sufficiently long, there will exist such a frame with high probability.  

Classical selection techniques are based on the brightness of the 
brightest speckle~\cite{law2006lucky}.
The number of selected frames is chosen to optimize the tradeoff
between sharpness and signal-to-noise ratio required in the application.

Others propose to measure the local sharpness from the norm of the gradient or the image 
Laplacian~\cite{john2005multiframe,aubailly2009automated,joshi2010seeing,haro2012photographing}.
Joshi and Cohen~\cite{joshi2010seeing} engineered a weighting scheme 
to balance noise reduction and sharpness preservation.
The sharpness is measured through the intensity of the  image Laplacian. They also proposed a local selectivity
 weight to reflect the fact that more averaging should be done in smooth regions.
Haro and colleagues~\cite{haro2012photographing} explored similar ideas to
fusion different acquisitions of painting images. The weights for combining the input images rely on a 
local sharpness measure based on the energy of the image gradient.
The main disadvantage of these approaches is that they only rely on sharpness measures 
(which by the way is not necessarily trivial to estimate)  and do not 
profit the fact that camera shake blur can be in different directions in different frames.

Garrel et al.~\cite{garrel2012highly} introduced a selection scheme for astronomic images, based on the relative 
strength of signal for each spatial frequency in the Fourier domain. 
From a series of realistic image simulations, the authors showed
that this procedure produces images of higher resolution and better signal to noise ratio
than traditional lucky image fusion schemes.
This procedure makes a much more efficient use of the information contained in each frame.

Our paper is based on similar ideas but in a different scenario. 
The idea is to fuse all the images in the burst without 
explicitly estimating the blurring kernels and subsequent inverse problem approach, 
but taking the information that is less degraded from each image in the burst.
The estimation of the ``less degraded'' information is done in a trivial fashion as explained next.
The entire algorithm is based on physical properties of the camera (hand) shake and not on priors 
or assumptions on the image and/or kernel.

\section{Fourier Burst Accumulation}
\label{sec:fba}
\subsection{Rationale}

\begin{figure}
\centering

\begin{minipage}[c]{.258\columnwidth}
\begin{center}

\begin{tikzpicture}
    \node[anchor=north west,inner sep=0] (image) at (0,0) {\includegraphics[width=\textwidth]{./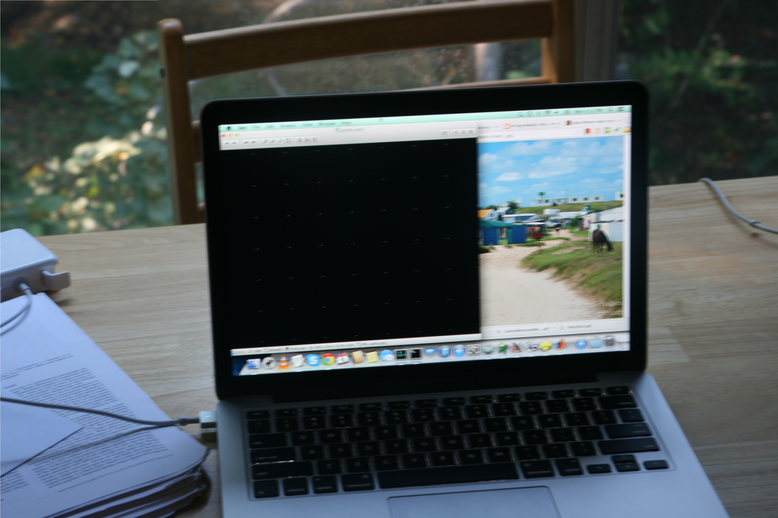}};
    \begin{scope}[x={(image.north east)},y={(image.south west)}]
        \draw[red, thick] (0.3971,0.3971) rectangle (0.6029,0.6029); %
    \end{scope}
\end{tikzpicture}

\vspace{.15em}

\begin{tikzpicture}
    \node[anchor=north west,inner sep=0] (image) at (0,0) {\cfbox{red}{\includegraphics[width=.96\textwidth]{./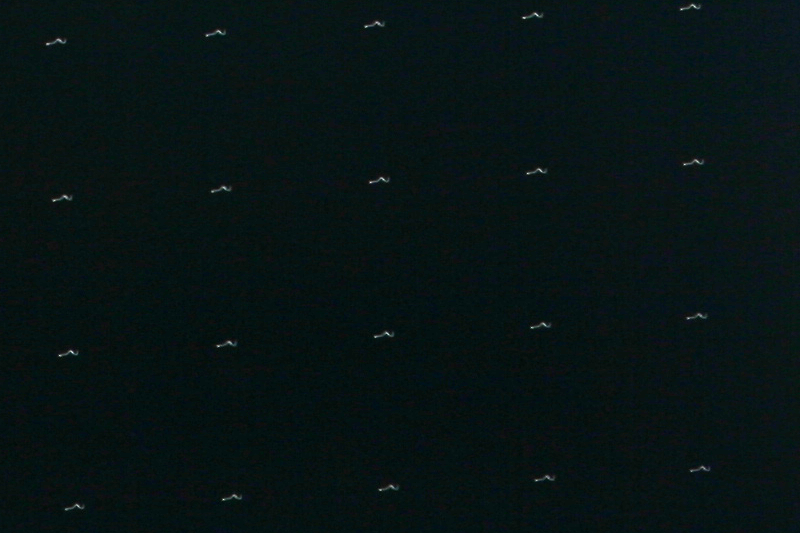}}};
    \begin{scope}[x={(image.north east)},y={(image.south west)}]
        \draw[blue1, thick] (0.4363,0.5619) rectangle (0.5238,0.6932); %
    \end{scope}
\end{tikzpicture}
\end{center}

\end{minipage}
\hspace{-0.5em}
\begin{minipage}[c]{.734\columnwidth}
\cfboxR{0.8pt}{blue1}{\includegraphics[width=.150\textwidth]{./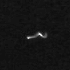}}\hspace{-.2em}
\cfboxR{0.8pt}{blue1}{\includegraphics[width=.150\textwidth]{./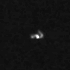}}\hspace{-.2em}
\cfboxR{0.8pt}{blue1}{\includegraphics[width=.150\textwidth]{./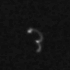}}\hspace{-.2em}
\cfboxR{0.8pt}{blue1}{\includegraphics[width=.150\textwidth]{./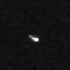}}\hspace{-.2em}
\cfboxR{0.8pt}{blue1}{\includegraphics[width=.150\textwidth]{./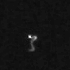}}\hspace{-.2em}
\cfboxR{0.8pt}{blue1}{\includegraphics[width=.150\textwidth]{./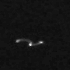}}

\vspace{0.1em}

\cfboxR{0.8pt}{blue1}{\includegraphics[width=.150\textwidth]{./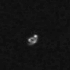}}\hspace{-.2em}
\cfboxR{0.8pt}{blue1}{\includegraphics[width=.150\textwidth]{./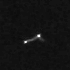}}\hspace{-.2em}
\cfboxR{0.8pt}{blue1}{\includegraphics[width=.150\textwidth]{./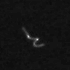}}\hspace{-.2em}
\cfboxR{0.8pt}{blue1}{\includegraphics[width=.150\textwidth]{./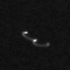}}\hspace{-.2em}
\cfboxR{0.8pt}{blue1}{\includegraphics[width=.150\textwidth]{./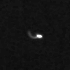}}\hspace{-.2em}
\cfboxR{0.8pt}{blue1}{\includegraphics[width=.150\textwidth]{./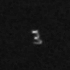}}

\vspace{0.1em}

\cfboxR{0.8pt}{blue1}{\includegraphics[width=.150\textwidth]{./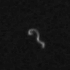}}\hspace{-.2em}
\cfboxR{0.8pt}{blue1}{\includegraphics[width=.150\textwidth]{./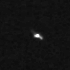}}\hspace{-.2em}
\cfboxR{0.8pt}{blue1}{\includegraphics[width=.150\textwidth]{./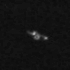}}\hspace{-.2em}
\cfboxR{0.8pt}{blue1}{\includegraphics[width=.150\textwidth]{./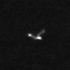}}\hspace{-.2em}
\cfboxR{0.8pt}{blue1}{\includegraphics[width=.150\textwidth]{./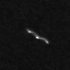}}\hspace{-.2em}
\cfboxR{0.8pt}{blue1}{\includegraphics[width=.150\textwidth]{./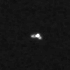}}

\end{minipage}

\caption{
When the camera is set to a burst mode, several photographs are captured sequentially. 
Due to the random nature of  hand tremor,  the camera shake blur is mostly 
independent from one frame to the other. An image consisting
of white dots was photographed with a \textsc{dslr} handheld camera to depict
the camera motion kernels. The kernels are mainly unidimensional regular trajectories 
that are not completely random (perfect random walk) nor uniform. 
}
\label{fig:realKernelsLaptop}
\end{figure}

\begin{figure*}
\centering

\begin{minipage}{.5em}
\begin{sideways}

{\ssmall
$\text{imag}(\hat{k})$\hspace{1.2em}
$\text{real}(\hat{k})$\hspace{2em}
$k$\hspace{4em}
$v$\hspace{2em}
}

\end{sideways}
\end{minipage}
\begin{minipage}{.0528\textwidth}
\centering

{\tiny input 1} \vspace{.12em}

\cfboxR{0.8pt}{black}{\includegraphics[width=.93\textwidth]{./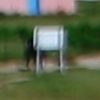}} \vspace{-1.05em}

\cfboxR{0.8pt}{black}{\includegraphics[width=.93\textwidth]{./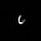}} \vspace{-1.05em}

\cfboxR{0.8pt}{black}{\includegraphics[width=.93\textwidth]{./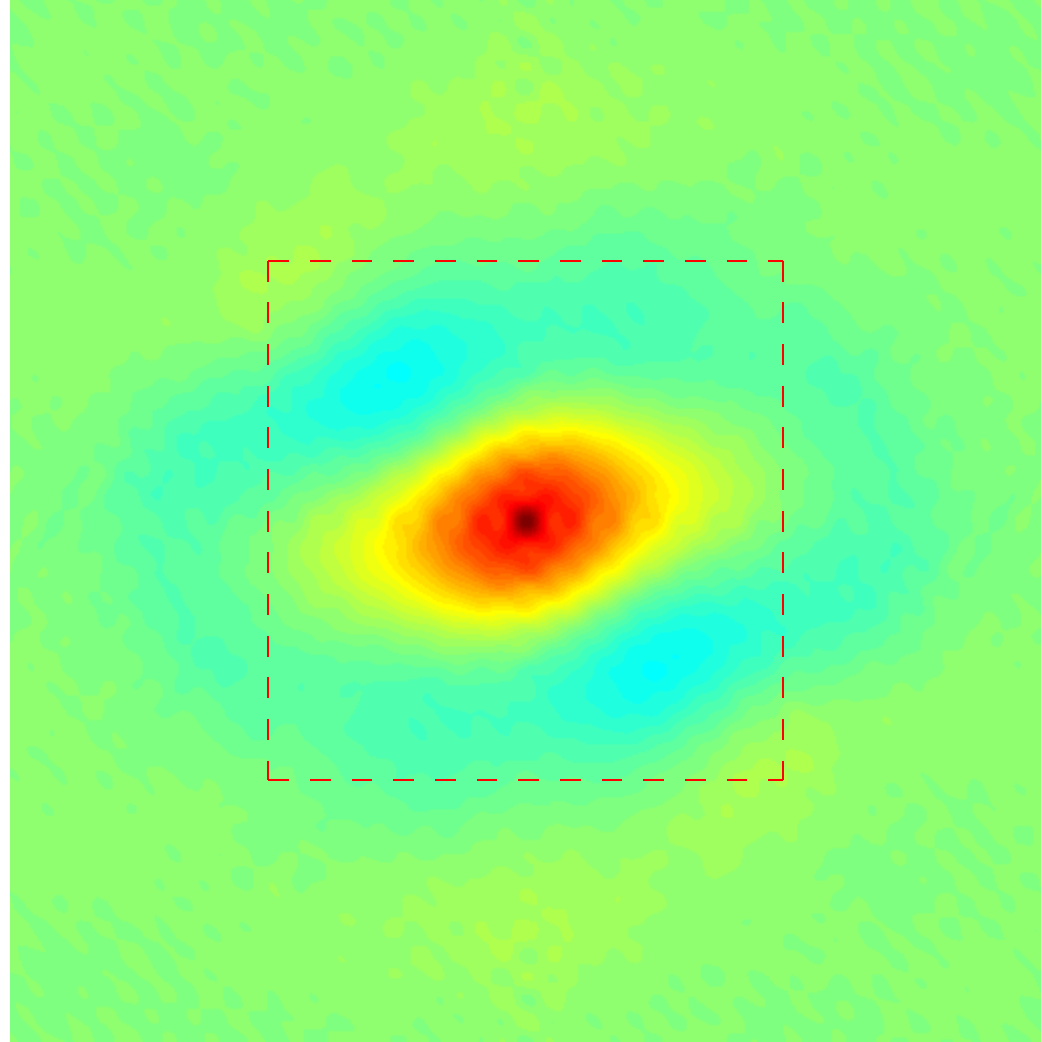}} \vspace{-1.05em}

\cfboxR{0.8pt}{black}{\includegraphics[width=.93\textwidth]{./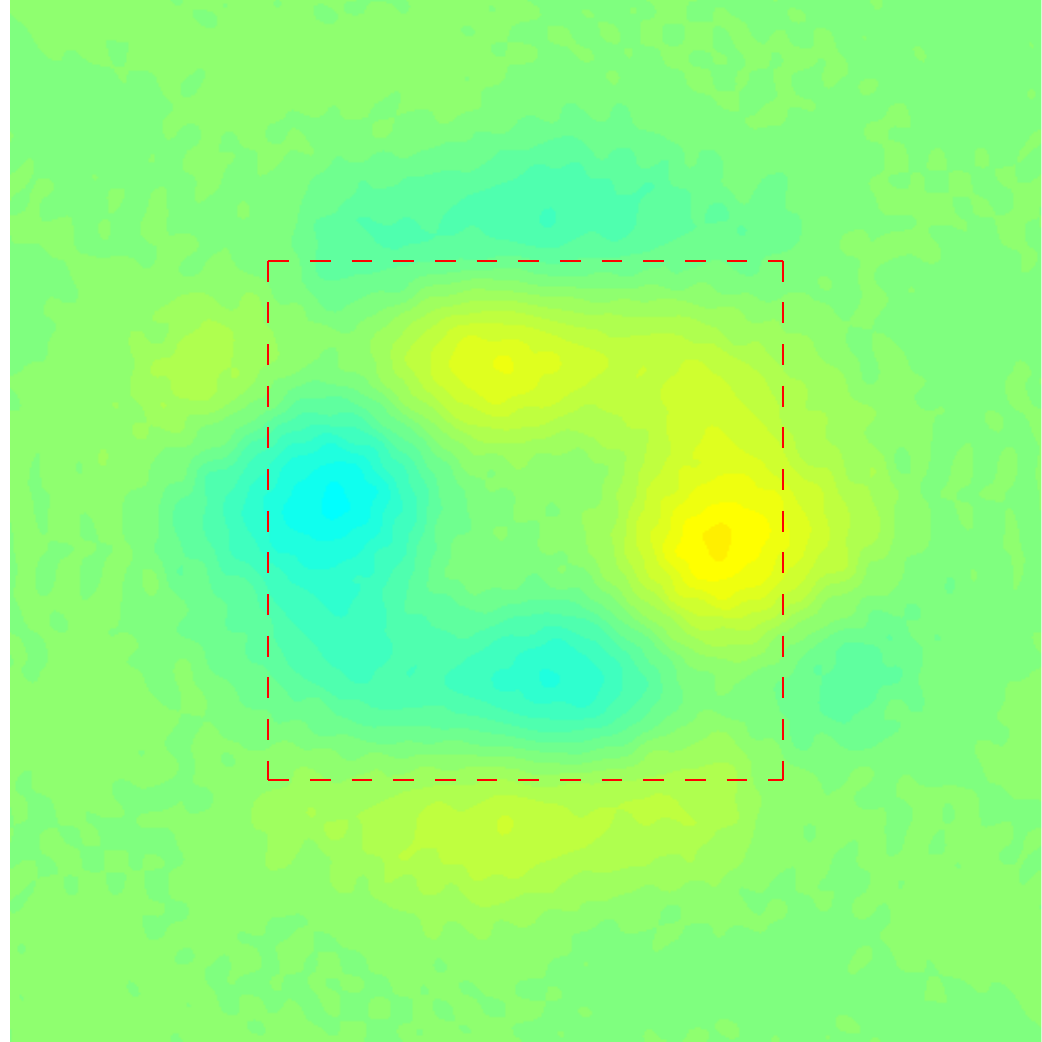}}

\end{minipage}
\hspace{-.27em}%
\begin{minipage}{.0528\textwidth}
\centering
{\tiny input 2} \vspace{.12em}

\cfboxR{0.8pt}{black}{\includegraphics[width=.93\textwidth]{./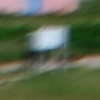}} \vspace{-1.05em}

\cfboxR{0.8pt}{black}{\includegraphics[width=.93\textwidth]{./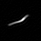}} \vspace{-1.05em}

\cfboxR{0.8pt}{black}{\includegraphics[width=.93\textwidth]{./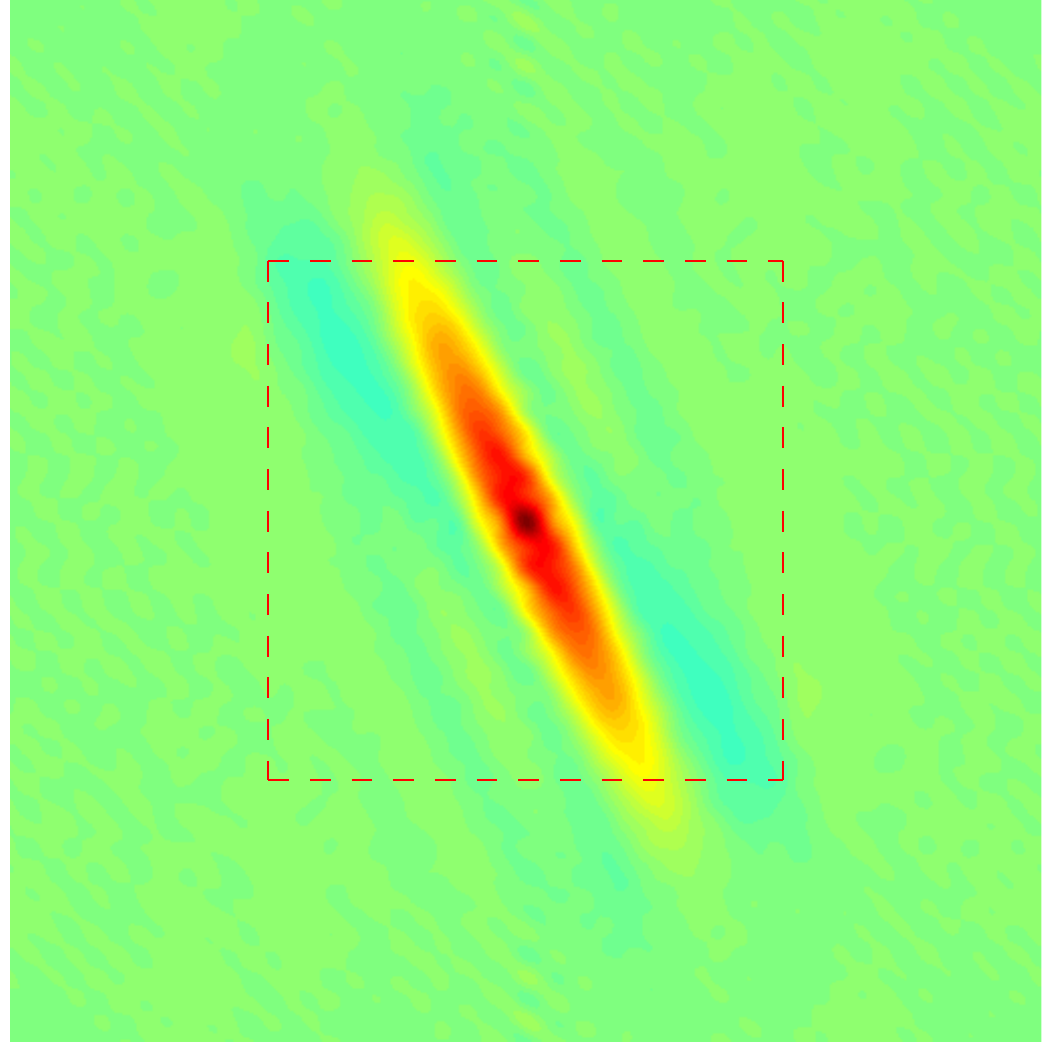}} \vspace{-1.05em}

\cfboxR{0.8pt}{black}{\includegraphics[width=.93\textwidth]{./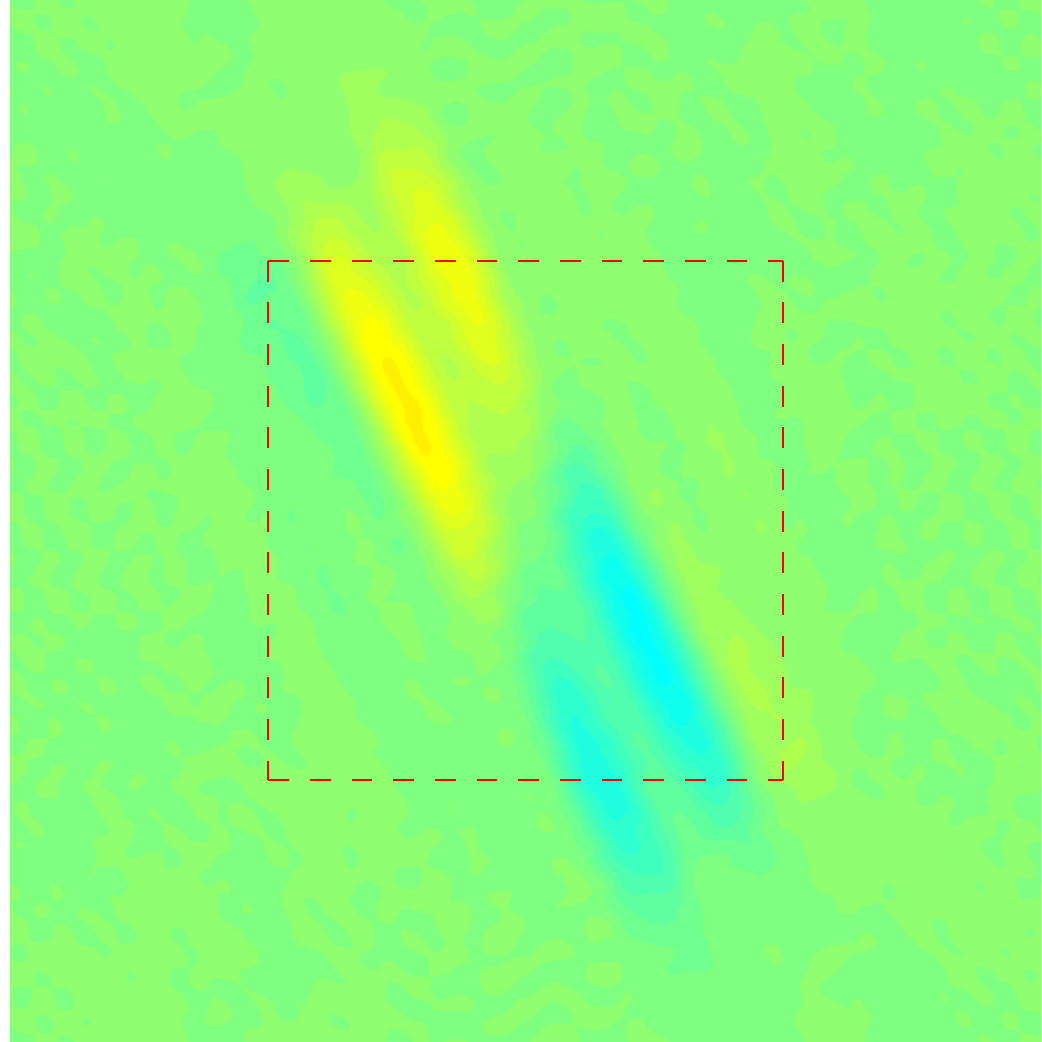}}

\end{minipage}
\hspace{-.60em}
\begin{minipage}{.0528\textwidth}
\centering
{\tiny input 3} \vspace{.12em}

\cfboxR{0.8pt}{black}{\includegraphics[width=.93\textwidth]{./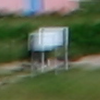}} \vspace{-1.05em}

\cfboxR{0.8pt}{black}{\includegraphics[width=.93\textwidth]{./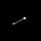}} \vspace{-1.05em}

\cfboxR{0.8pt}{black}{\includegraphics[width=.93\textwidth]{./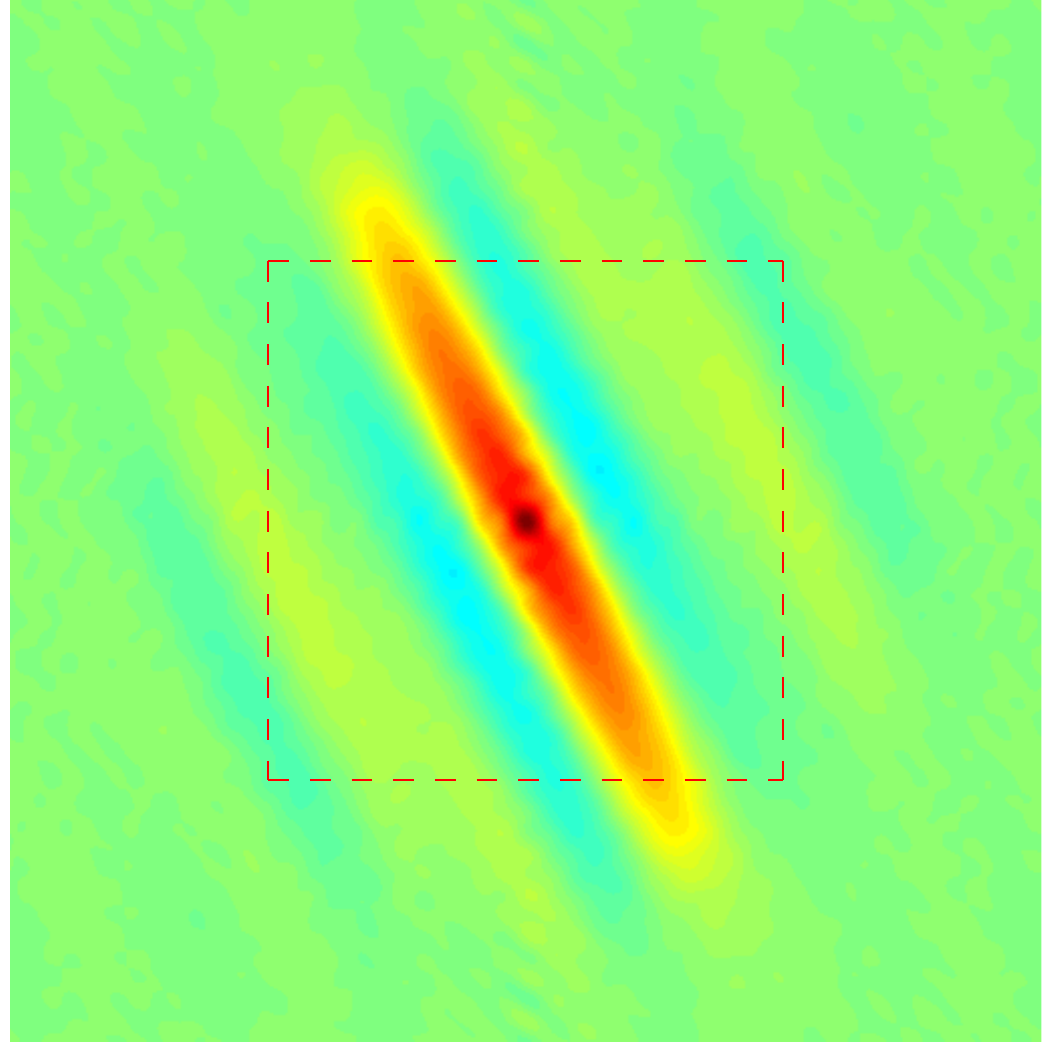}} \vspace{-1.05em}

\cfboxR{0.8pt}{black}{\includegraphics[width=.93\textwidth]{./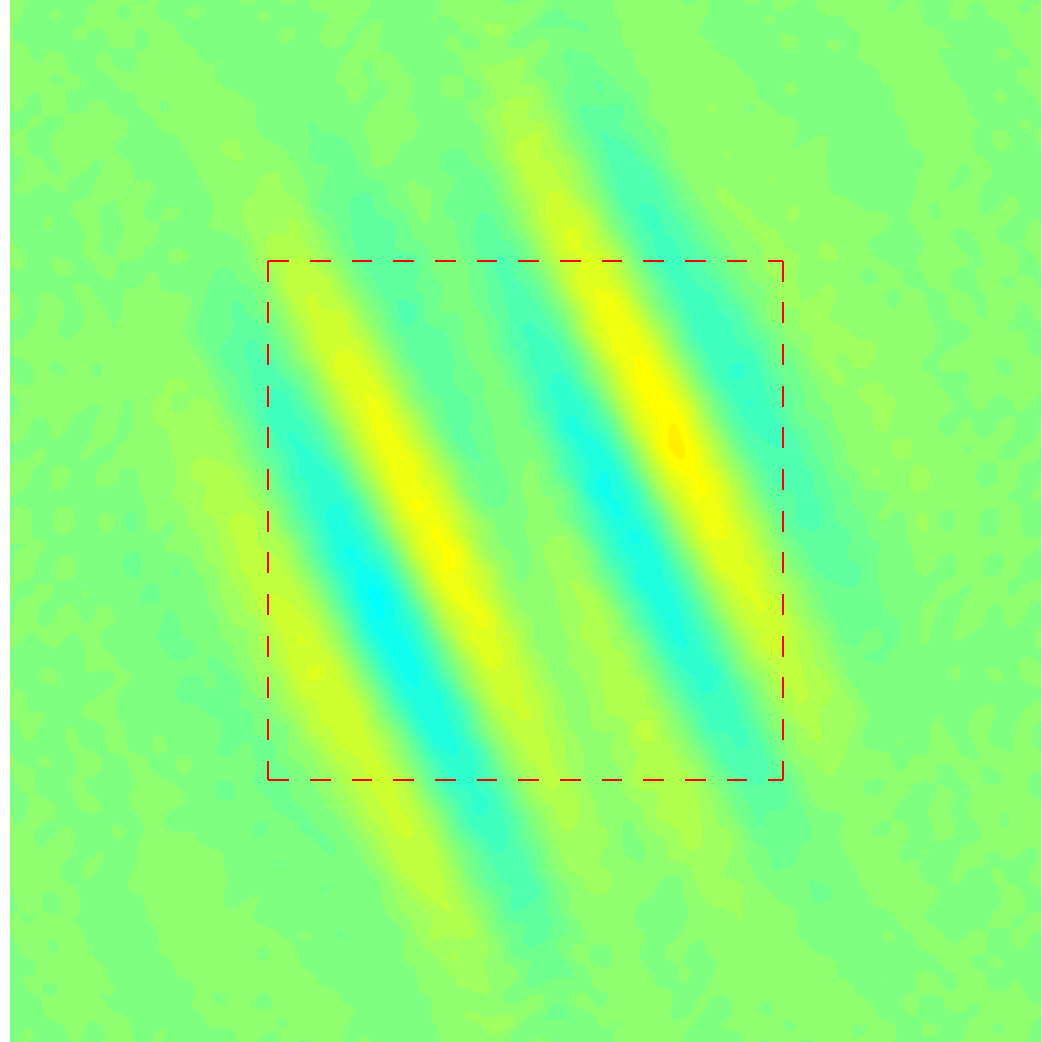}}

\end{minipage}
\hspace{-.60em}
\begin{minipage}{.0528\textwidth}
\centering
{\tiny input 4} \vspace{.12em}

\cfboxR{0.8pt}{black}{\includegraphics[width=.93\textwidth]{./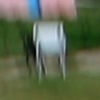}} \vspace{-1.05em}

\cfboxR{0.8pt}{black}{\includegraphics[width=.93\textwidth]{./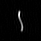}} \vspace{-1.05em}

\cfboxR{0.8pt}{black}{\includegraphics[width=.93\textwidth]{./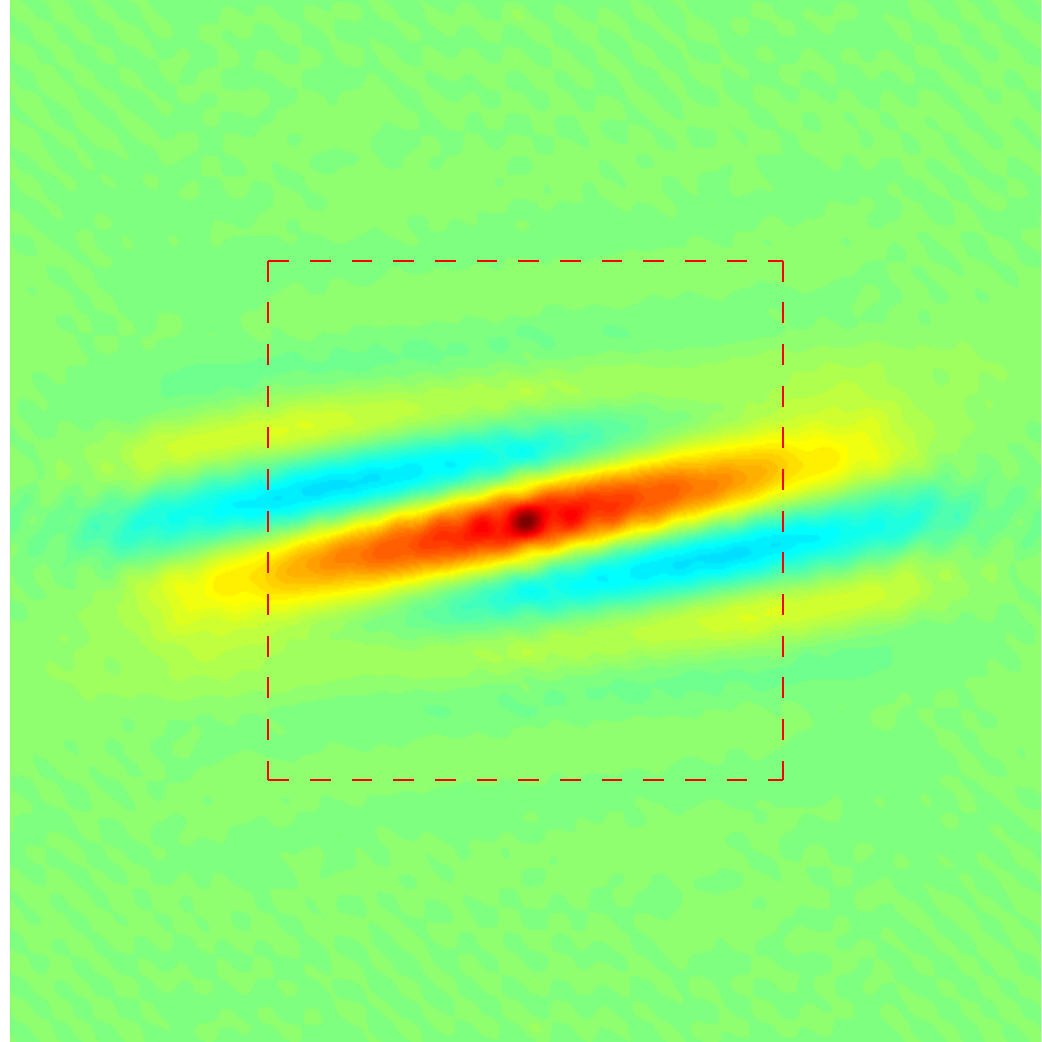}} \vspace{-1.05em}

\cfboxR{0.8pt}{black}{\includegraphics[width=.93\textwidth]{./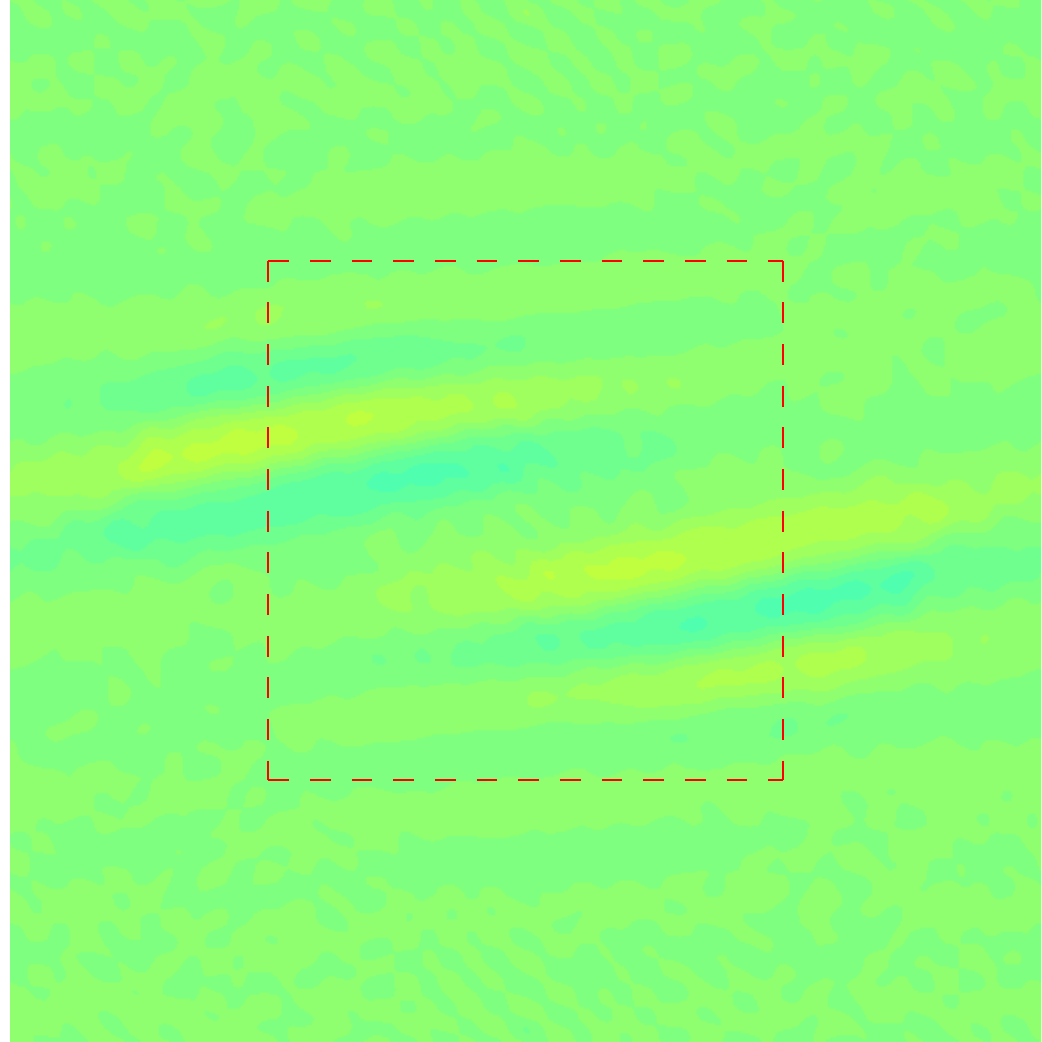}}

\end{minipage}
\hspace{-.60em}
\begin{minipage}{.0528\textwidth}
\centering
{\tiny input 5} \vspace{.12em}

\cfboxR{0.8pt}{black}{\includegraphics[width=.93\textwidth]{./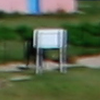}} \vspace{-1.05em}

\cfboxR{0.8pt}{black}{\includegraphics[width=.93\textwidth]{./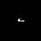}} \vspace{-1.05em}

\cfboxR{0.8pt}{black}{\includegraphics[width=.93\textwidth]{./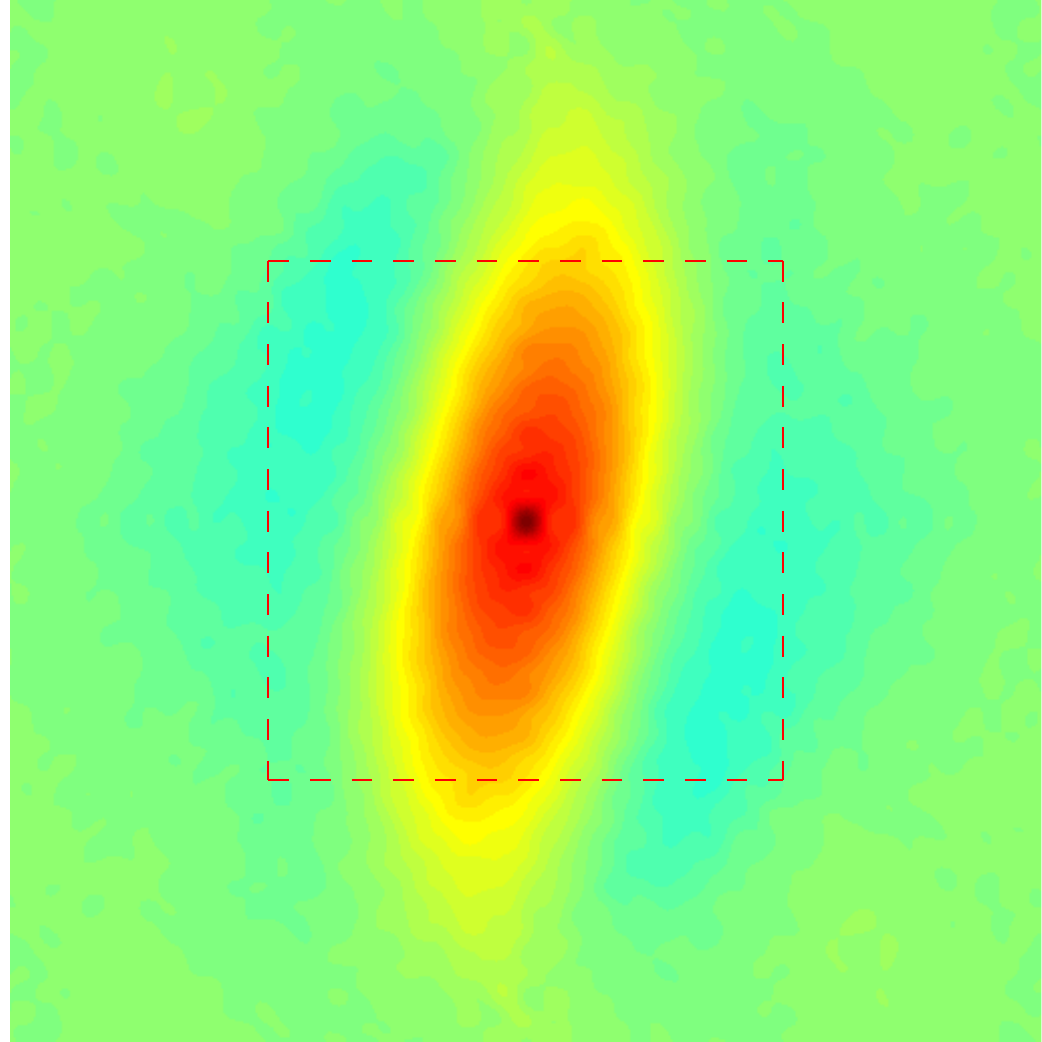}} \vspace{-1.05em}

\cfboxR{0.8pt}{black}{\includegraphics[width=.93\textwidth]{./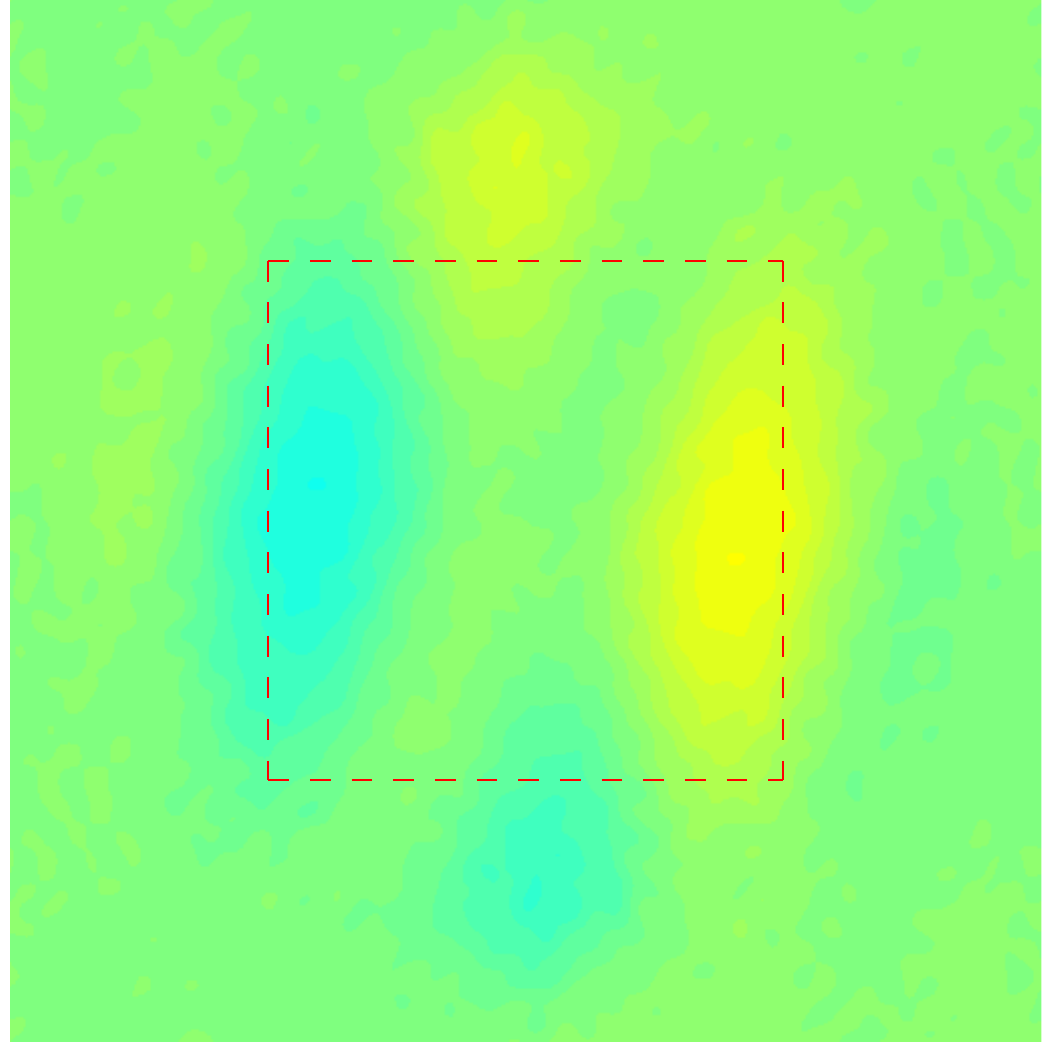}}

\end{minipage}
\hspace{-.60em}
\begin{minipage}{.0528\textwidth}
\centering
{\tiny input 6} \vspace{.12em}

\cfboxR{0.8pt}{black}{\includegraphics[width=.93\textwidth]{./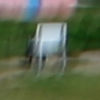}} \vspace{-1.05em}

\cfboxR{0.8pt}{black}{\includegraphics[width=.93\textwidth]{./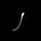}} \vspace{-1.05em}

\cfboxR{0.8pt}{black}{\includegraphics[width=.93\textwidth]{./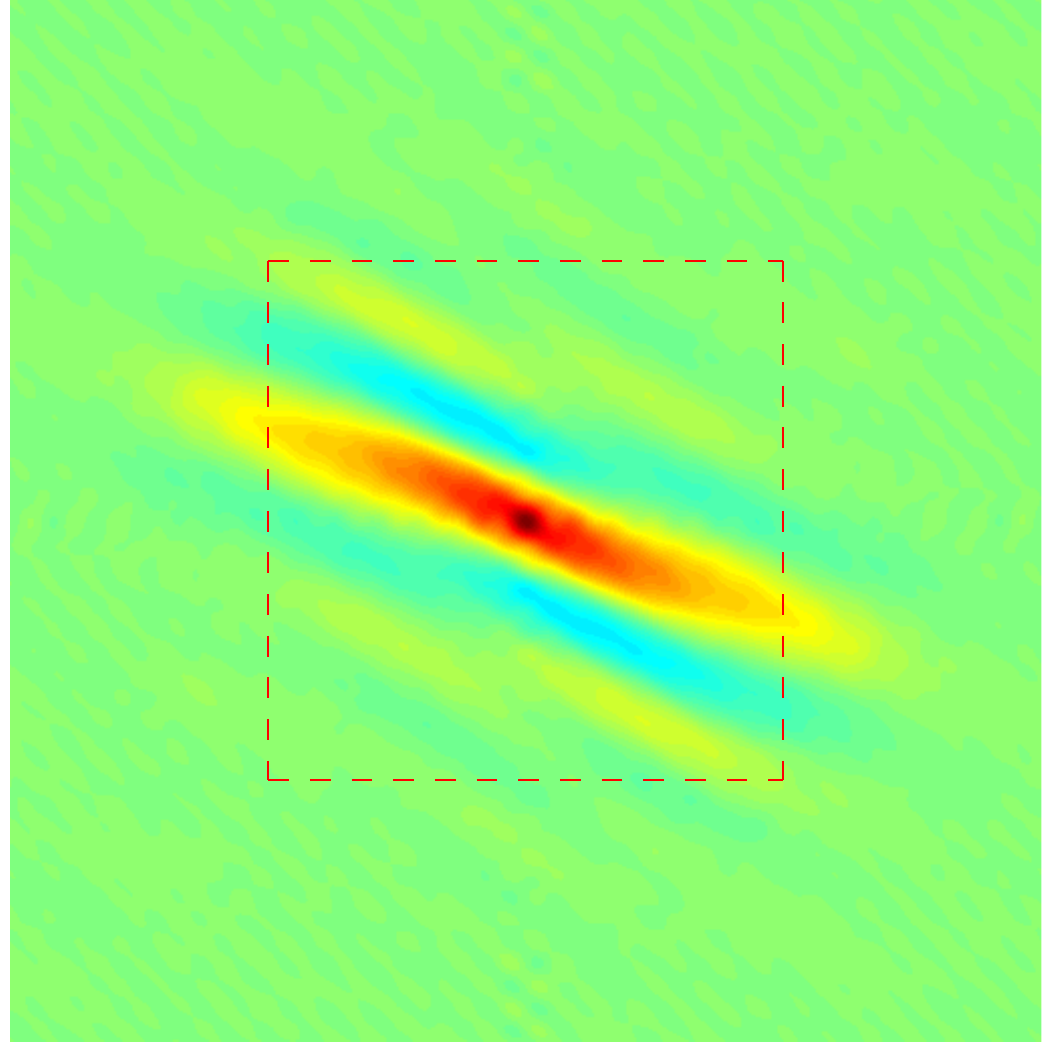}} \vspace{-1.05em}

\cfboxR{0.8pt}{black}{\includegraphics[width=.93\textwidth]{./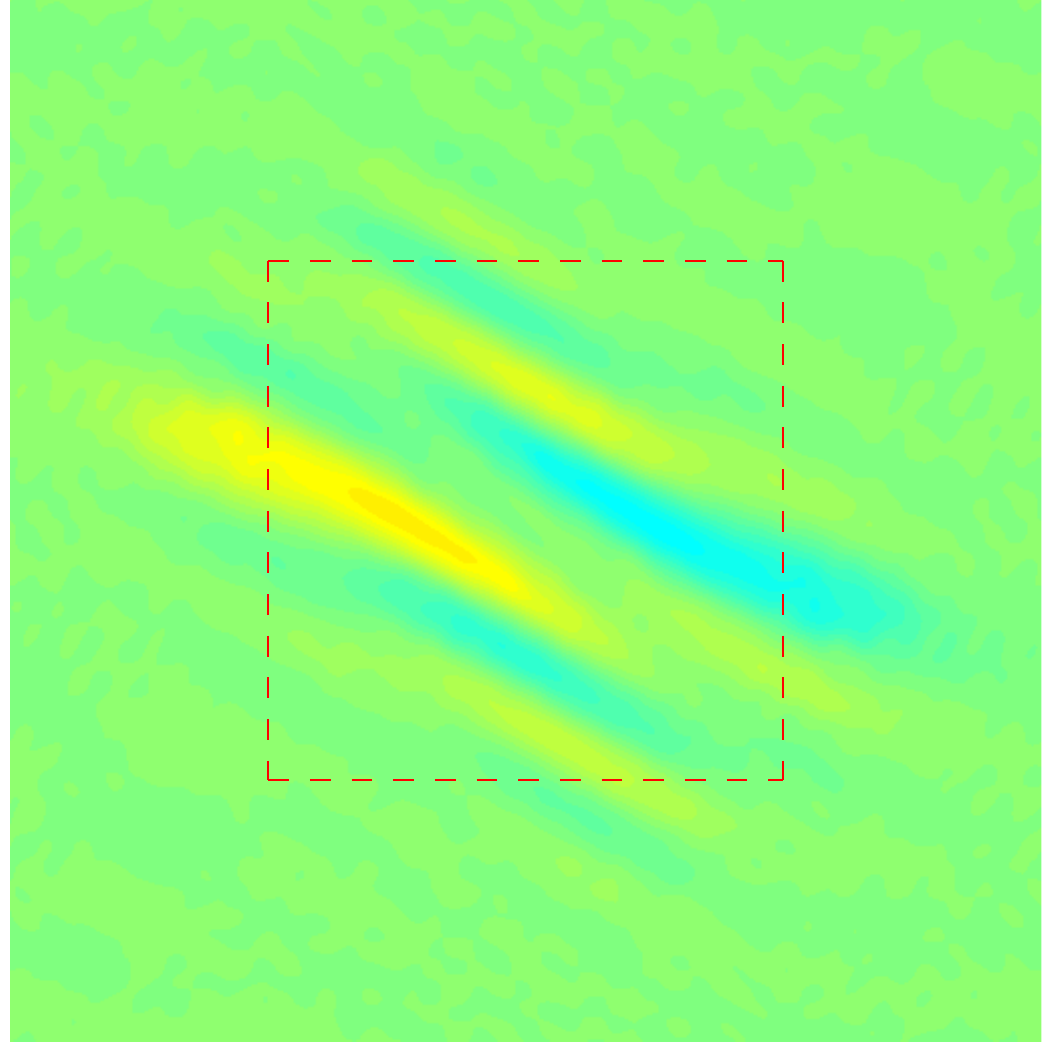}}
\end{minipage}
\hspace{-.60em}
\begin{minipage}{.0528\textwidth}
\centering
{\tiny input 7} \vspace{.12em}

\cfboxR{0.8pt}{black}{\includegraphics[width=.93\textwidth]{./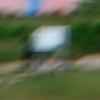}} \vspace{-1.05em}

\cfboxR{0.8pt}{black}{\includegraphics[width=.93\textwidth]{./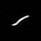}} \vspace{-1.05em}

\cfboxR{0.8pt}{black}{\includegraphics[width=.93\textwidth]{./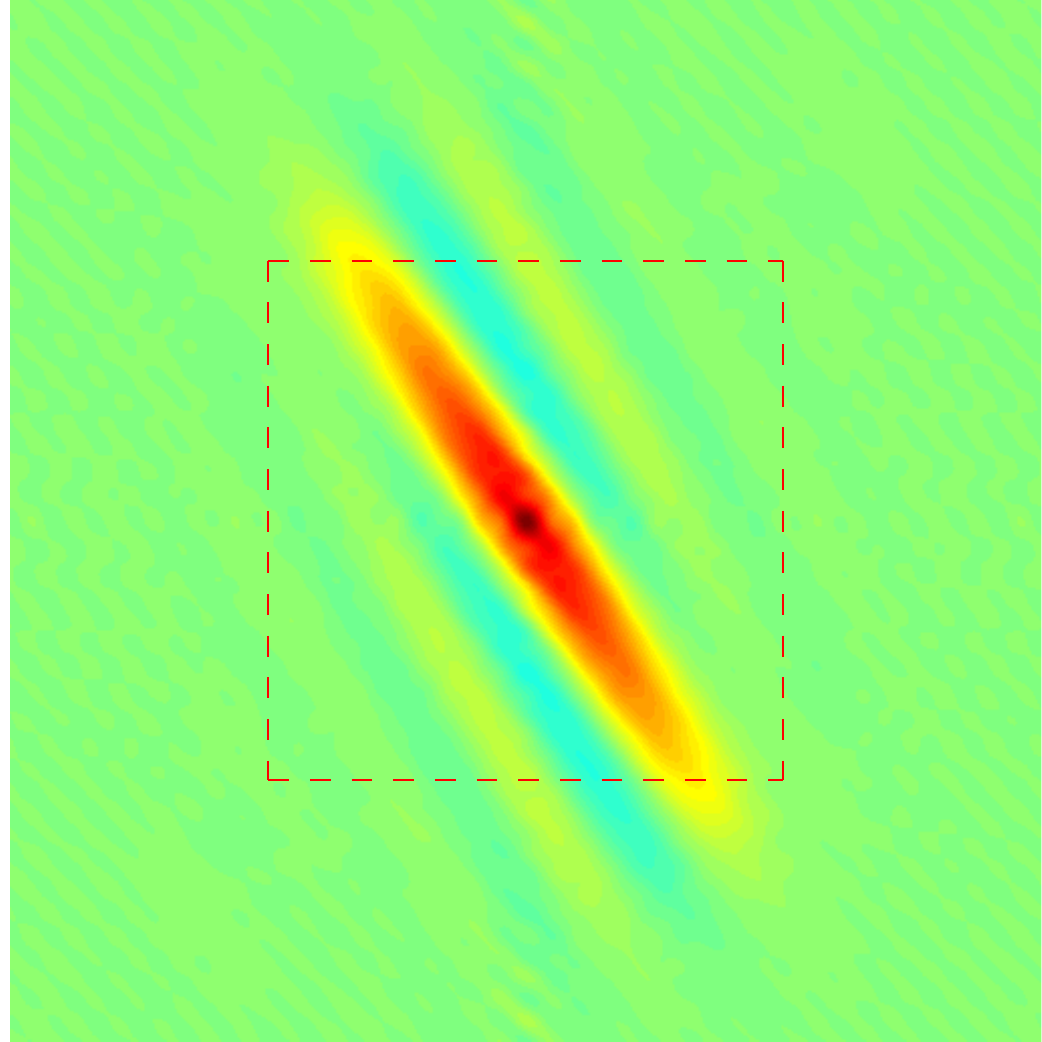}} \vspace{-1.05em}

\cfboxR{0.8pt}{black}{\includegraphics[width=.93\textwidth]{./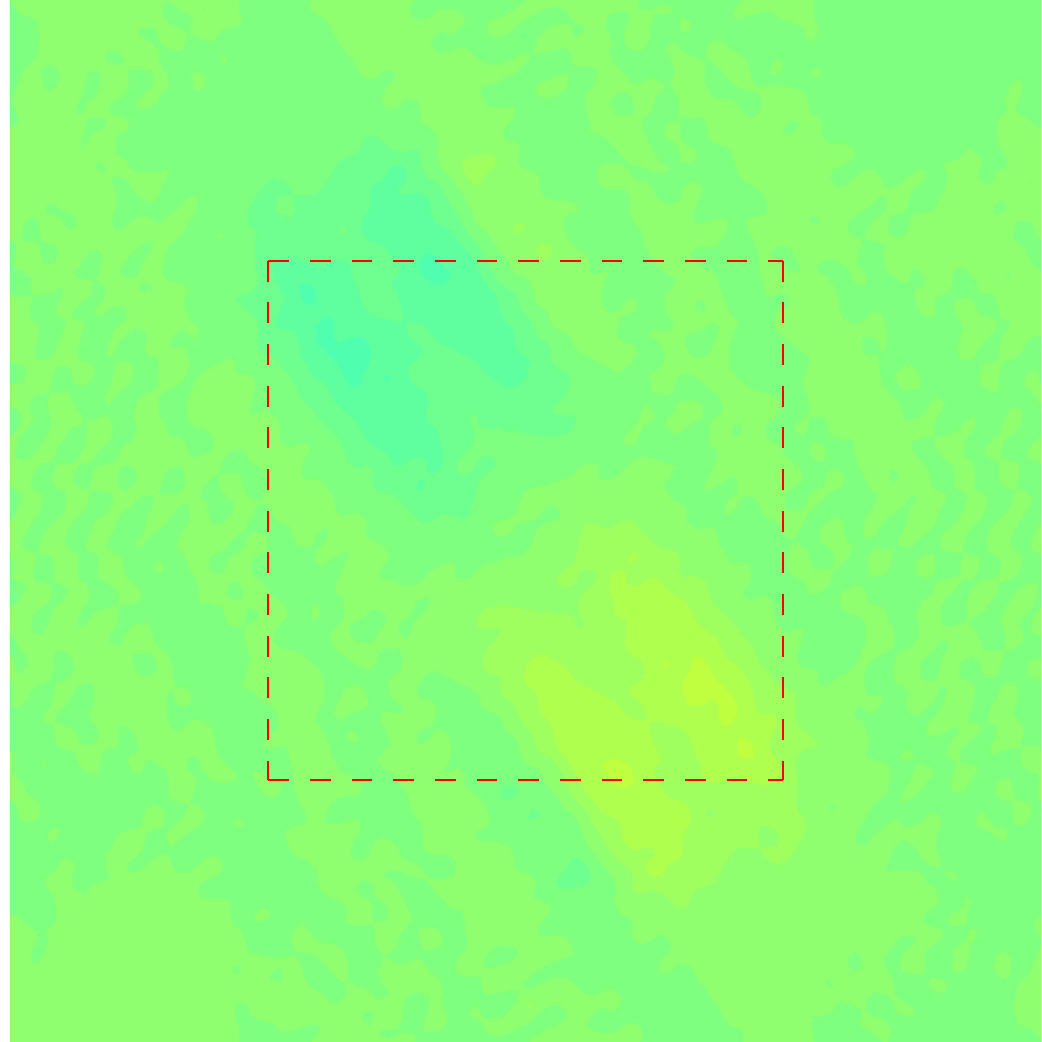}}
\end{minipage}
\hspace{-.60em}
\begin{minipage}{.0528\textwidth}
\centering
{\tiny input 8} \vspace{.12em}

\cfboxR{0.8pt}{black}{\includegraphics[width=.93\textwidth]{./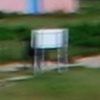}} \vspace{-1.05em}

\cfboxR{0.8pt}{black}{\includegraphics[width=.93\textwidth]{./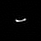}} \vspace{-1.05em}

\cfboxR{0.8pt}{black}{\includegraphics[width=.93\textwidth]{./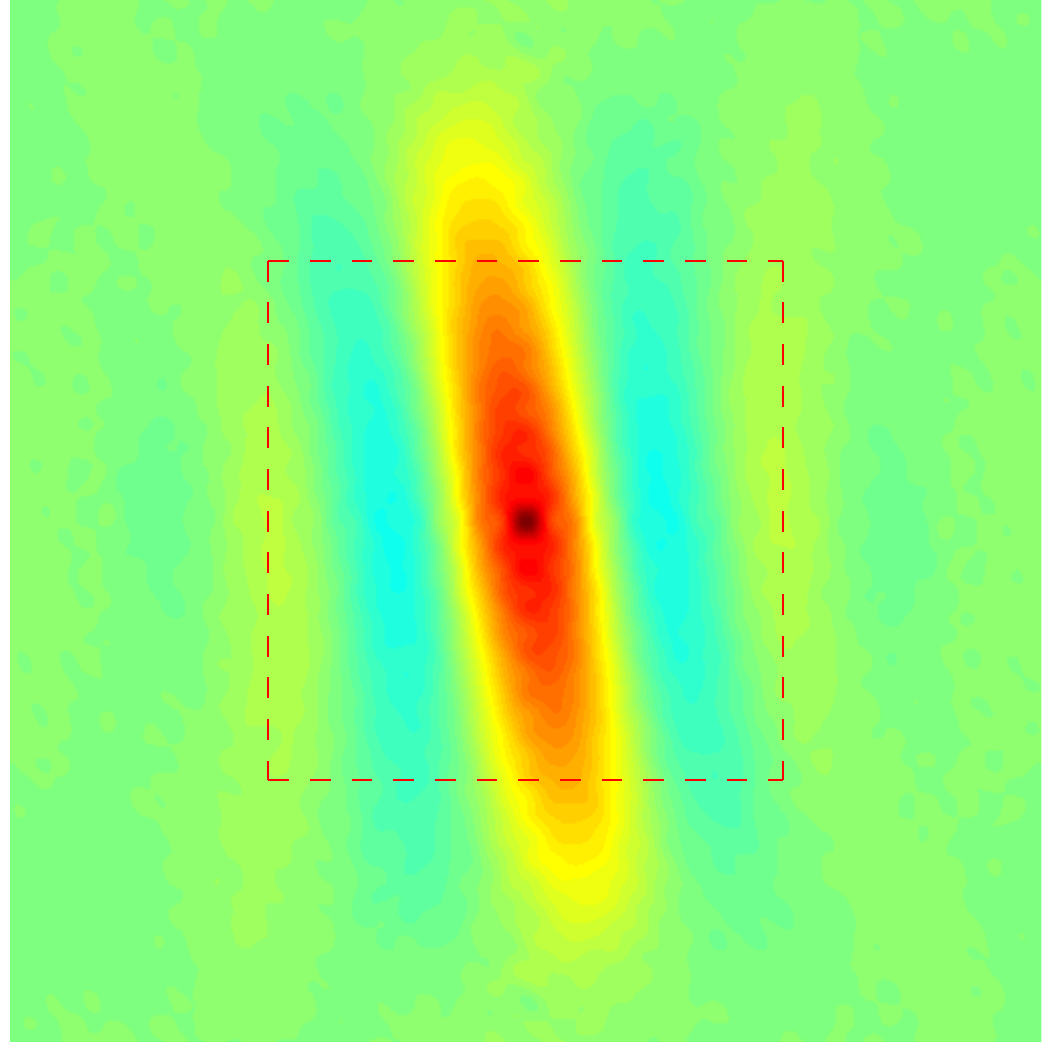}} \vspace{-1.05em}

\cfboxR{0.8pt}{black}{\includegraphics[width=.93\textwidth]{./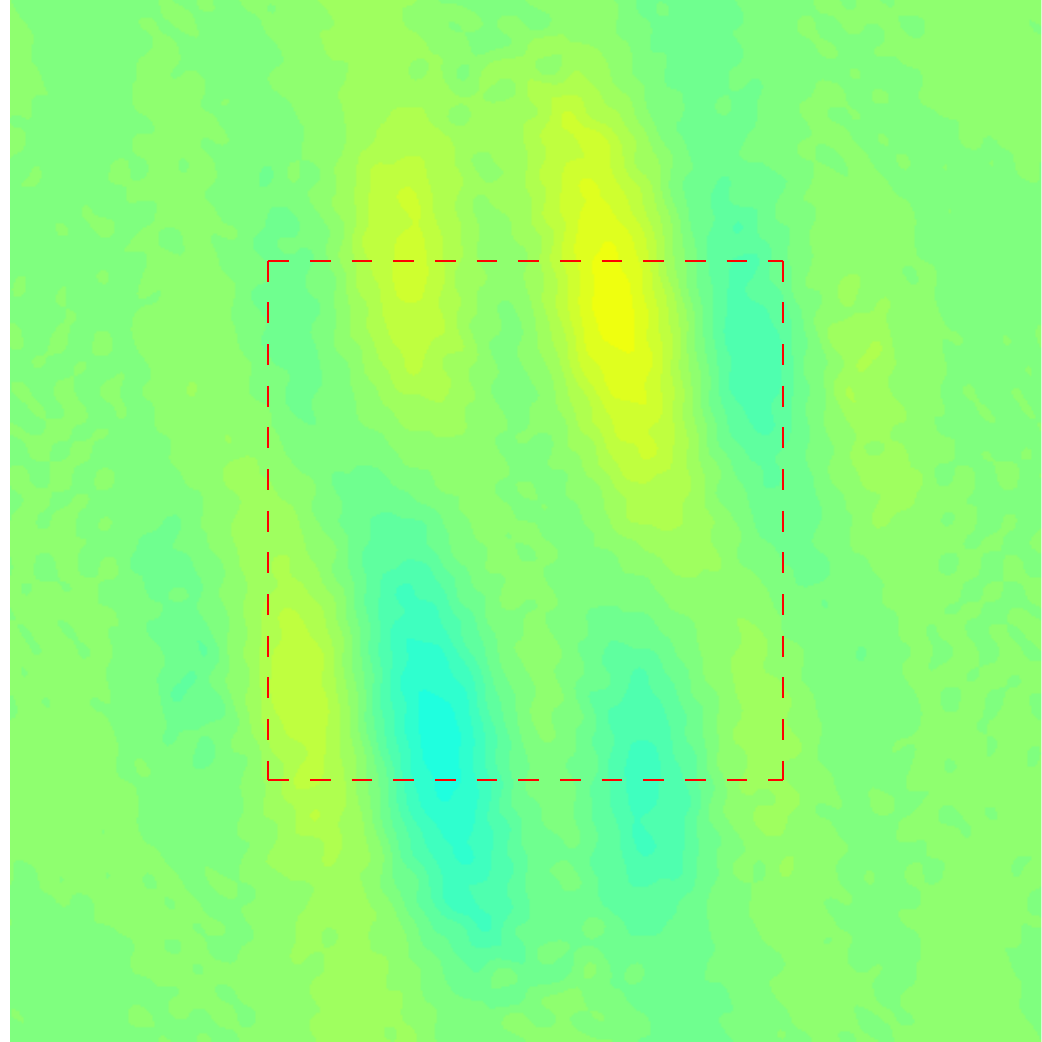}}
\end{minipage}
\hspace{-.60em}
\begin{minipage}{.0528\textwidth}
\centering
{\tiny input 9} \vspace{.12em}

\cfboxR{0.8pt}{black}{\includegraphics[width=.93\textwidth]{./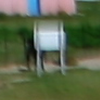}} \vspace{-1.05em}

\cfboxR{0.8pt}{black}{\includegraphics[width=.93\textwidth]{./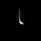}} \vspace{-1.05em}

\cfboxR{0.8pt}{black}{\includegraphics[width=.93\textwidth]{./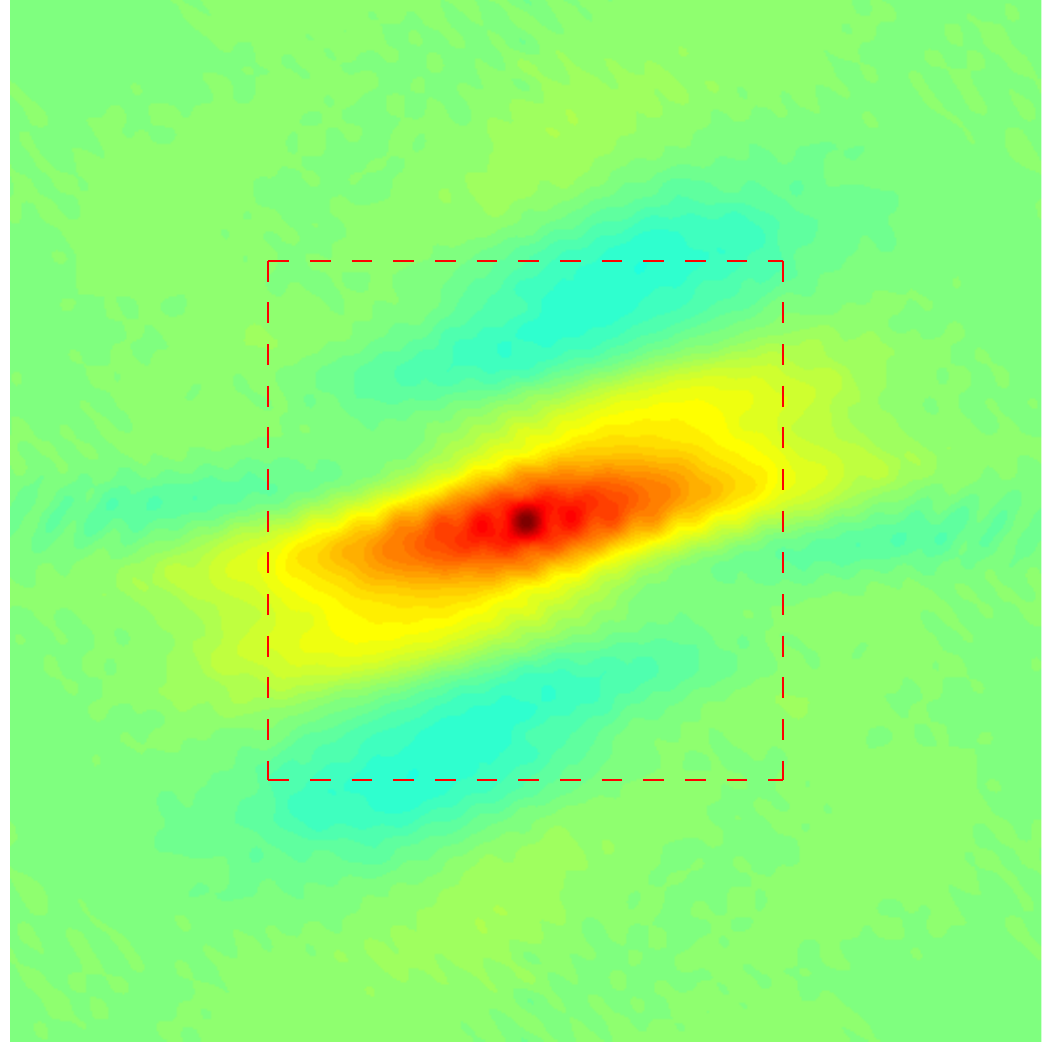}} \vspace{-1.05em}

\cfboxR{0.8pt}{black}{\includegraphics[width=.93\textwidth]{./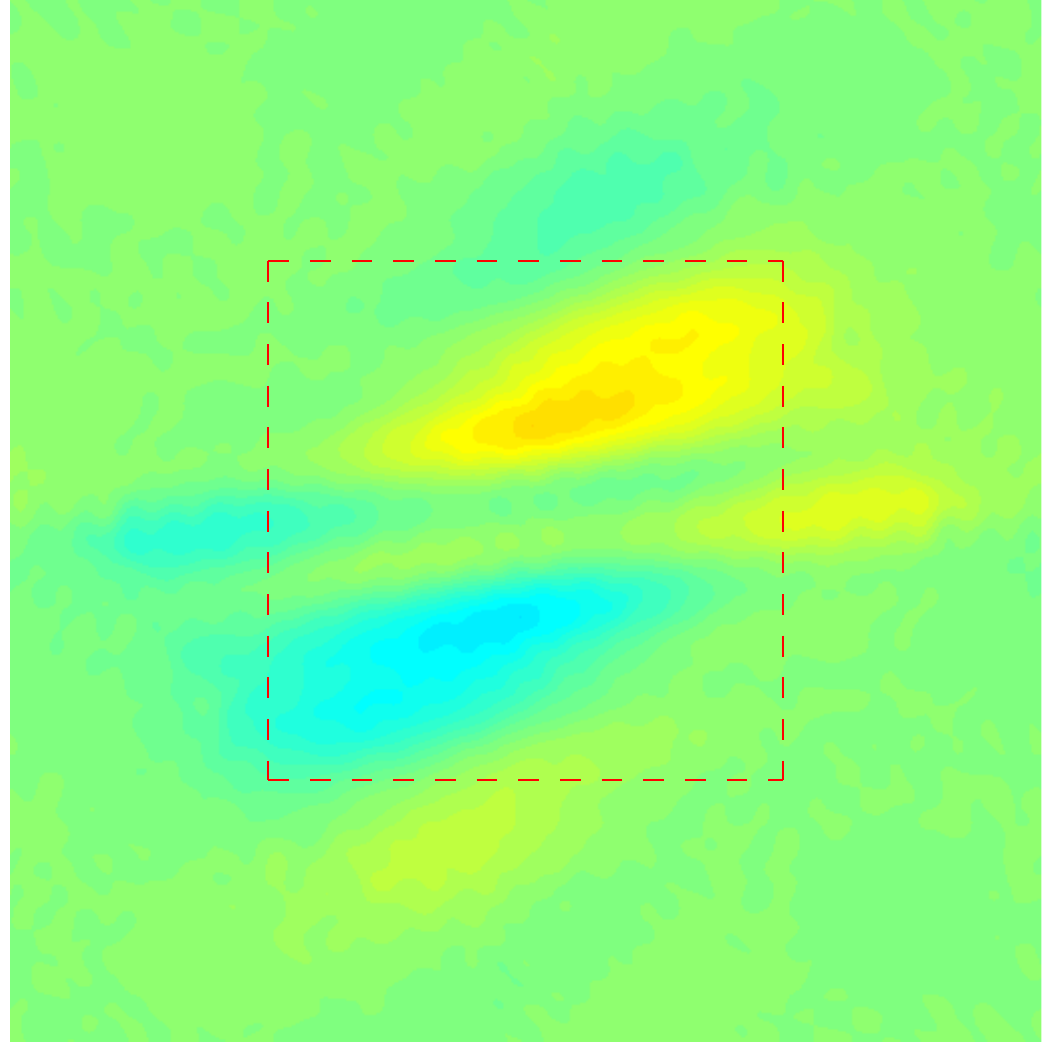}}
\end{minipage}
\hspace{-.60em}
\begin{minipage}{.0528\textwidth}
\centering
{\tiny input 10} \vspace{.12em}

\cfboxR{0.8pt}{black}{\includegraphics[width=.93\textwidth]{./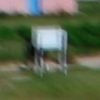}} \vspace{-1.05em}

\cfboxR{0.8pt}{black}{\includegraphics[width=.93\textwidth]{./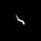}} \vspace{-1.05em}

\cfboxR{0.8pt}{black}{\includegraphics[width=.93\textwidth]{./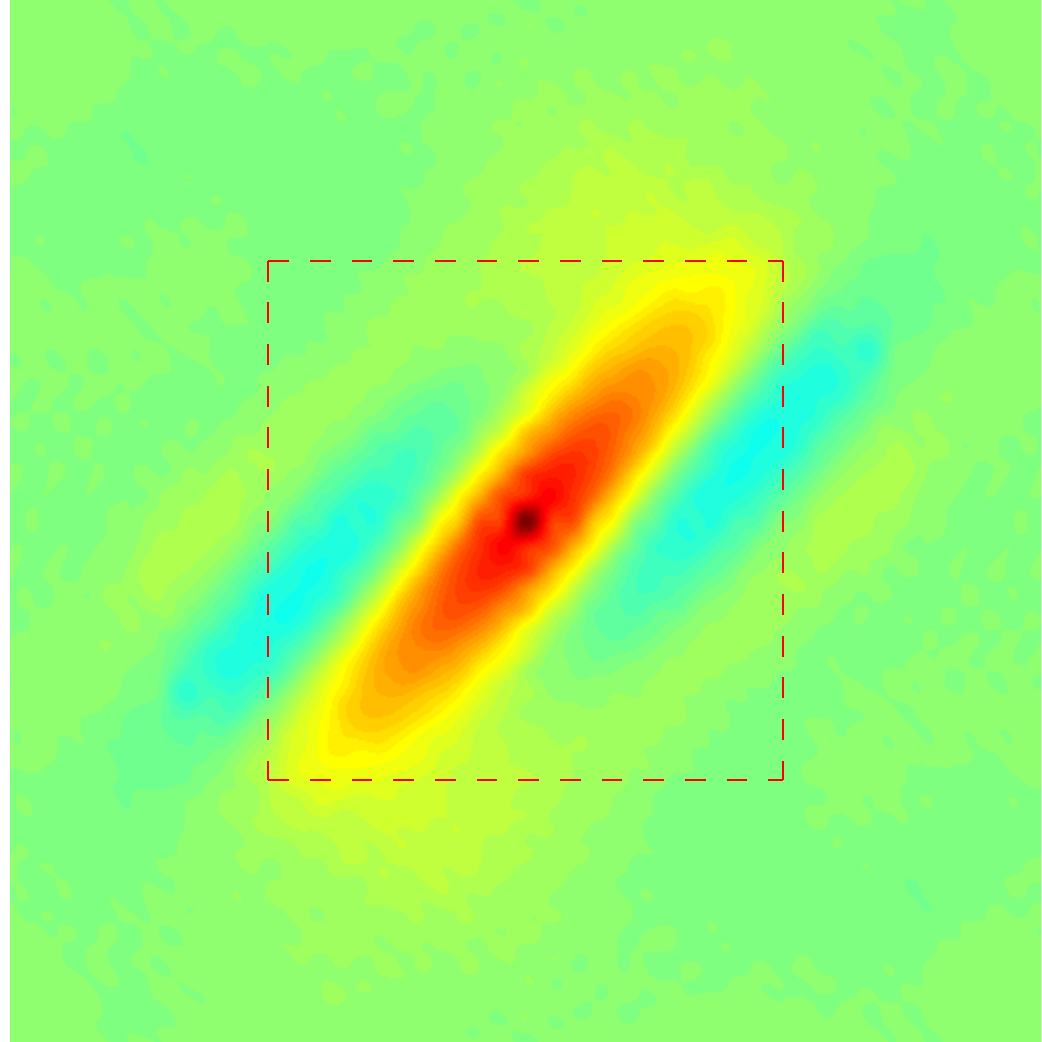}} \vspace{-1.05em}

\cfboxR{0.8pt}{black}{\includegraphics[width=.93\textwidth]{./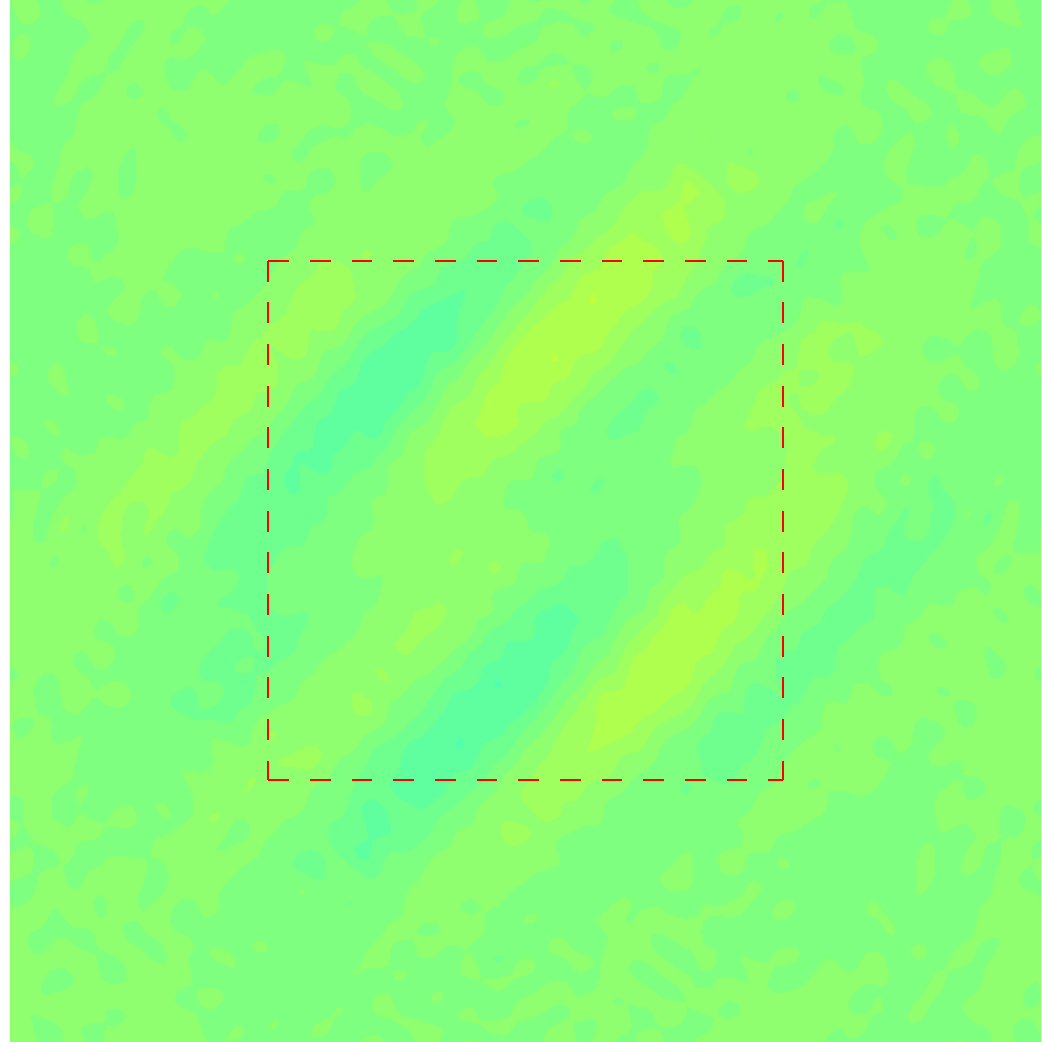}}
\end{minipage}
\hspace{-.60em}
\begin{minipage}{.0528\textwidth}
\centering
{\tiny input 11} \vspace{.12em}

\cfboxR{0.8pt}{black}{\includegraphics[width=.93\textwidth]{./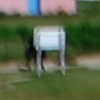}} \vspace{-1.05em}

\cfboxR{0.8pt}{black}{\includegraphics[width=.93\textwidth]{./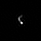}} \vspace{-1.05em}

\cfboxR{0.8pt}{black}{\includegraphics[width=.93\textwidth]{./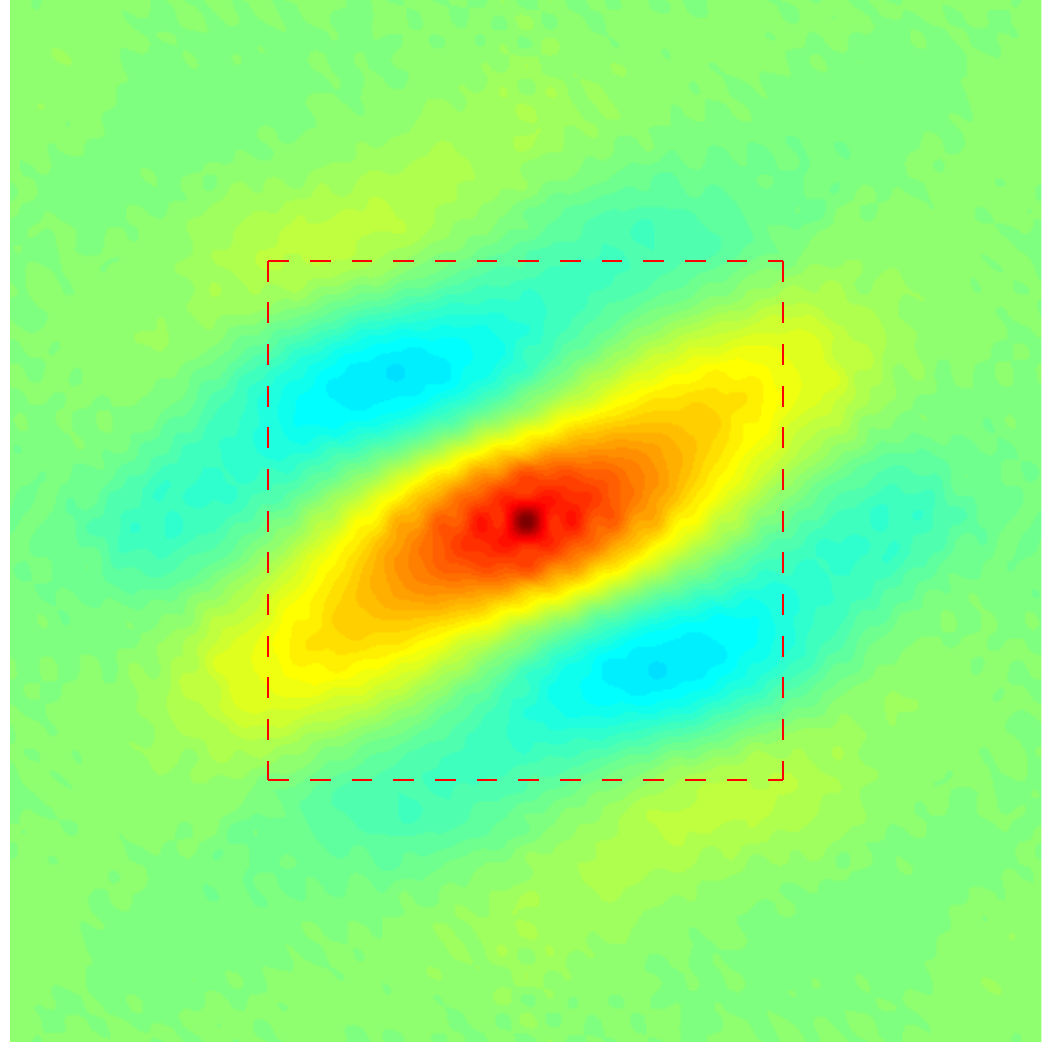}} \vspace{-1.05em}

\cfboxR{0.8pt}{black}{\includegraphics[width=.93\textwidth]{./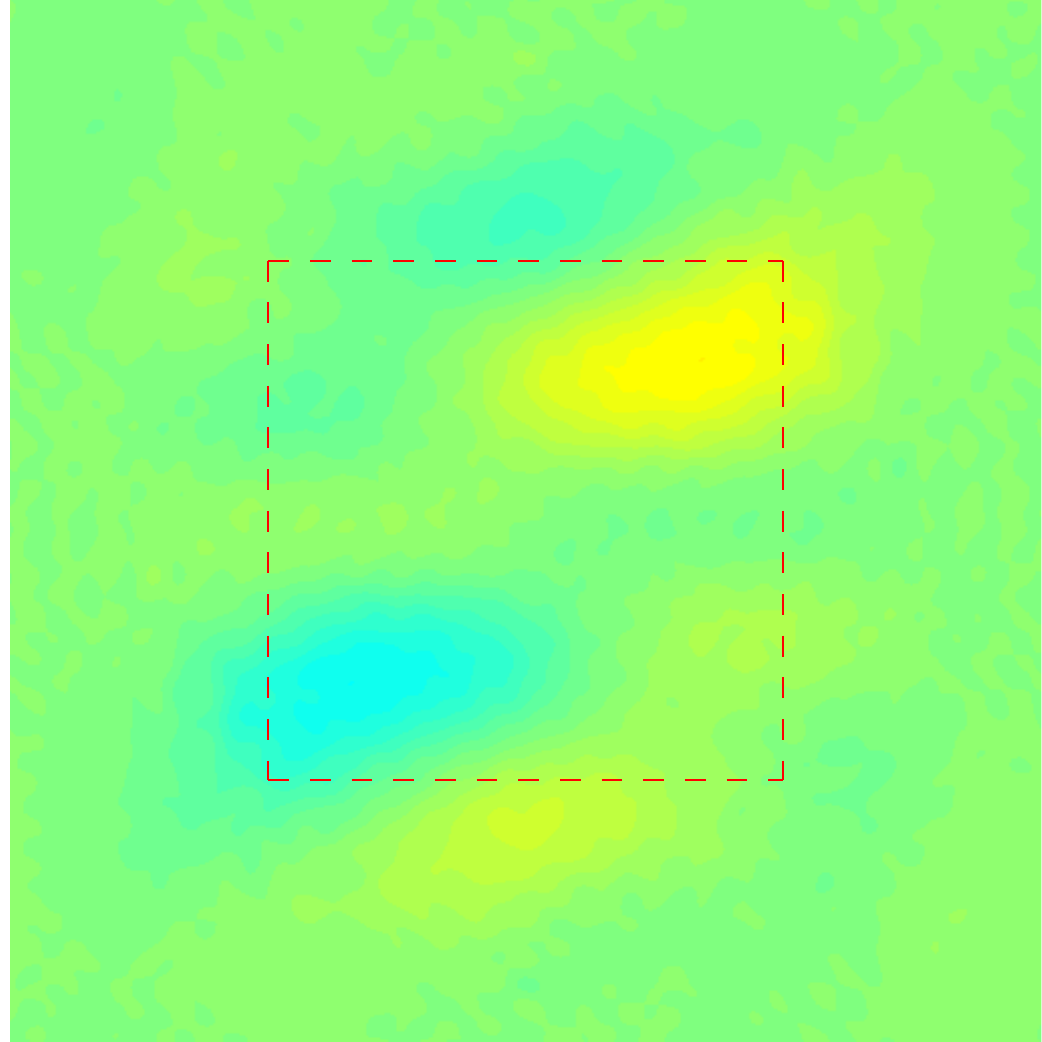}}
\end{minipage}
\hspace{-.60em}
\begin{minipage}{.0528\textwidth}
\centering
{\tiny input 12} \vspace{.12em}

\cfboxR{0.8pt}{black}{\includegraphics[width=.93\textwidth]{./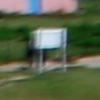}} \vspace{-1.05em}

\cfboxR{0.8pt}{black}{\includegraphics[width=.93\textwidth]{./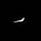}} \vspace{-1.05em}

\cfboxR{0.8pt}{black}{\includegraphics[width=.93\textwidth]{./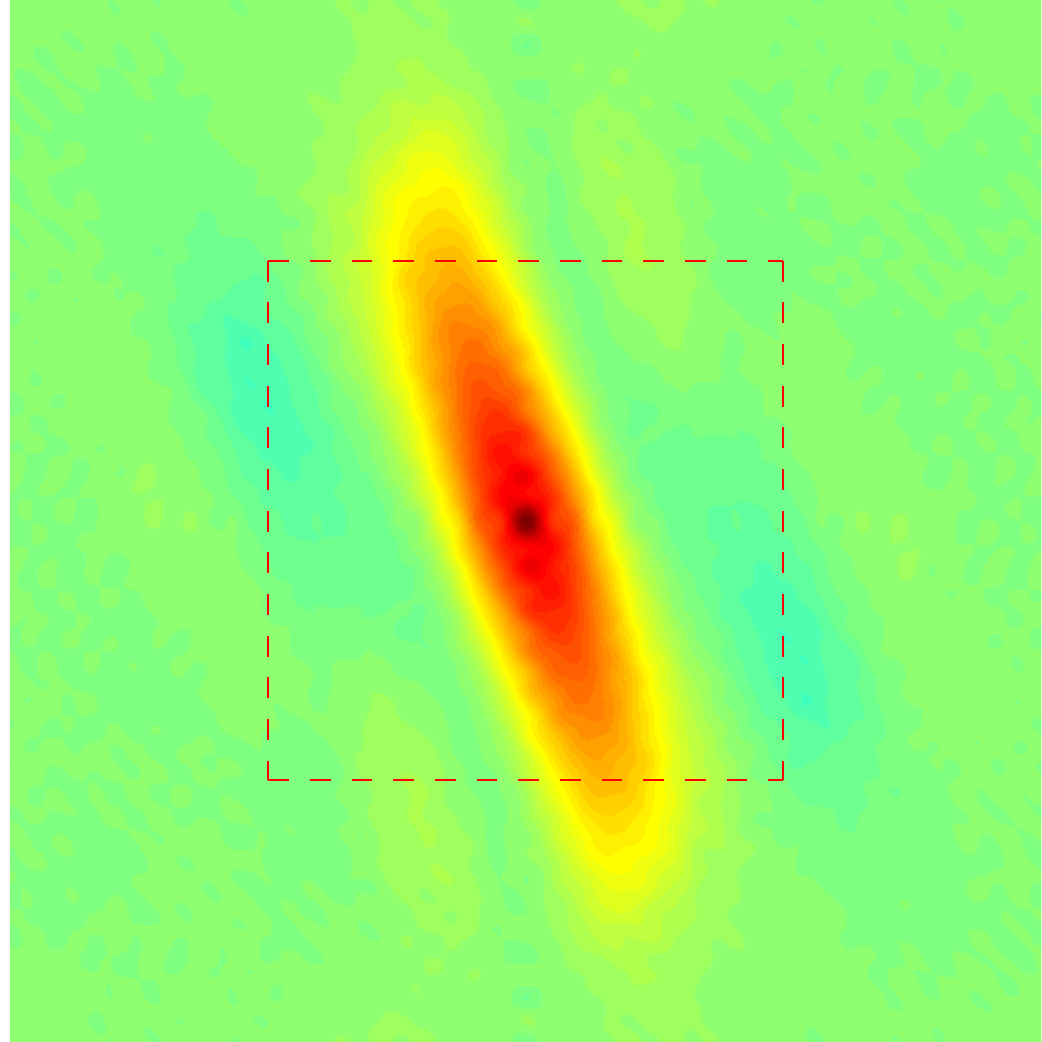}} \vspace{-1.05em}

\cfboxR{0.8pt}{black}{\includegraphics[width=.93\textwidth]{./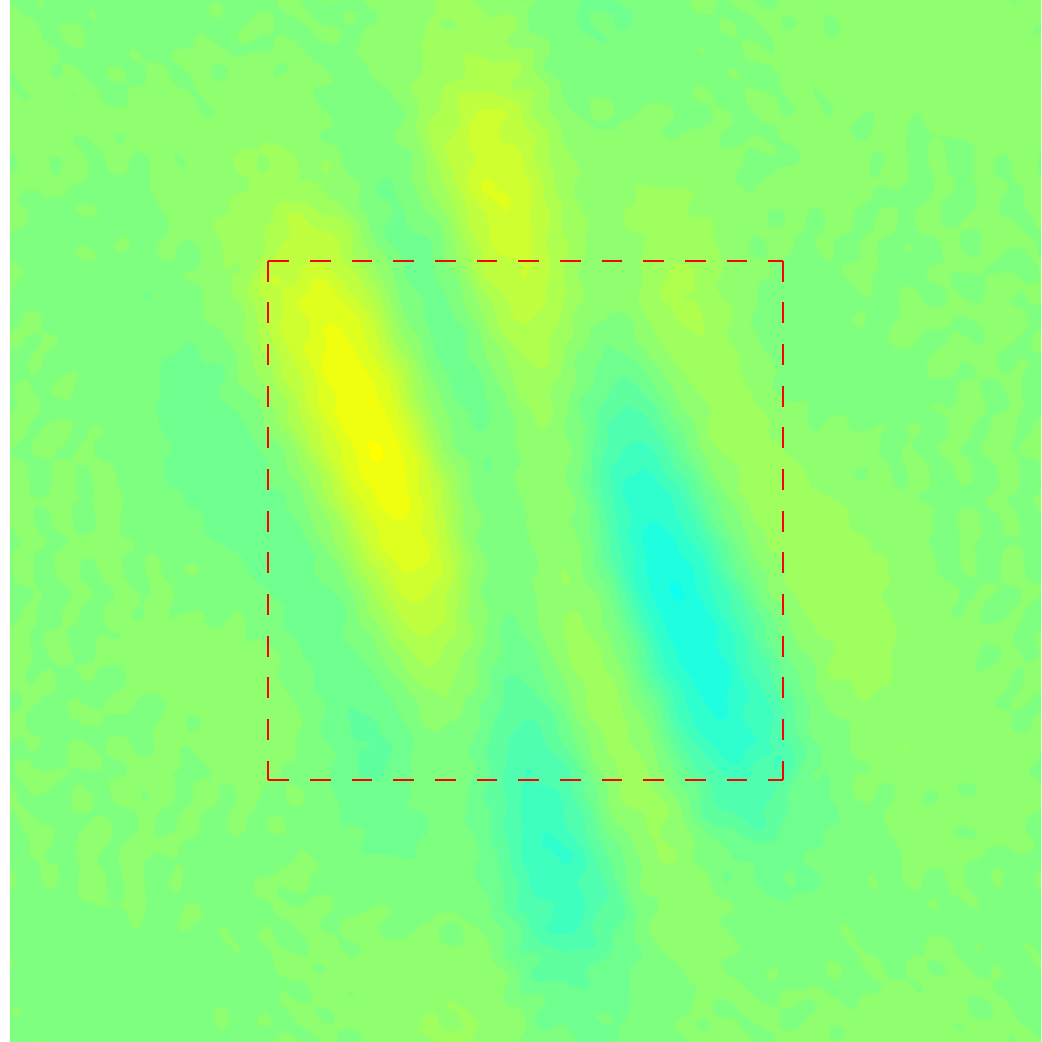}}
\end{minipage}
\hspace{-.60em}
\begin{minipage}{.0528\textwidth}
\centering
{\tiny input 13} \vspace{.12em}

\cfboxR{0.8pt}{black}{\includegraphics[width=.93\textwidth]{./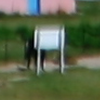}} \vspace{-1.05em}

\cfboxR{0.8pt}{black}{\includegraphics[width=.93\textwidth]{./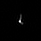}} \vspace{-1.05em}

\cfboxR{0.8pt}{black}{\includegraphics[width=.93\textwidth]{./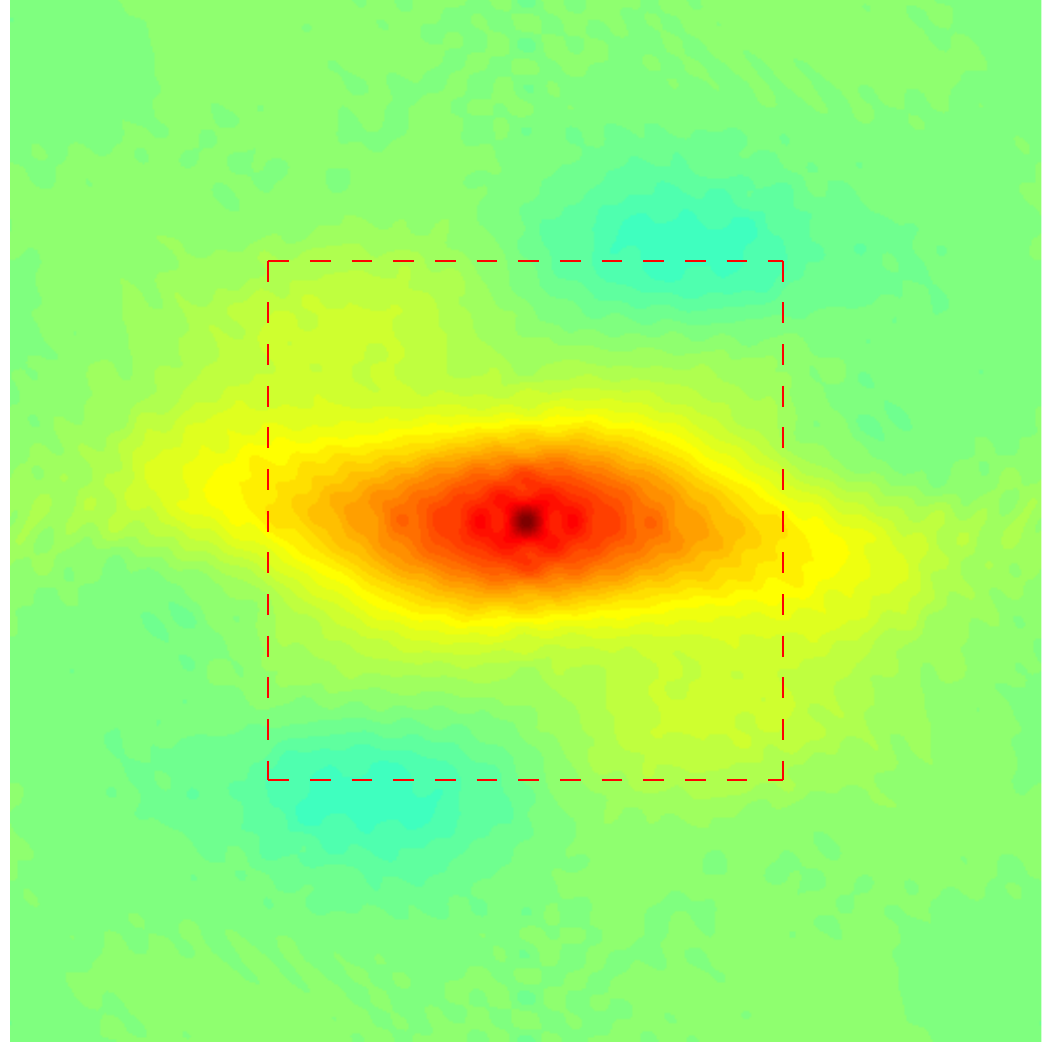}} \vspace{-1.05em}

\cfboxR{0.8pt}{black}{\includegraphics[width=.93\textwidth]{./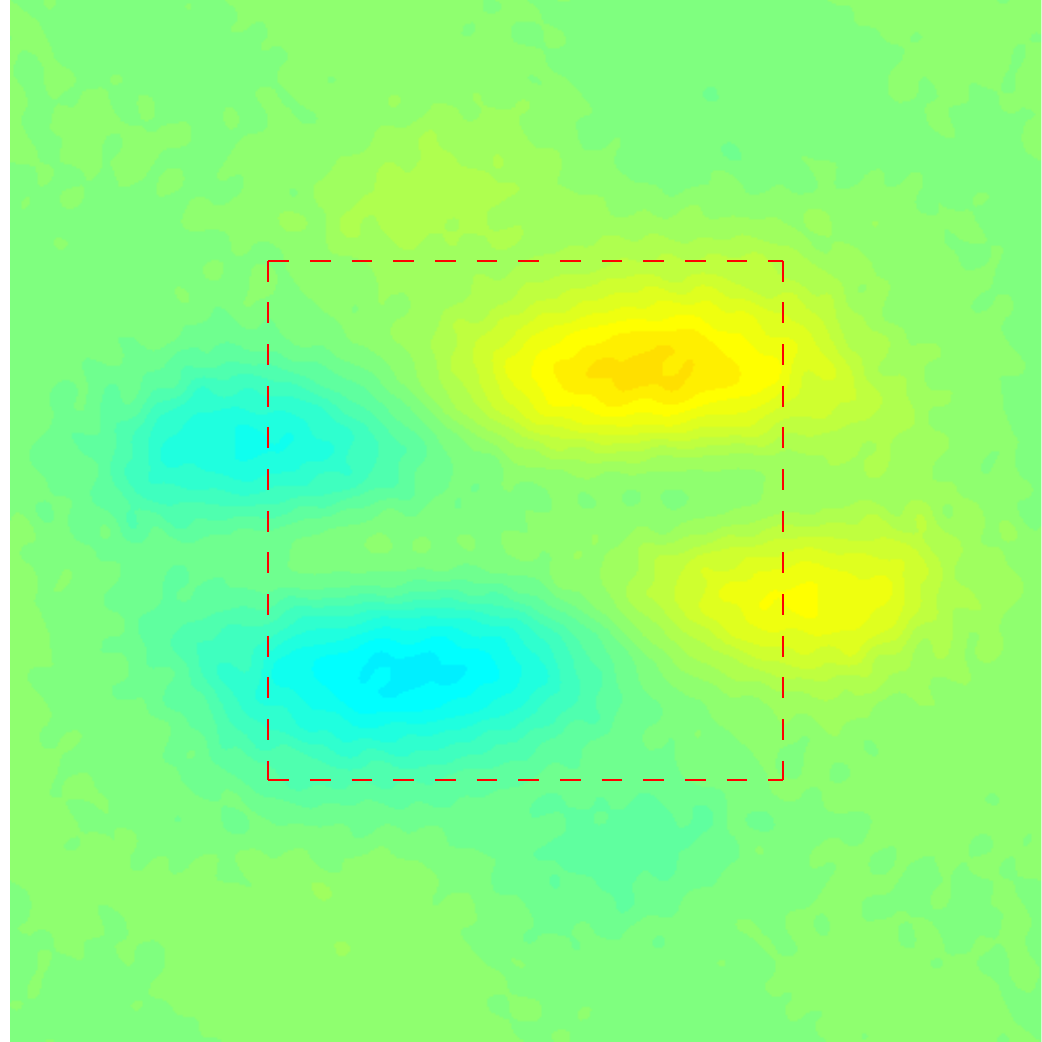}}
\end{minipage}
\hspace{-.60em}
\begin{minipage}{.0528\textwidth}
\centering
{\tiny input 14} \vspace{.12em}

\cfboxR{0.8pt}{black}{\includegraphics[width=.93\textwidth]{./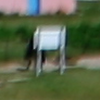}} \vspace{-1.05em}

\cfboxR{0.8pt}{black}{\includegraphics[width=.93\textwidth]{./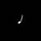}} \vspace{-1.05em}

\cfboxR{0.8pt}{black}{\includegraphics[width=.93\textwidth]{./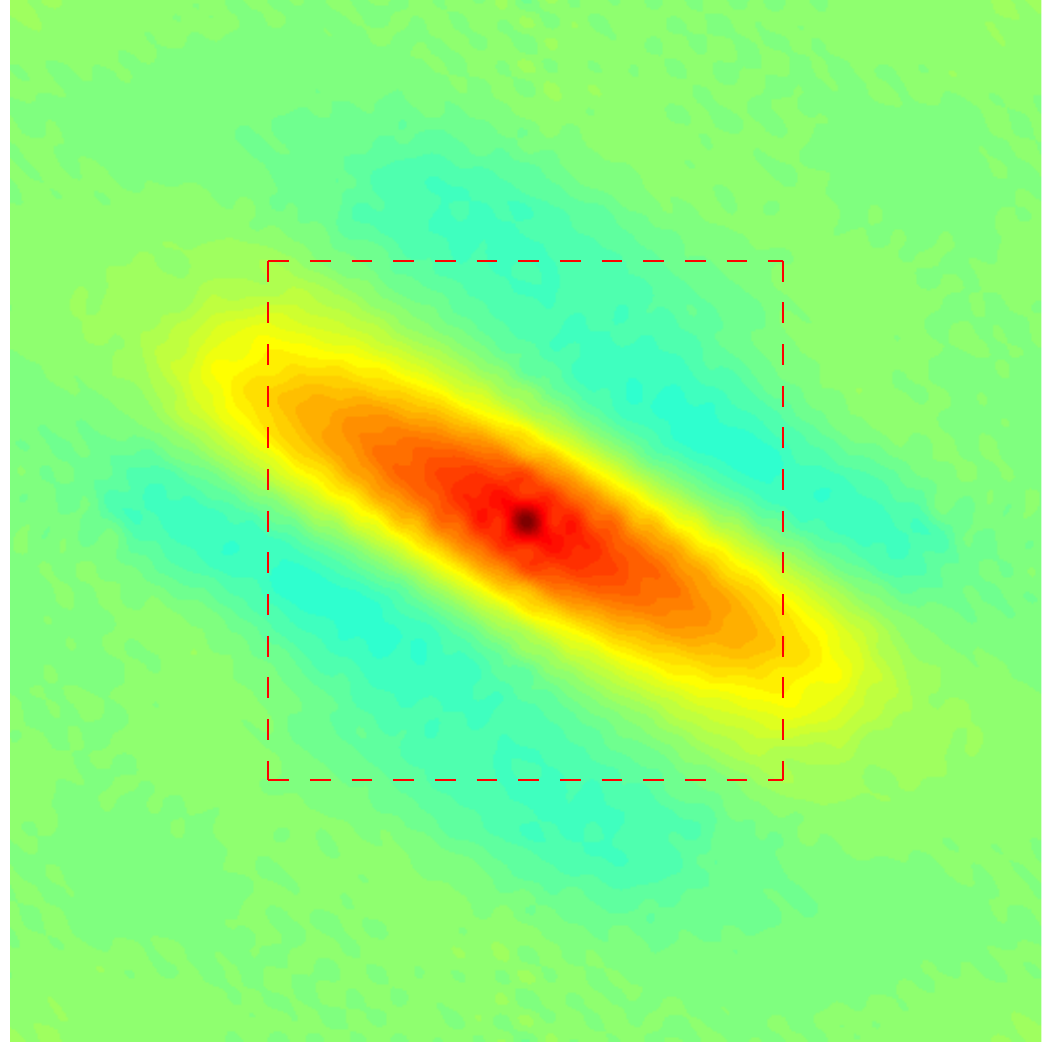}} \vspace{-1.05em}

\cfboxR{0.8pt}{black}{\includegraphics[width=.93\textwidth]{./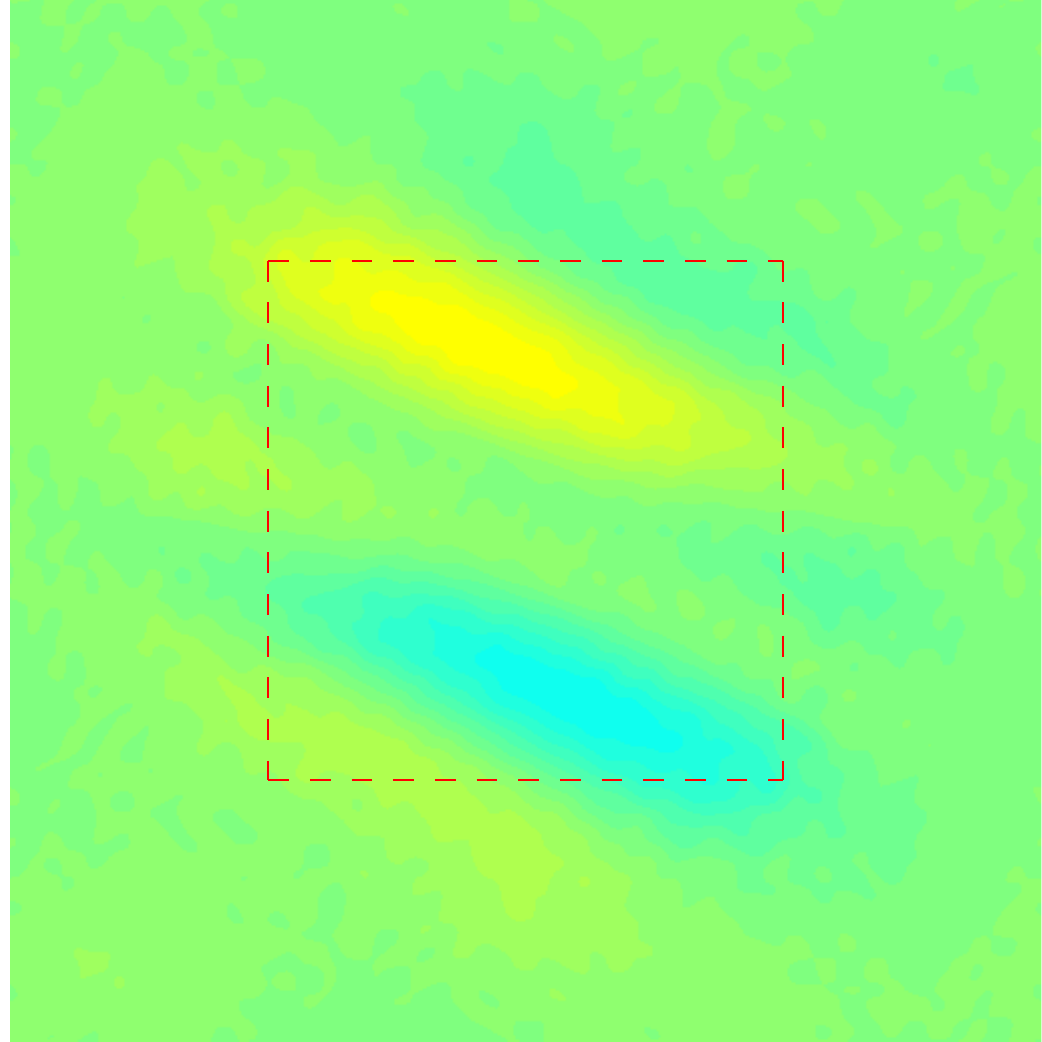}}
\end{minipage}
\hspace{-.60em}
\begin{minipage}{.0528\textwidth}
\centering
{\tiny $p=0$} \vspace{.12em}

\cfboxR{0.8pt}{blue1}{\includegraphics[width=.93\textwidth]{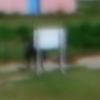}} \vspace{-1.05em}

\cfboxR{0.8pt}{blue1}{\includegraphics[width=.93\textwidth]{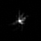}} \vspace{-1.05em}

\cfboxR{0.8pt}{blue1}{\includegraphics[width=.93\textwidth]{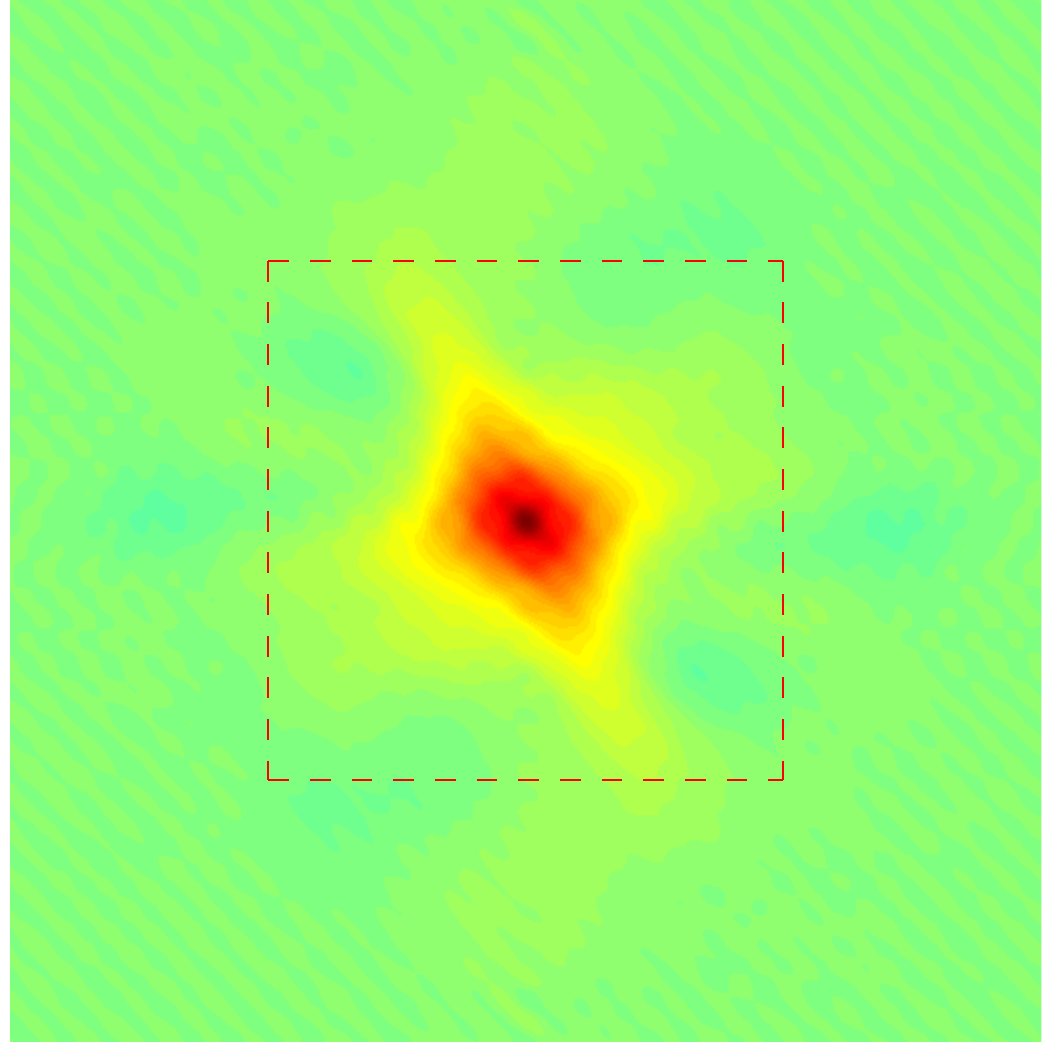}} \vspace{-1.05em}

\cfboxR{0.8pt}{blue1}{\includegraphics[width=.93\textwidth]{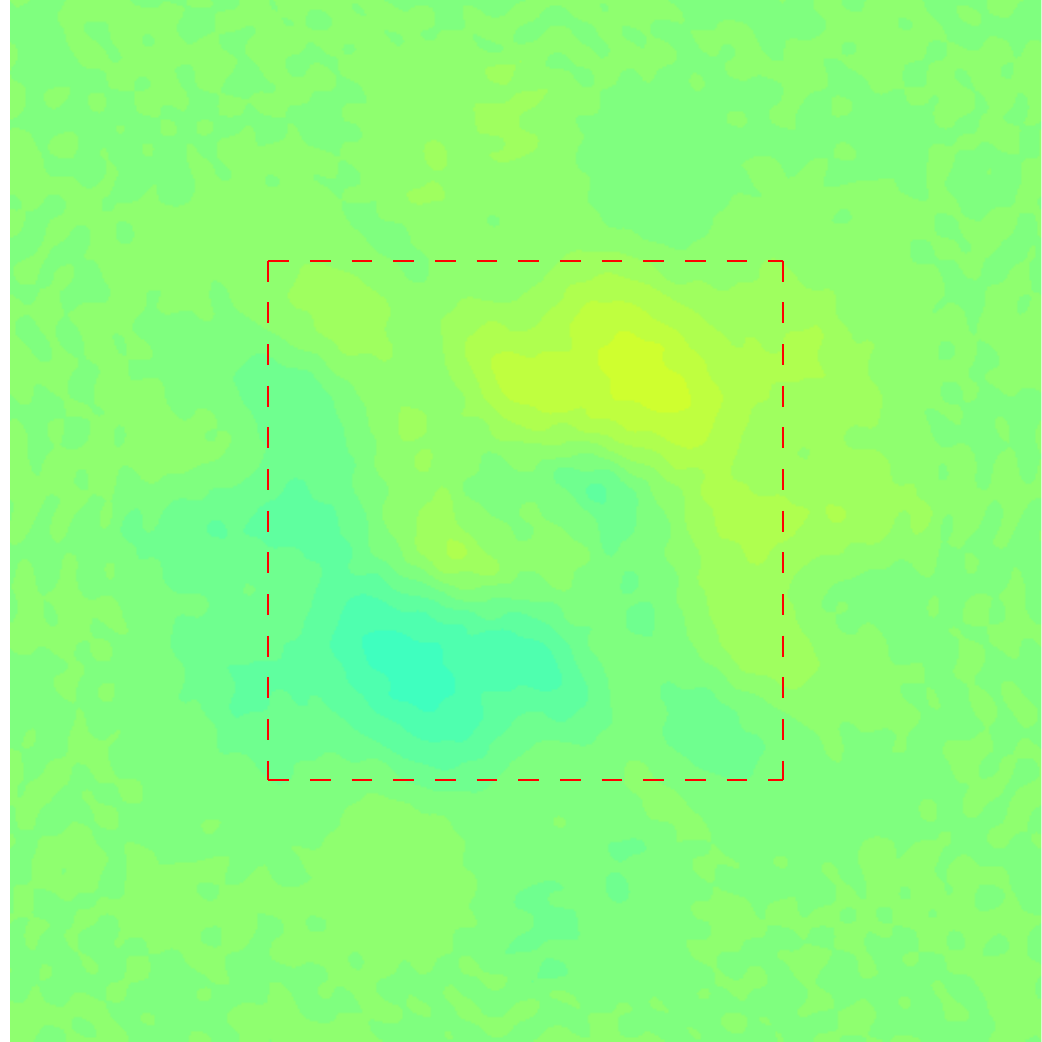}}

\end{minipage}
\hspace{-.60em}
\begin{minipage}{.0528\textwidth}
\centering
{\tiny $p=3$} \vspace{.12em}

\cfboxR{0.8pt}{blue1}{\includegraphics[width=.93\textwidth]{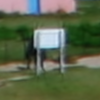}} \vspace{-1.05em}

\cfboxR{0.8pt}{blue1}{\includegraphics[width=.93\textwidth]{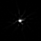}} \vspace{-1.05em}

\cfboxR{0.8pt}{blue1}{\includegraphics[width=.93\textwidth]{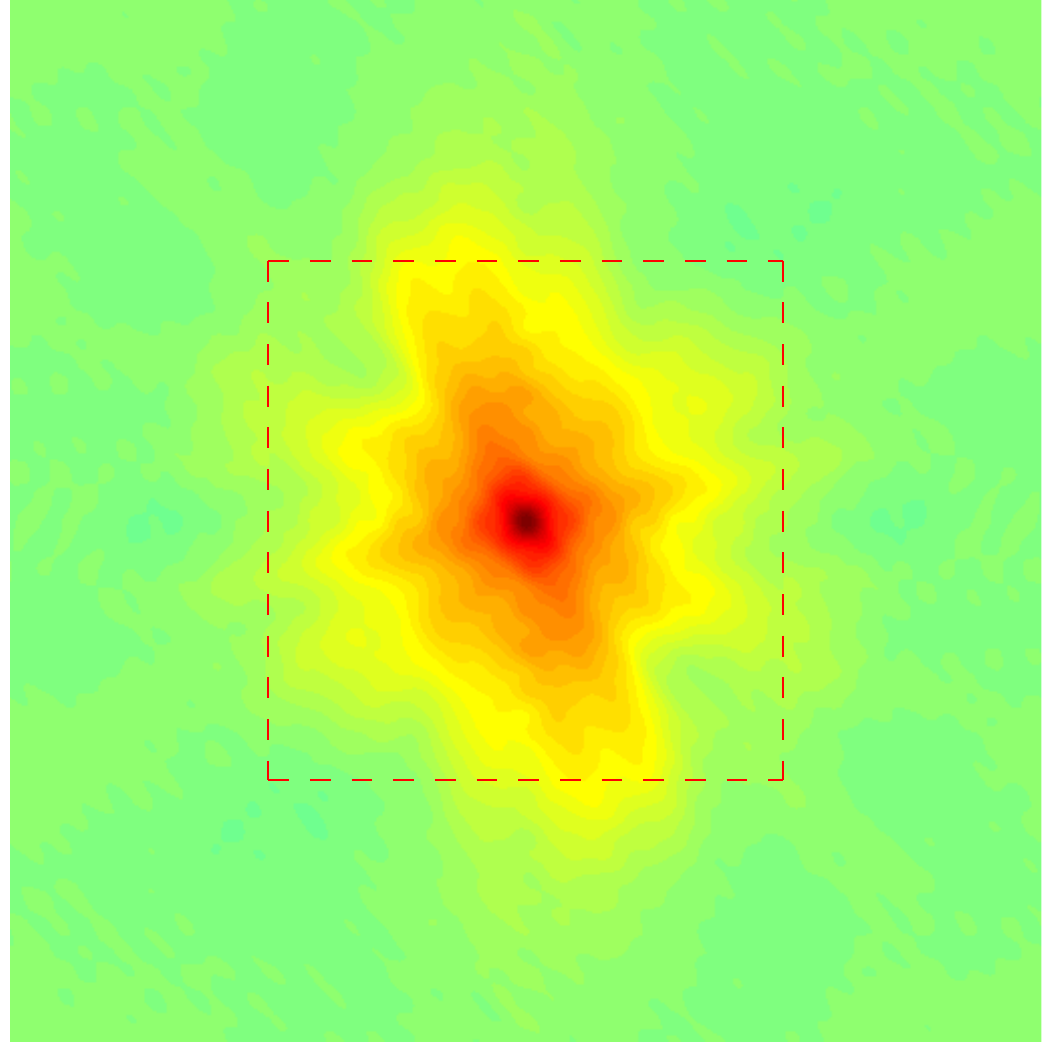}} \vspace{-1.05em}

\cfboxR{0.8pt}{blue1}{\includegraphics[width=.93\textwidth]{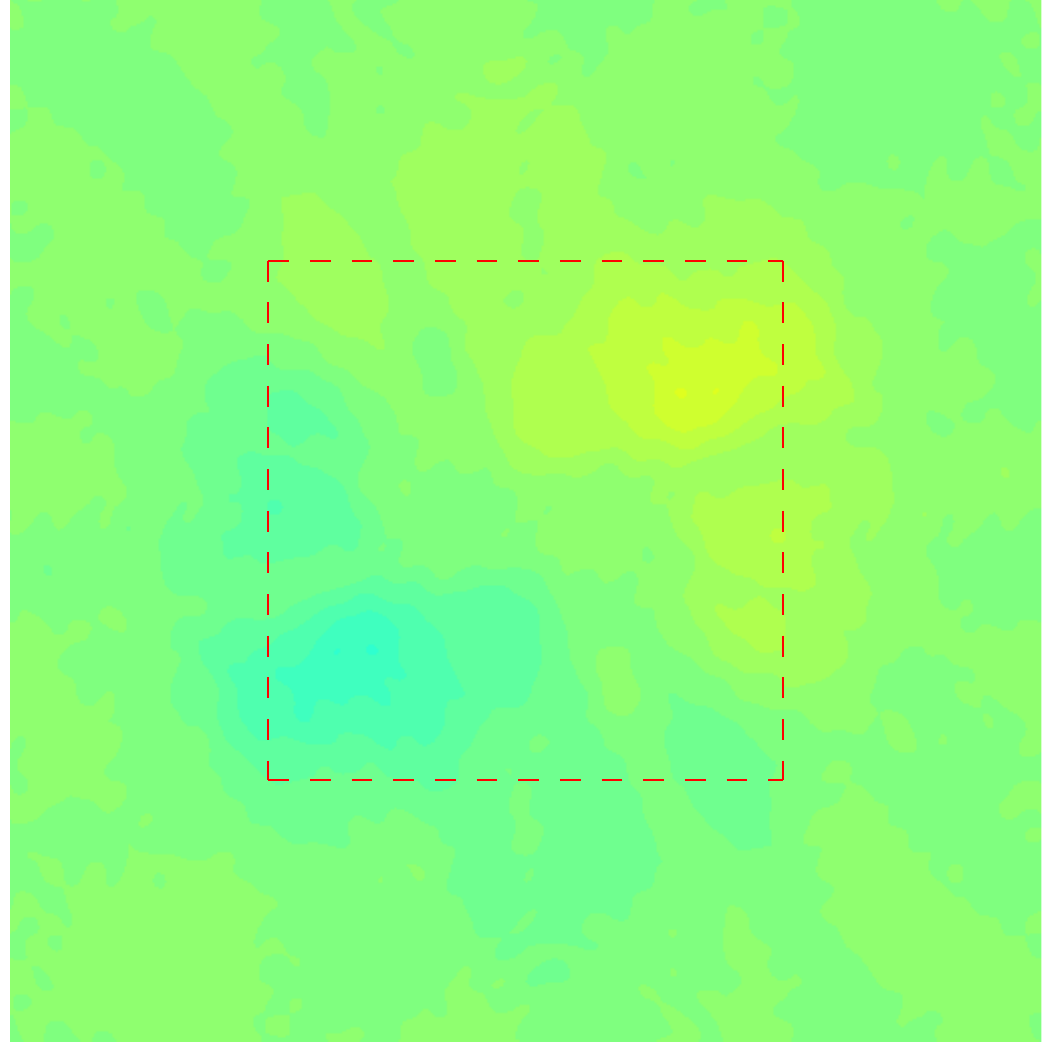}}
\end{minipage}
\hspace{-.60em}
\begin{minipage}{.0528\textwidth}
\centering
{\tiny $p=11$} \vspace{.12em}

\cfboxR{0.8pt}{blue1}{\includegraphics[width=.93\textwidth]{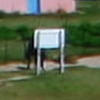}} \vspace{-1.05em}

\cfboxR{0.8pt}{blue1}{\includegraphics[width=.93\textwidth]{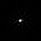}} \vspace{-1.05em}

\cfboxR{0.8pt}{blue1}{\includegraphics[width=.93\textwidth]{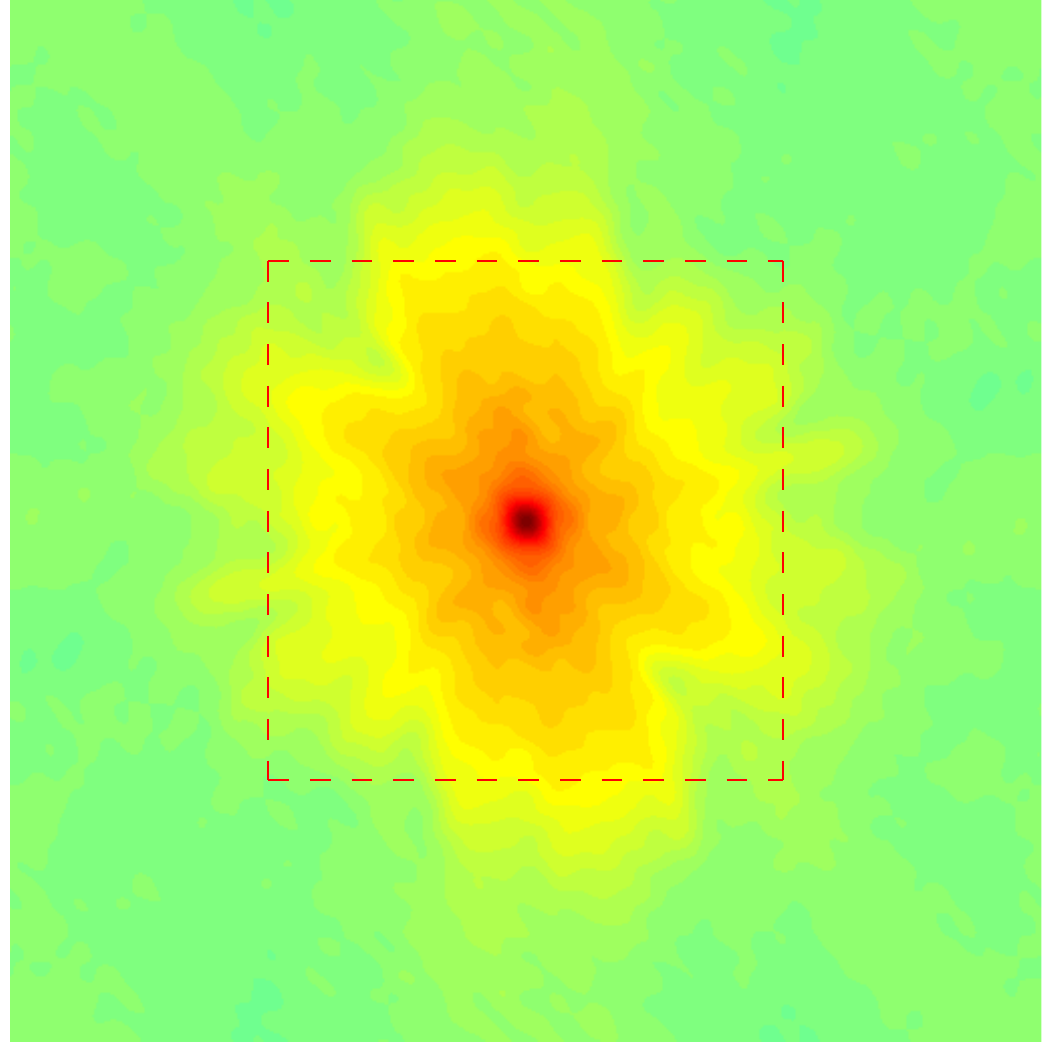}} \vspace{-1.05em}

\cfboxR{0.8pt}{blue1}{\includegraphics[width=.93\textwidth]{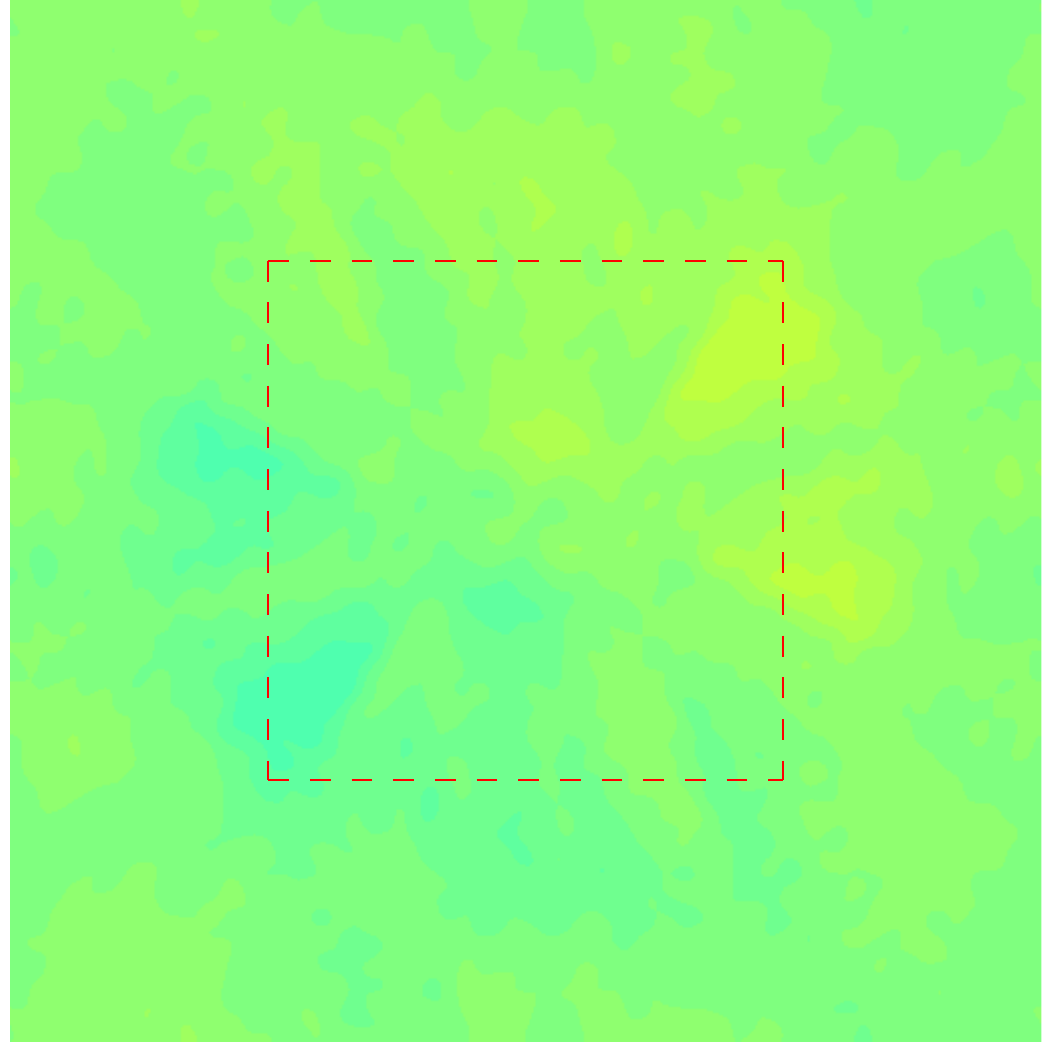}}
\end{minipage}
\hspace{-.60em}
\begin{minipage}{.0528\textwidth}
\centering
{\tiny $p=25$} \vspace{.12em}

\cfboxR{0.8pt}{blue1}{\includegraphics[width=.93\textwidth]{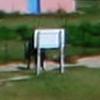}} \vspace{-1.05em}

\cfboxR{0.8pt}{blue1}{\includegraphics[width=.93\textwidth]{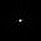}} \vspace{-1.05em}

\cfboxR{0.8pt}{blue1}{\includegraphics[width=.93\textwidth]{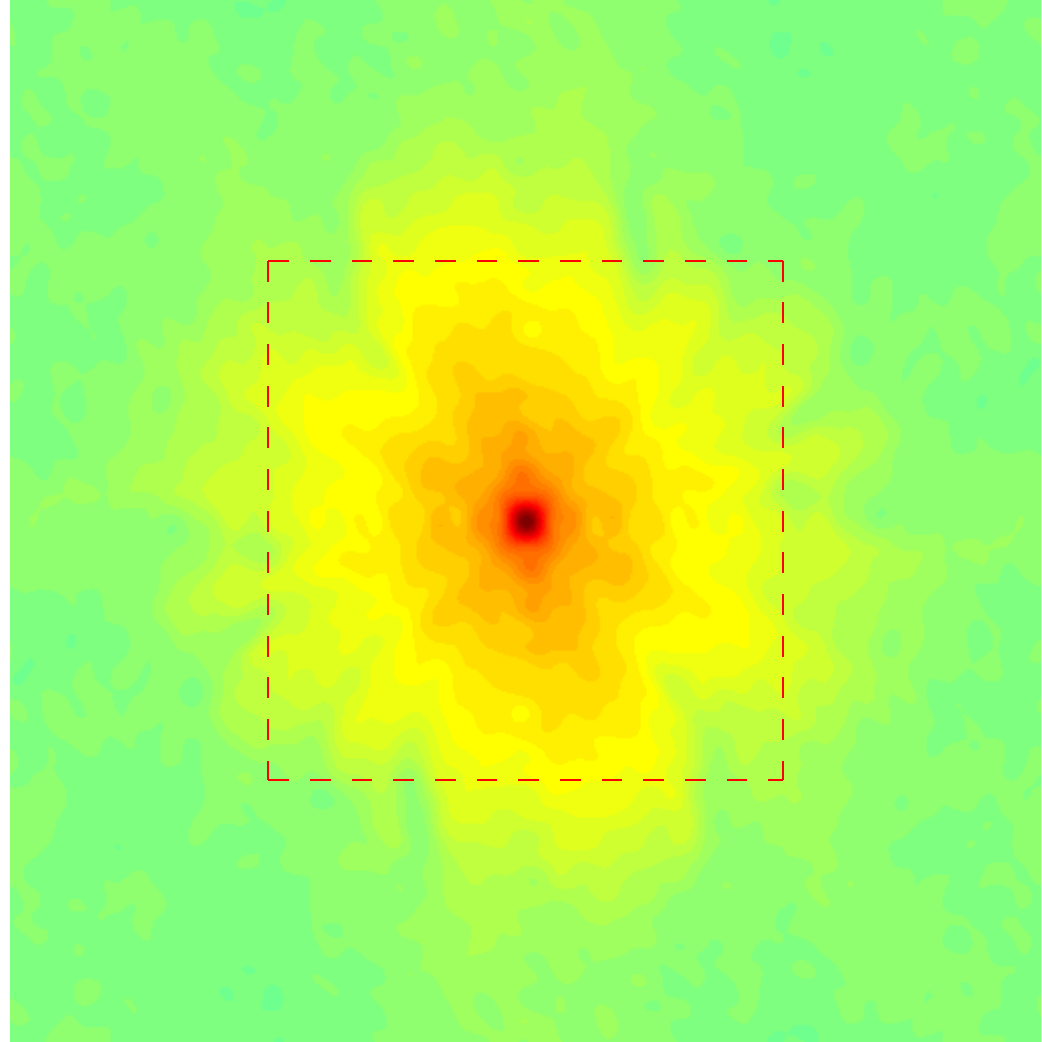}} \vspace{-1.05em}

\cfboxR{0.8pt}{blue1}{\includegraphics[width=.93\textwidth]{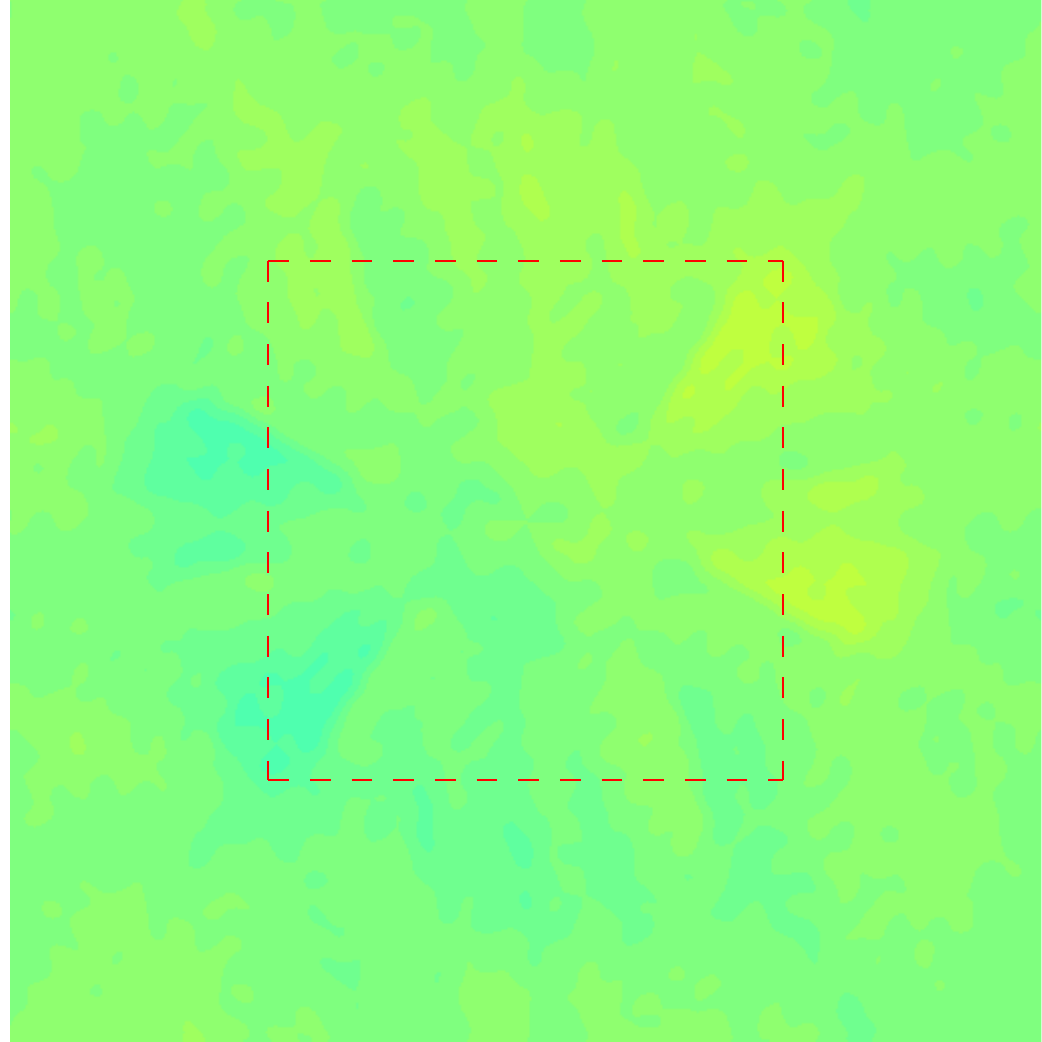}}
\end{minipage}
\hspace{-.5em}
\begin{minipage}{.011\textwidth}
\vspace{6.1em}

\includegraphics[width=1.45\textwidth]{./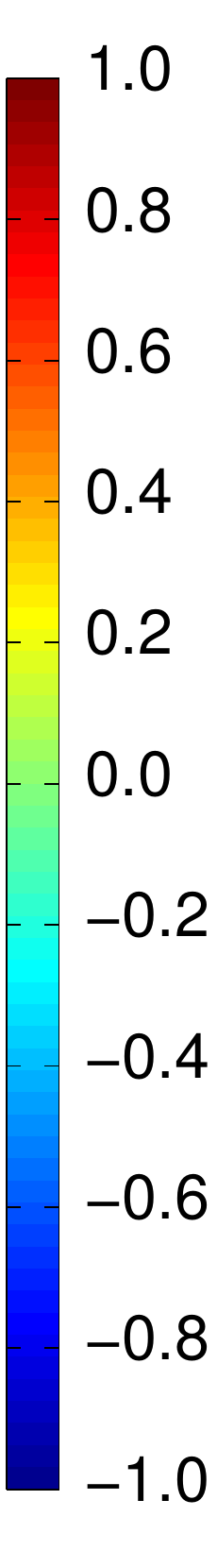}
\end{minipage}

\caption{Real camera shake kernels were computed  using a
sharp snapshot captured with a tripod as a reference. The first row shows a crop
of each input image (frames 1 to 14) and the proposed Fourier Burst Accumulation (from~\eqref{eq:fourierWeightsOrig}, no additional sharpening).
As noted before, the kernels are generally regular unidimensional trajectories (second row). The four last columns
in the second row show the resultant point spread functions (PSF) after the Fourier weighted average for different values of $p$. 
The kernel due to the Fourier average with $p>0$ is closer to a Delta function, showing the success of the method.
The two bottom rows show the Fourier real and imaginary parts of each blurring kernel (the red box indicates the $\pi/2$ frequency).
The real part is mostly positive and significantly larger than the imaginary part, implying that the blurring
kernels do not introduce significant phase distortions. This might not be the case for large motion kernels, uncommon in standard hand shakes.}
\label{fig:equivKernel}
\end{figure*}

Camera shake originated from hand tremor vibrations has undoubtedly a random 
nature~\cite{carignan2010quantifying,gavant2011physiological,xiao2006camera}. 
The independent movement of the photographer hand causes the camera to 
be pushed randomly and unpredictably, generating blurriness in the captured image.
Figure \ref{fig:realKernelsLaptop} shows several photographs taken with a digital single-lens reflex (\textsc{dslr}) handheld camera.
The photographed scene consists of a laptop displaying a black image with white dots. 
The captured picture of the white dots illustrates the trace of the camera movement in the image plane. 
If the dots are very small ---mimicking Dirac masses--- their photographs represent the blurring kernels themselves. 
As one can see, the kernels mostly consist of unidimensional regular random trajectories.  
This stochastic behavior will be the key ingredient in our proposed approach.

Let  $\mathcal F$ denote the Fourier Transform and $\hat{k}$ the Fourier Transform of the kernel $k$.  
Images are defined in a regular grid indexed by the $2D$ position $\bx$
and the Fourier domain is indexed by the $2D$ frequency $\zeta$.
Let us assume, without loss of generality, that the kernel $k$  due to camera shake is normalized such that
$\int k(\bx) d\bx = 1$. The blurring kernel is  nonnegative since the integration of incoherent light is always nonnegative. 
This implies that the motion blur does not amplify the Fourier spectrum:
\begin{claim}   
Let $k(\bx)\ge 0$ and $\int k(\bx) = 1$. Then, $|\hat{k}(\zeta)| \le 1, \forall \zeta$. 
(Blurring kernels do not amplify the spectrum.)
\end{claim}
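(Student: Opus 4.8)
The plan is to reduce the statement to the triangle inequality for integrals, using the two hypotheses exactly where they are needed. I would start by writing out the definition of the Fourier transform of the kernel, $\hat{k}(\zeta) = \int k(\bx)\, e^{-2\pi i \zeta \cdot \bx}\, d\bx$ (with whatever normalization convention the paper adopts; the argument is insensitive to it). The whole point is that this is a complex integral whose modulus we want to bound by $1$.

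The key step is to pull the modulus inside the integral: $|\hat{k}(\zeta)| = \left| \int k(\bx)\, e^{-2\pi i \zeta \cdot \bx}\, d\bx \right| \le \int \left| k(\bx)\, e^{-2\pi i \zeta \cdot \bx} \right| d\bx$. Here is where both hypotheses enter. First, since $|e^{-2\pi i \zeta \cdot \bx}| = 1$, the integrand on the right collapses to $|k(\bx)|$; then the assumption $k(\bx) \ge 0$ lets me replace $|k(\bx)|$ by $k(\bx)$ itself. Finally, the normalization $\int k(\bx)\, d\bx = 1$ gives $\int k(\bx)\, d\bx = 1$, so the chain of inequalities ends at $|\hat{k}(\zeta)| \le 1$, and this holds for every $\zeta$, as claimed.

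I would also remark that the bound is sharp: evaluating at $\zeta = 0$ gives $\hat{k}(0) = \int k(\bx)\, d\bx = 1$, so the DC component always attains the maximum, which matches the intuition that a blurring kernel leaves the mean intensity untouched while it can only attenuate (never amplify) the other frequencies. This also makes clear why nonnegativity is essential — without it, cancellation inside $\int k$ could make $\int |k|$ strictly larger than $1$ and the spectrum could in principle exceed $1$ at some frequency.

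There is no real obstacle here: the only thing to be slightly careful about is invoking the triangle inequality for (Lebesgue or Riemann) integrals of complex-valued functions and noting that $k e^{-2\pi i \zeta \cdot \bx}$ is integrable because $k$ is (it has unit mass and is nonnegative, hence in $L^1$). Everything else is a one-line computation, so the "proof" is essentially the displayed chain of (in)equalities above together with the sharpness remark.
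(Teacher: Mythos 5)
Your argument is correct and is essentially identical to the paper's one-line proof: both bound $|\hat{k}(\zeta)|$ by pulling the modulus inside the integral, using $|e^{-2\pi i\zeta\cdot\bx}|=1$, nonnegativity to drop the absolute value, and the unit-mass normalization. The added sharpness remark at $\zeta=0$ is a nice touch but not part of the paper's proof.
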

\begin{proof}
$$
\left| \hat{k}(\zeta) \right| = \left| \int k(\bx) e^{i \bx \cdot \zeta} d\bx \right| \le  \int  \left| k(\bx) \right| d\bx =   \int k(\bx)  d\bx  = 1.
$$
\end{proof}
Most modern digital cameras have a burst mode where
the photographer is allowed to sequently take a series of images, one right after the other.
Let us assume that the photographer
takes a sequence of $M$ images of the same scene $u$,
\begin{align}
v_i = u \star k_i + n_i, \quad \text{for} \quad i=1,\ldots,M.
\label{eq:burst}
\end{align}
The movement of the camera during any two images of the burst will be essentially independent.
Thus, the blurring kernels $k_i$ will be mostly different for different images in the burst.
Hence, each Fourier frequency of $\hat{u}$ will be differently  affected on each  frame of the burst.  
The idea is to reconstruct an image whose Fourier spectrum takes for each frequency the value having the largest Fourier magnitude in the burst. 
Since a blurring kernel does not amplify the Fourier spectrum (Claim 1), the reconstructed image picks  what is less attenuated, in Fourier domain, 
from each image of the burst.
Choosing the least attenuated frequencies does not necessarily guarantee that those frequencies are the least affected by the blurring kernel, 
as the kernel may introduce changes in the Fourier image phase. 
However, for small motion kernels, the phase distortion introduced is small. This is illustrated in Figure~\ref{fig:equivKernel},
where several real motion kernels and their Fourier spectra are shown.

\subsection{Fourier Magnitude Weights}
Let $p$ be a non-negative integer,  we will call  {\it Fourier Burst Accumulation (\textsc{fba})} to the Fourier weighted averaged image,
\begin{align}
{u}_p(\bx) =  \mathcal{F}^{-1} \left( \sum_{i=1}^M w_i (\zeta) \cdot \hat{v}_i (\zeta) \right) (\bx),
\label{eq:fourierWeightsOrig}
\end{align}
$$
w_i(\zeta)   = \frac{ |\hat{v}_i (\zeta) |^p }{\sum_{j=1}^M | \hat{v}_j (\zeta)|^p},
$$
where $\hat{v}_i$ is the Fourier Transform of the individual burst image $v_i$. 
The weight $w_i  := w_i(\zeta)$ controls the contribution of the frequency $\zeta$ of image $v_i$ to the final reconstruction $u_p$. 
Given $\zeta$, for $p>0$, the larger the value of  
$|\hat{v}_i (\zeta)|$, the more $\hat{v}_i (\zeta)$ contributes to the average, reflecting what we discussed above that the strongest 
frequency values represent the least attenuated $u$ components. 

The integer $p$ controls the aggregation of the images in the Fourier domain. 
If $p=0$, the restored image  is just the arithmetic average of the burst (as standard for example in the case of noise only),
while if $p\to \infty$, each reconstructed frequency takes the maximum value of that frequency 
along the burst. This is stated in the following claim; the proof is straightforward and it is 
therefore omitted.
\begin{claim}Mean/Max aggregation.
Suppose that $\hat{v}_i(\zeta)$ for $i=1,\ldots,M$ are such that 
$|\hat{v}_{i_1} (\zeta)| = |\hat{v}_{i_2}(\zeta)| = \ldots = |\hat{v}_{i_q}(\zeta)| > |\hat{v}_{i_{q+1}}(\zeta)| \ge |\hat{v}_{i_{q+2}}(\zeta)|  \ge \ldots \ge |\hat{v}_{i_{M}}(\zeta)|$
and $w_i(\zeta)$ is given by \eqref{eq:fourierWeightsOrig}.  If  $p=0$, then $w_i (\zeta) = \frac{1}{M}, \forall i$ (arithmetic mean pooling), 
while if  
$p\to \infty$, then $w_i (\zeta) = \frac{1}{q}$  for $i = i_1,\ldots,i_q$ and  $w_i (\zeta)=0$ otherwise (maximum pooling).
\end{claim}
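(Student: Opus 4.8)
The plan is to treat the two regimes separately, each by a short direct computation on the defining formula for $w_i(\zeta)$ below \eqref{eq:fourierWeightsOrig}; the only mild care needed is in the limiting argument for $p\to\infty$, where I would first normalize by the largest magnitude and then pass to the limit. Throughout, fix the frequency $\zeta$ and abbreviate $a_i := |\hat v_i(\zeta)|$, so that $w_i = a_i^p \big/ \sum_{j=1}^M a_j^p$. By hypothesis $a_{i_1} = \cdots = a_{i_q} =: A$ is the (possibly repeated) maximal magnitude, with $a_{i_{q+1}} \le \cdots \le a_{i_M} < A$ the strictly smaller values; assume $A>0$, since if $A=0$ the frequency is absent from every frame, the weights are a $0/0$ expression, and the statement is vacuous. (For $p=0$ one adopts the usual convention $0^0 = 1$.)

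For $p=0$: every $a_i^0 = 1$, hence $\sum_j a_j^0 = M$ and $w_i = 1/M$ for all $i$, which is the arithmetic mean. This needs no further argument.

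For $p\to\infty$: divide the numerator and denominator of $w_i$ by $A^p$, giving $w_i = (a_i/A)^p \big/ \sum_{j=1}^M (a_j/A)^p$. For $i \in \{i_1,\dots,i_q\}$ the ratio $a_i/A$ equals $1$, so $(a_i/A)^p = 1$ for every $p$; for $i \in \{i_{q+1},\dots,i_M\}$ we have $0 \le a_i/A < 1$, so $(a_i/A)^p \to 0$ as $p\to\infty$. Summing, the denominator tends to $q$, and therefore $w_i \to 1/q$ for the $q$ maximizing indices and $w_i \to 0$ for the rest, i.e.\ maximum pooling.

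I do not expect any genuine obstacle: both parts are elementary. The only point worth flagging is the degenerate case $A=0$ and, relatedly, the behavior when several frames tie for the maximum — which is precisely why the limit produces $1/q$ rather than a single-frame selection, and why the hypothesis explicitly records the multiplicity $q$.
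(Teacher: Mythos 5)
Your proof is correct and is precisely the ``straightforward'' argument the paper alludes to when it omits the proof: the $p=0$ case is immediate, and the $p\to\infty$ case follows by normalizing by the maximal magnitude $A$ so that the non-maximal terms vanish in the limit. Your added care about the degenerate case $A=0$ and the $0^0$ convention is a sensible (if minor) refinement beyond what the paper records.
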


The Fourier weights only depend on the Fourier magnitude and hence they are not sensitive to potential image misalignment. 
However, when doing the average in \eqref{eq:fourierWeightsOrig},  the  images $v_i$ have to be correctly aligned to mitigate 
Fourier phase intermingling and get a sharp aggregation. The images in our experiments are aligned  using SIFT correspondences and then finding 
the dominant homography between each image in the burst and the first one (implementation details are given in Section~\ref{sec:algodetails}). This pre-alignment step can be done exploiting the camera gyroscope and accelerometer data. %

\subsubsection*{Dealing with noise}
The images in the burst are blurry but also contaminated with noise.
In the ideal case where the input images are not contaminated with noise,
\eqref{eq:fourierWeightsOrig} is reduced to
\begin{align}
w_i   =   \frac{ |\hat{v}_i  |^p }{\sum_{j=1}^M | \hat{v}_j |^p} =  \frac{ | \hat{k}_i  \cdot \hat{u} |^p }{\sum_{j=1}^M | \hat{k}_j  \cdot \hat{u} |^p} = \frac{ | \hat{k}_i  |^p }{\sum_{j=1}^M | \hat{k}_j |^p},
\label{eq:idealWeights}
\end{align}
as long as $|\hat{u}|>0$.  This is what we would like to have: a procedure for weighting more  the frequencies that are less attenuated
by the different camera shake kernels.
\mdG{Since camera shake kernels have typically a small support, of maximum only a few tenths of pixels,
their Fourier spectra magnitude vary smoothly.} Thus,  $|\hat{v}_i|$  can be smoothed out before computing the weights.
This helps to remove noise and also to stabilize the weights
(see Section~\ref{sec:algodetails}).

\subsection{Equivalent Point Spread Function}
The aggregation procedure can be seen as the convolution of the underlying sharp image $u$ with an average kernel $k_\textsc{fba}$ given by the Fourier weights in~\eqref{eq:idealWeights},
\begin{align}
{u}_p =  u \star k_\textsc{fba} + \bar{n},
\label{eq:equivalentKernel}
\end{align}
where
\begin{align}
k_\textsc{fba} (\bx)  = \mathcal{F}^{-1} \left( \sum_{i=1}^M w_i (\zeta) \cdot \hat{k}_i (\zeta)   \right)(\bx),
\end{align}
and $\bar{n}$ is the weighted average of the input noise. 

The \textsc{fba} kernel can be seen as the final point spread function (\textsc{psf}) obtained by the aggregation procedure (assuming a perfect alignment). 
The closer the \textsc{fba} kernel is to a Dirac function, the better the Fourier aggregation works. 
By construction, the average kernel is made from the least attenuated frequencies in the burst ---given by the 
Fourier weights.  However, since arbitrary convolution kernels may also introduce phase distortion, there 
is no guarantee in general that this aggregation procedure will lead to
an equivalent \textsc{psf} that is closer to a Dirac mass.

In Figure~\ref{fig:equivKernel} we show a series of input images and the respective motion kernels. 
\mdG{The motion kernels were estimated using a sharp snapshot, captured with a tripod. Using the sharp
reference $u_\text{ref}$ we solve for a blurring kernel $k_i$ minimizing the least squares distance to the blurred acquisition $v_i$, namely, 
$|| u_\text{ref} \star k_i - v_i||$  (see e.g., \cite{delbracio2012nonparametric} for a similar setup)}.
The figure also shows the real and imaginary parts of the Fourier kernels spectra. As one can see for most of the kernels 
the real part is mostly positive and  considerably larger than the imaginary part. This implies that the less attenuated frequencies 
will not present significant phase distortion. This assumption may not be accurate for large motion kernels, 
an unexpected scenario in ordinary camera shake. 
In this example, as $p$ increases the equivalent point spread function gets closer to a Dirac function. 
In particular, the \textsc{fba} kernel for $p>0$ attenuates significantly  
less the high frequencies than the regular arithmetic average ($p=0$), leading to a sharper aggregation.

\section{Fourier Burst Accumulation Analysis}
\label{sec:fbaAnalysis}

\mdG{
\subsection{Anatomy of the Fourier Accumulation}}

The value of $p$ sets a tradeoff between  sharpness and noise reduction. Although one would always prefer to get a sharp image, 
due to noise and the unknown Fourier phase shift introduced by the aggregation procedure, the resultant image would
not necessary be better as $p\to \infty$. 
Figure \ref{fig:weightsTorres} shows an example of the proposed \textsc{fba} for a burst of 7 images,
and the amount of contribution of each frame to the final aggregation.
As $p$ grows, the weights are concentrated in fewer images (Figure~\ref{fig:weightsTorres} c) and d)). Also, the weights maps clearly show
that different Fourier frequencies are recovered from different frames (Figure~\ref{fig:weightsTorres} a)). In this example, the high frequency
content is uniformly taken from all the frames in the burst. This produces a strong noise reduction behavior,
in addition to the sharpening effect.

\begin{figure}[htpb]
\centering
\cfboxR{0.8pt}{black}{\includegraphics[width=0.1345\columnwidth]{./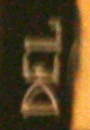}}\hspace{-0.25em}
\cfboxR{0.8pt}{black}{\includegraphics[width=0.1345\columnwidth]{./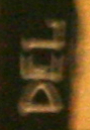}}\hspace{-0.25em}
\cfboxR{0.8pt}{black}{\includegraphics[width=0.1345\columnwidth]{./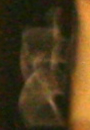}}\hspace{-0.25em}
\cfboxR{0.8pt}{black}{\includegraphics[width=0.1345\columnwidth]{./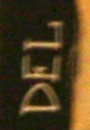}}\hspace{-0.25em}
\cfboxR{0.8pt}{black}{\includegraphics[width=0.1345\columnwidth]{./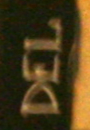}}\hspace{-0.25em}
\cfboxR{0.8pt}{black}{\includegraphics[width=0.1345\columnwidth]{./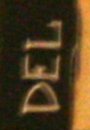}}\hspace{-0.25em}
\cfboxR{0.8pt}{black}{\includegraphics[width=0.1345\columnwidth]{./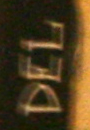}}

\vspace{.1em}

\cfboxR{0.8pt}{black}{\includegraphics[width=0.1345\columnwidth, height=0.1345\columnwidth]{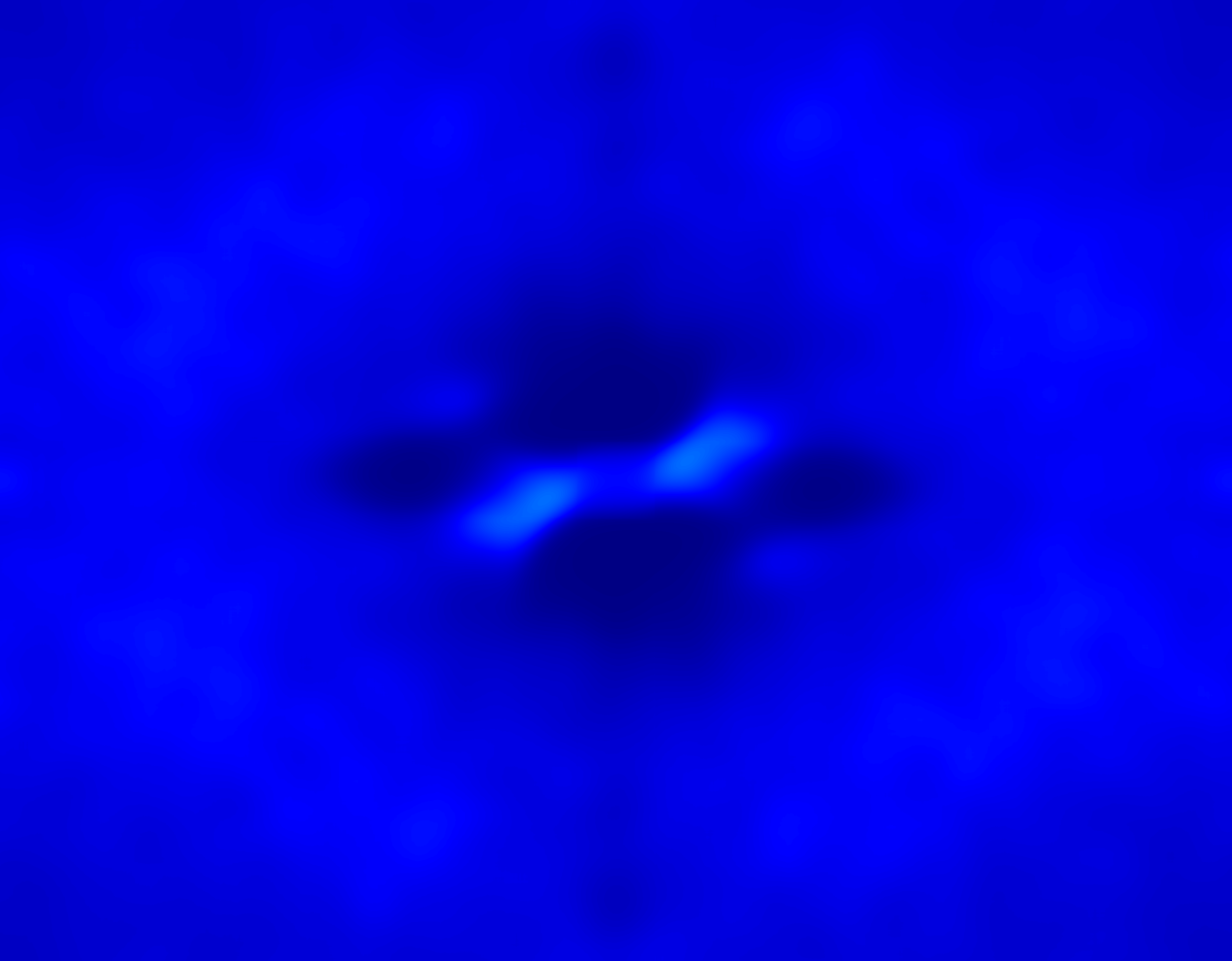}}\hspace{-0.25em}
\cfboxR{0.8pt}{black}{\includegraphics[width=0.1345\columnwidth, height=0.1345\columnwidth]{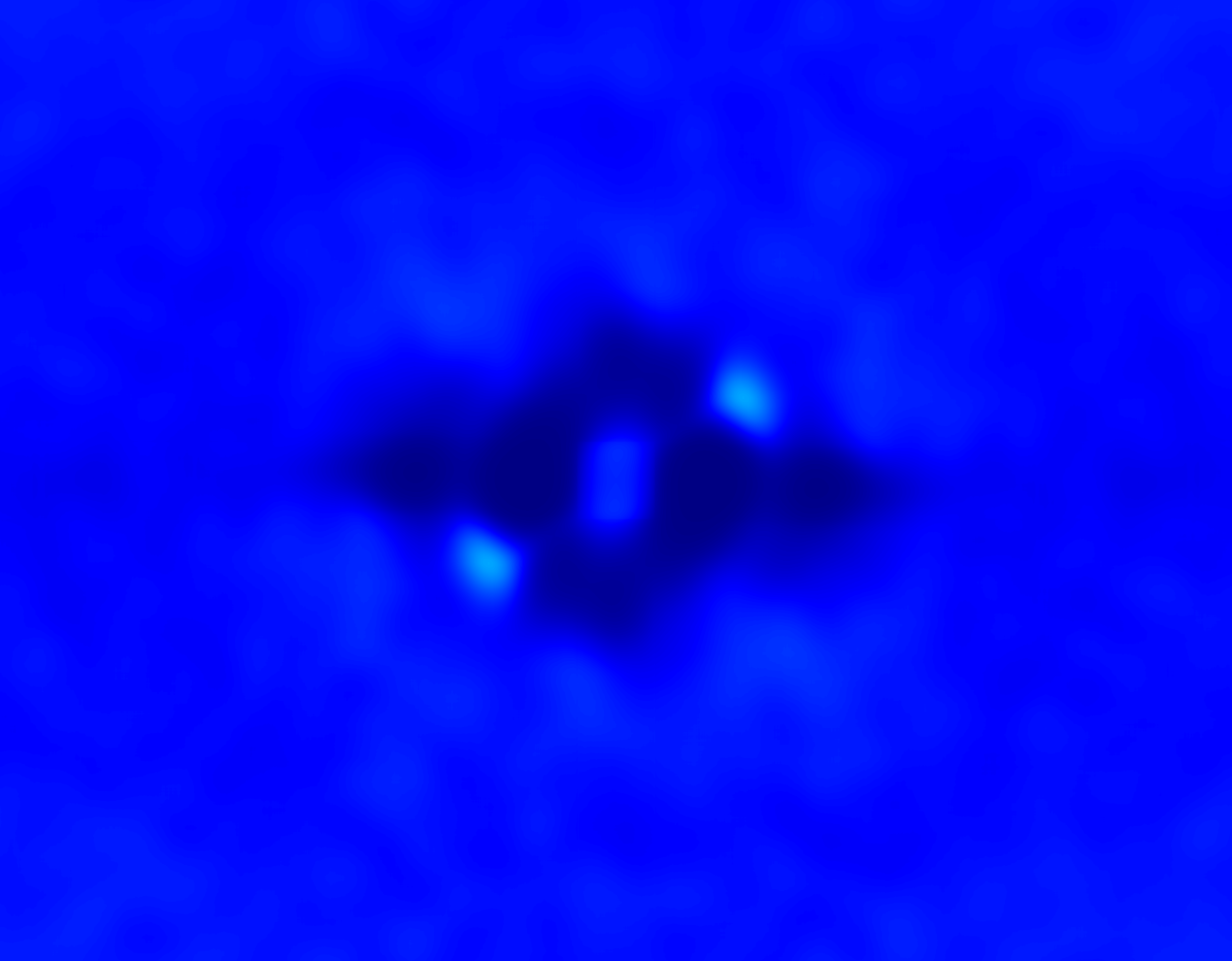}}\hspace{-0.25em}
\cfboxR{0.8pt}{black}{\includegraphics[width=0.1345\columnwidth, height=0.1345\columnwidth]{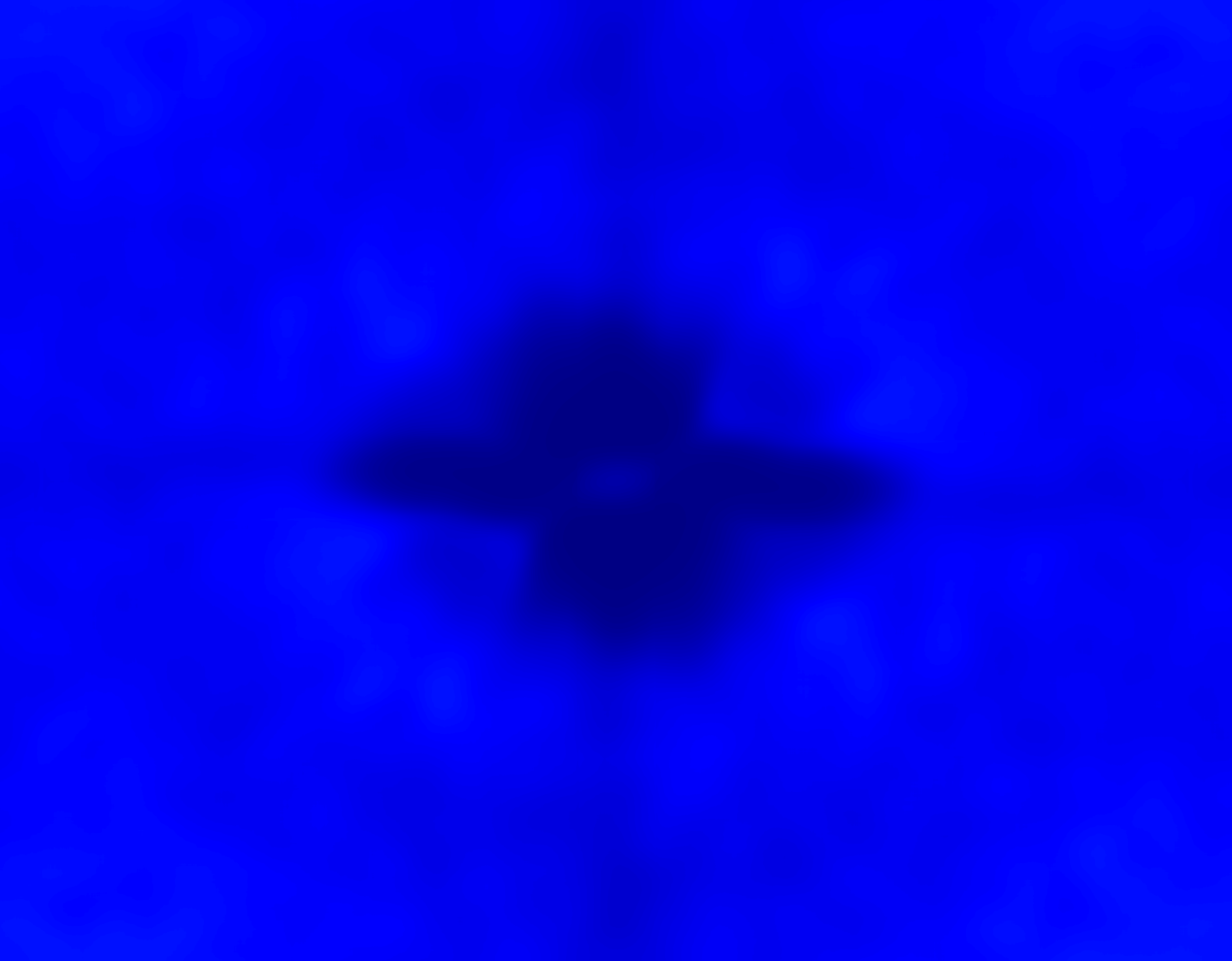}}\hspace{-0.25em}
\cfboxR{0.8pt}{black}{\includegraphics[width=0.1345\columnwidth, height=0.1345\columnwidth]{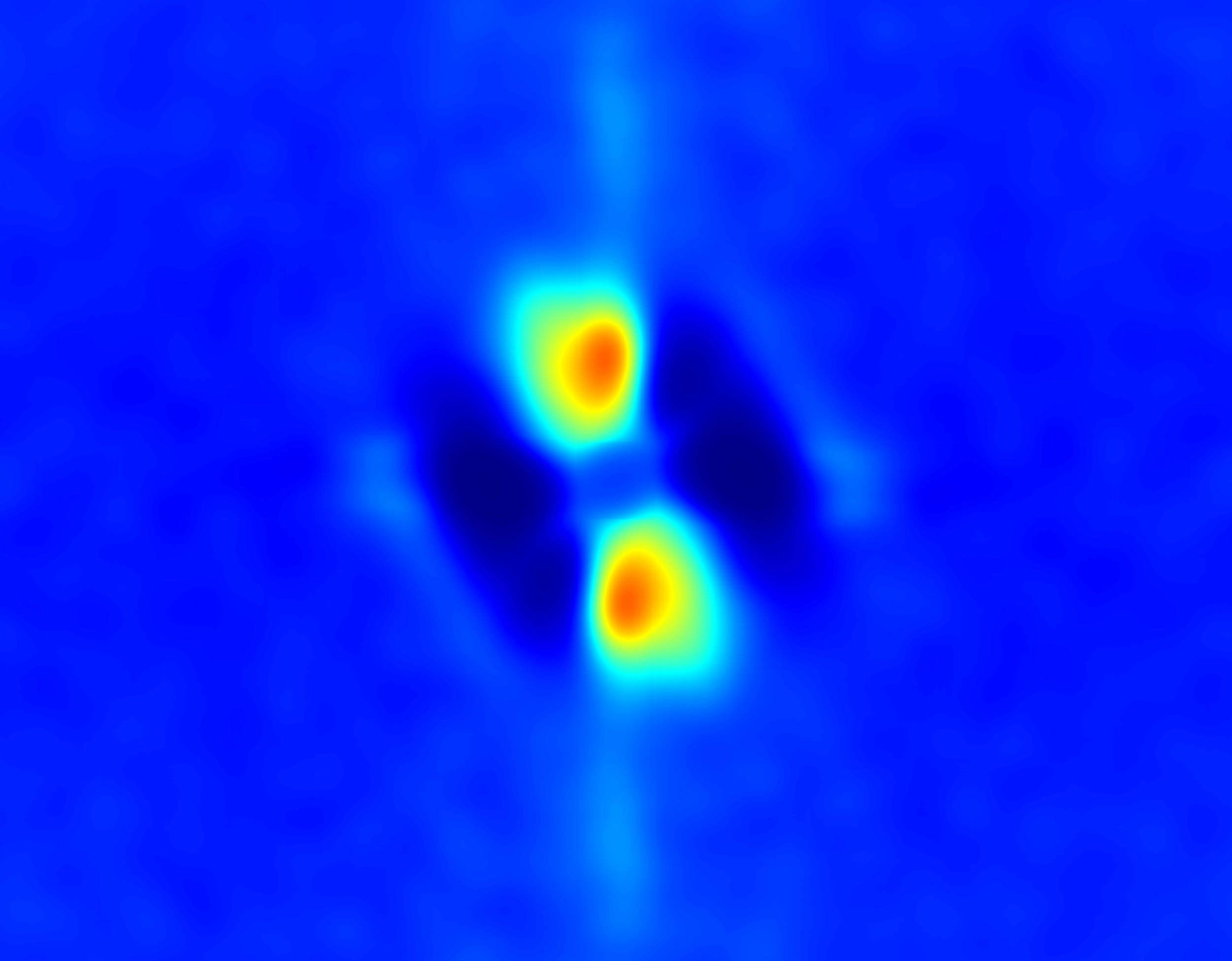}}\hspace{-0.25em}
\cfboxR{0.8pt}{black}{\includegraphics[width=0.1345\columnwidth, height=0.1345\columnwidth]{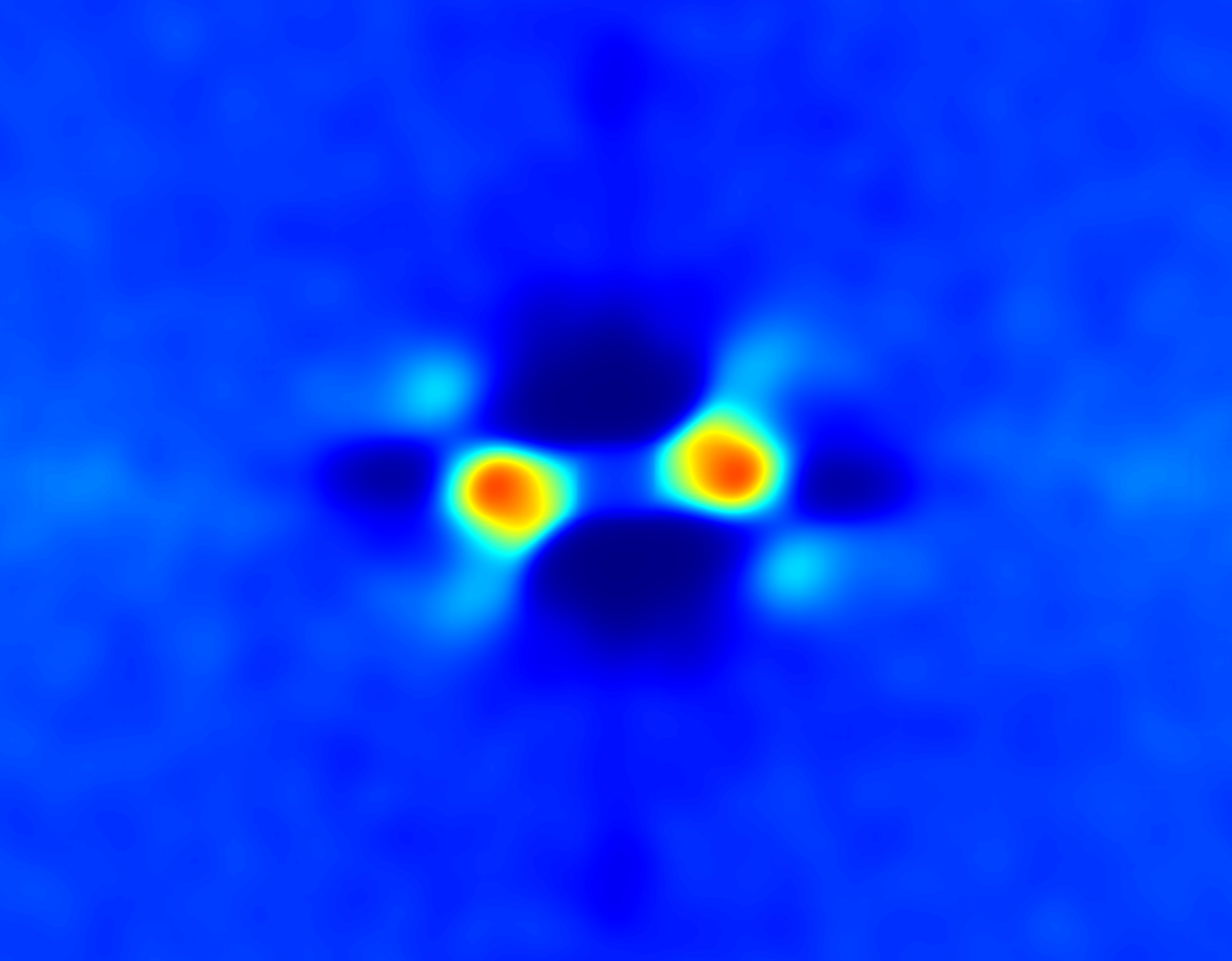}}\hspace{-0.25em}
\cfboxR{0.8pt}{black}{\includegraphics[width=0.1345\columnwidth, height=0.1345\columnwidth]{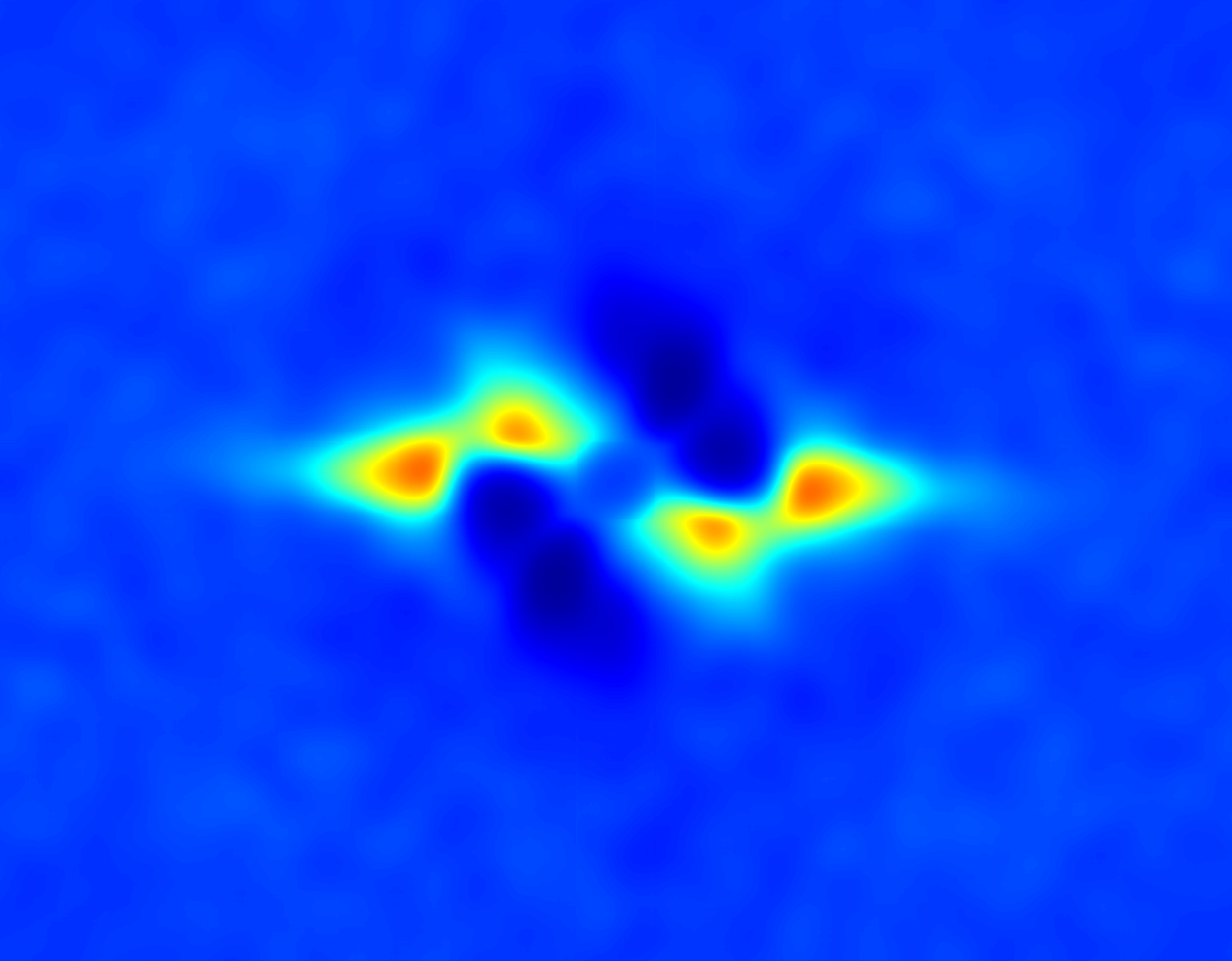}}\hspace{-0.25em}
\cfboxR{0.8pt}{black}{\includegraphics[width=0.1345\columnwidth, height=0.1345\columnwidth]{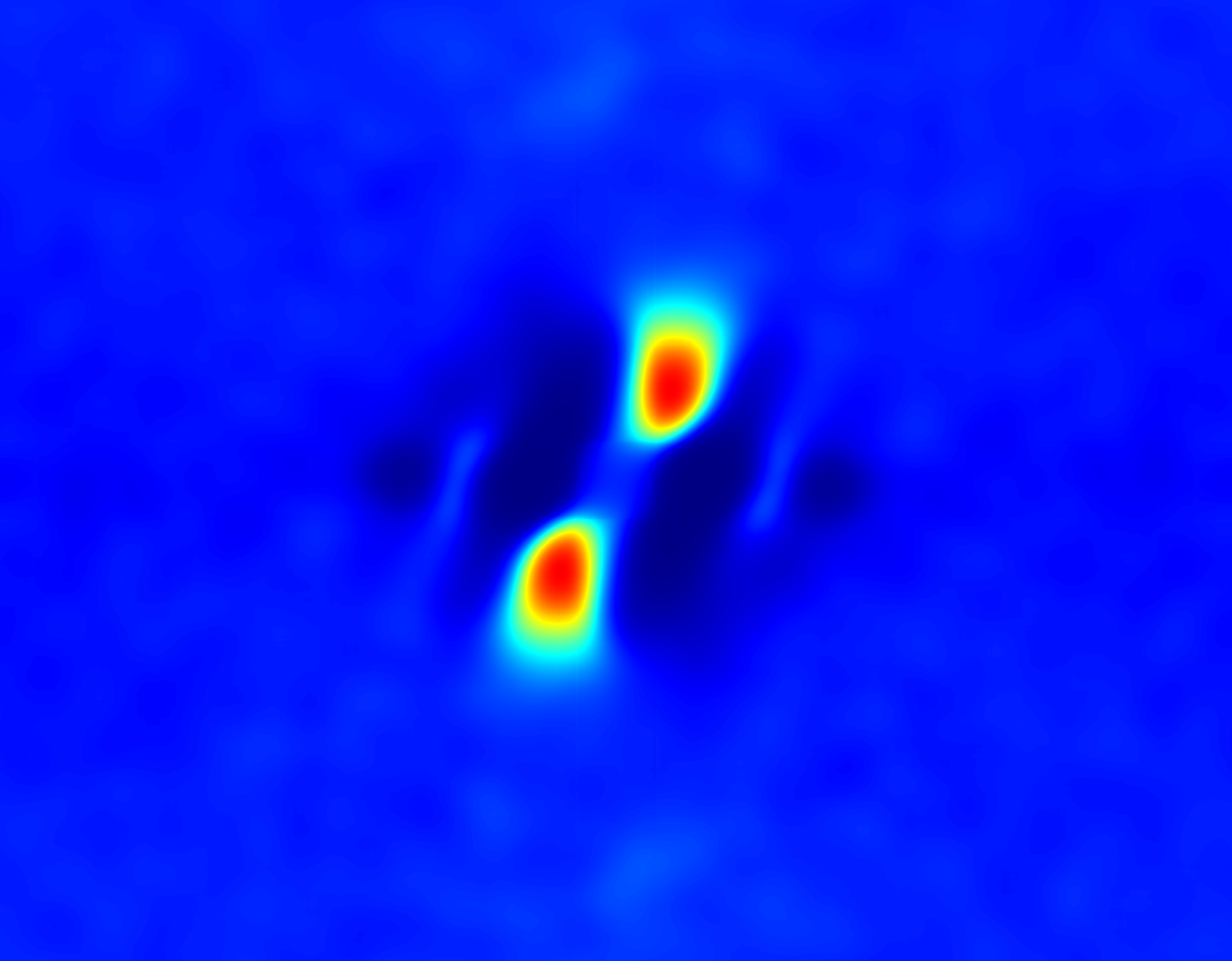}}

\vspace{.5em}

\scriptsize (a) Frames crop 1-7  and the Fourier weights $w_i$ for $p=11$.

\vspace{1.1em}

\begin{minipage}[c]{0.161\columnwidth}
\centering
\cfboxR{0.8pt}{black}{\includegraphics[width=.98\columnwidth]{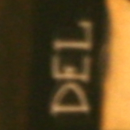}}

\ssmall $p=0$
\end{minipage}\hspace{-0.1em}
\begin{minipage}[c]{0.161\columnwidth}
\centering
\cfboxR{0.8pt}{black}{\includegraphics[width=.98\columnwidth]{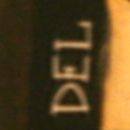}}

\ssmall $p=3$
\end{minipage}\hspace{-0.1em}
\begin{minipage}[c]{0.161\columnwidth}
\centering
\cfboxR{0.8pt}{black}{\includegraphics[width=.98\columnwidth]{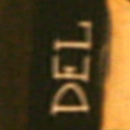}}

\ssmall $p=7$
\end{minipage}\hspace{-0.1em}
\begin{minipage}[c]{0.161\columnwidth}
\centering
\cfboxR{0.8pt}{black}{\includegraphics[width=.98\columnwidth]{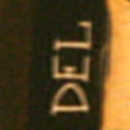}}

\ssmall $p=11$
\end{minipage}\hspace{-0.1em}
\begin{minipage}[c]{0.161\columnwidth}
\centering
\cfboxR{0.8pt}{black}{\includegraphics[width=.98\columnwidth]{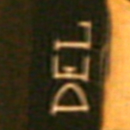}}

\ssmall $p=20$
\end{minipage}\hspace{-0.1em}
\begin{minipage}[c]{0.161\columnwidth}
\centering
\cfboxR{0.8pt}{black}{\includegraphics[width=.98\columnwidth]{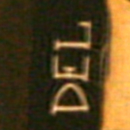}}

\ssmall $p=50$
\end{minipage} \vspace{.5em}

\scriptsize (b) Fourier Aggregation results for different $p$ values.

\vspace{1.2em}

\begin{minipage}[c]{0.49\columnwidth}
\centering
\includegraphics[width=0.98\columnwidth]{./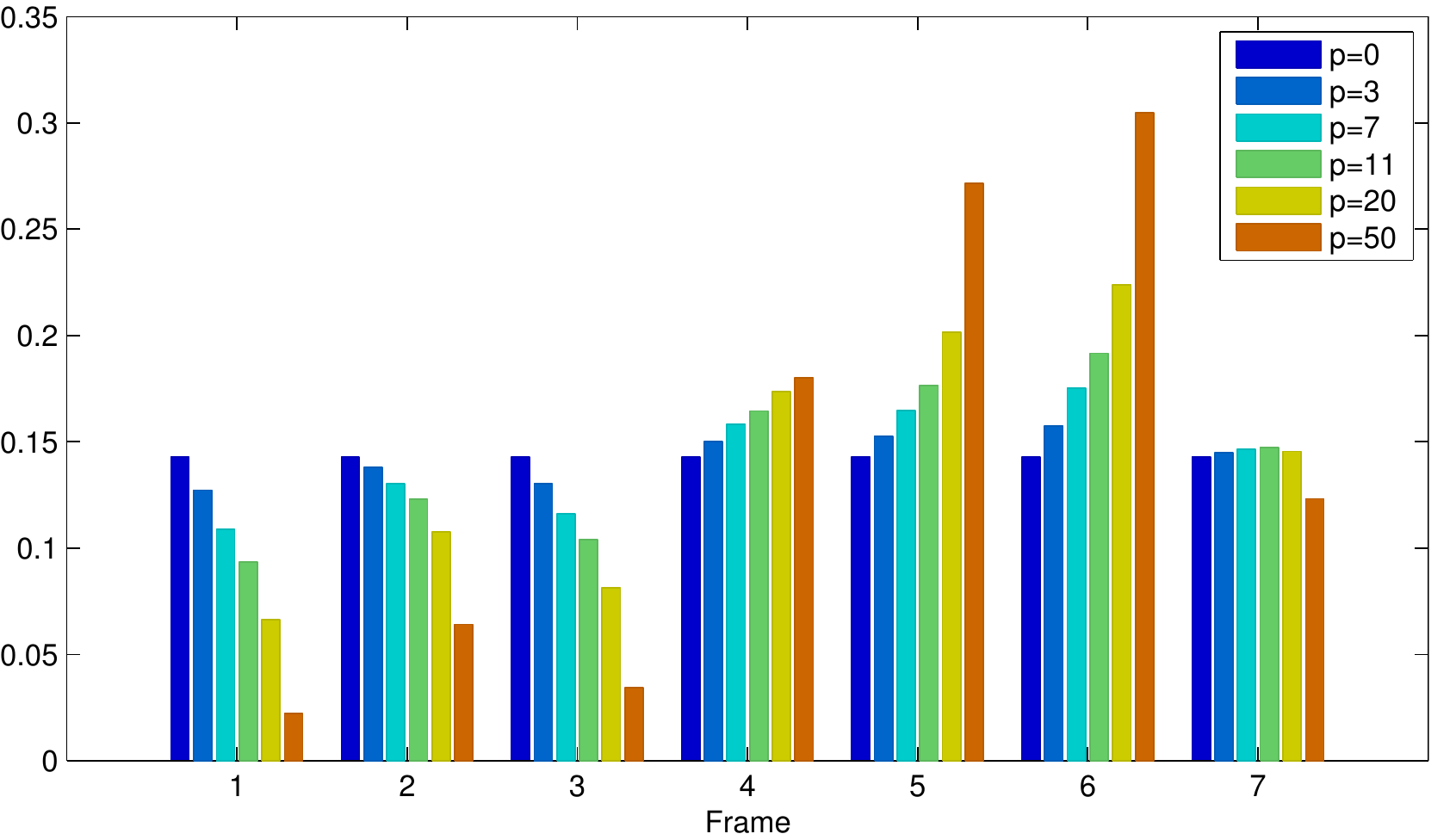}
\scriptsize (c) Weights energy distribution: $\text{we}_i = \int | w_i(\zeta) |^2 d \zeta$ 
\end{minipage}
\begin{minipage}[c]{0.49\columnwidth}
\centering
\includegraphics[width=0.98\columnwidth]{./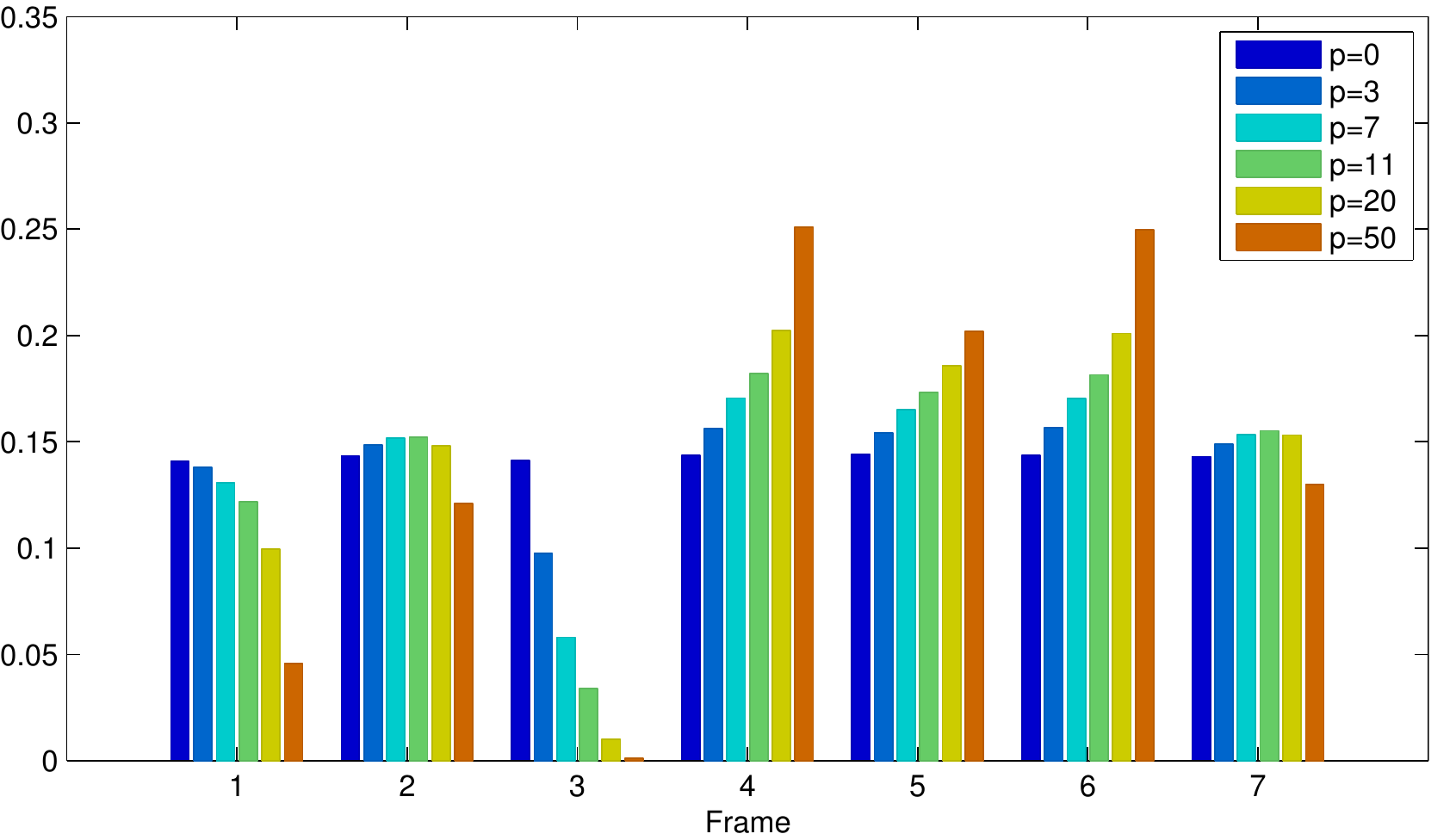}
\scriptsize (d) Weighted frames energy distribution: $\text{wfe}_i = \int | w_i(\zeta) \cdot \hat{v}_i (\zeta) |^2 d \zeta$
\end{minipage}

\caption{Weights distribution of the Fourier Burst Aggregation when changing the value of $p$. As $p$ increases,
the weights are concentrated in fewer images and the aggregated image becomes sharper but also noisier.
The difference between (c) and (d) lies in the fact that a large weight may not necessarily reflect a large contribution
to the final image, although this is generally the case. 
}
\label{fig:weightsTorres}

\end{figure}

\mdG{
To make explicit the contribution of each frame to the final image,
the FBA weighted average \eqref{eq:fourierWeightsOrig} can be decomposed into its contributions:
$$
{u}_p(\bx) =   \sum_{i=1}^M  \mathcal{F}^{-1} \Big(w_i   (\zeta) \cdot \hat{v}_i  (\zeta) \Big) (\bx)  := \sum_{i=1}^M \bar{v}_i  (\bx). 
$$
Each term $\bar{v}_i$ is the result of applying the corresponding Fourier weights $w_i$ (a filter) to
the respective input frame $v_i$. Figure~\ref{fig:anatomy} shows each of these terms
for a crop of a real burst. As the reader may notice, each frame contributes differently. None of the frames capture all the structure present in the final aggregated image.
}

\begin{figure*}
\centering

\begin{minipage}{.5em}
\begin{sideways}

{\ssmall \hspace{.7em}
Weights $w_i$\hspace{3.3em}
Filtered $\bar{v}_i$\hspace{3.6em}
Image $v_i$\hspace{2.4em}
}

\end{sideways}
\end{minipage}
\begin{minipage}{.0925\textwidth}
\centering

{\ssmall input 1} \vspace{.12em}

\cfboxR{0.4pt}{black}{\includegraphics[width=\textwidth]{./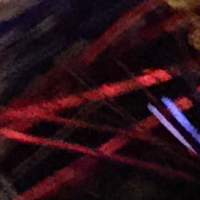}} \vspace{-1em}

\cfboxR{0.4pt}{black}{\includegraphics[width=\textwidth]{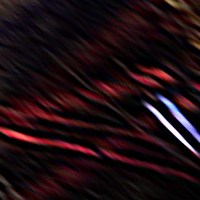}}\vspace{.15em}

\cfboxR{0.4pt}{black}{\includegraphics[width=\textwidth, height=\textwidth]{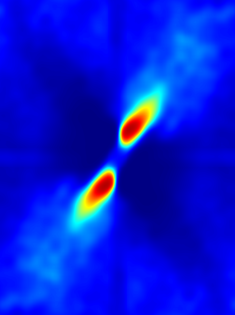}}

\end{minipage}\hspace{-.15em}
\begin{minipage}{.0925\textwidth}
\centering

{\ssmall input 2} \vspace{.12em}

\cfboxR{0.4pt}{black}{\includegraphics[width=\textwidth]{./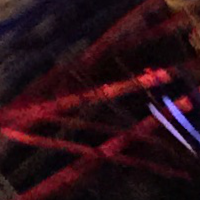}} \vspace{-1em}

\cfboxR{0.4pt}{black}{\includegraphics[width=\textwidth]{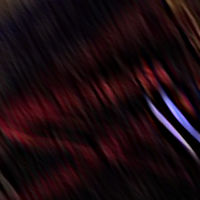}}\vspace{.15em}

\cfboxR{0.4pt}{black}{\includegraphics[width=\textwidth,height=\textwidth]{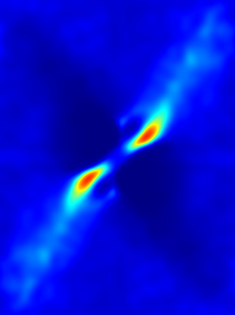}}

\end{minipage}\hspace{-.15em}
\begin{minipage}{.0925\textwidth}
\centering

{\ssmall input 3} \vspace{.12em}

\cfboxR{0.4pt}{black}{\includegraphics[width=\textwidth]{./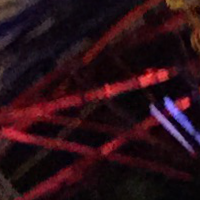}} \vspace{-1em}

\cfboxR{0.4pt}{black}{\includegraphics[width=\textwidth]{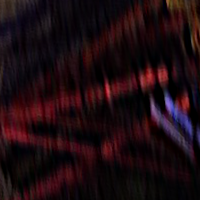}}\vspace{.15em}

\cfboxR{0.4pt}{black}{\includegraphics[width=\textwidth,height=\textwidth]{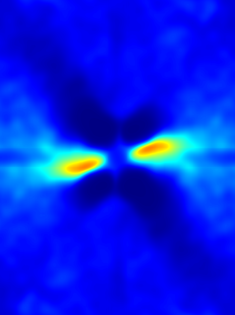}}

\end{minipage}\hspace{-.15em}
\begin{minipage}{.0925\textwidth}
\centering

{\ssmall input 4} \vspace{.12em}

\cfboxR{0.4pt}{black}{\includegraphics[width=\textwidth]{./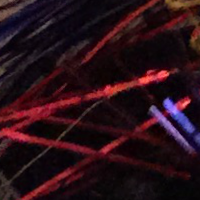}} \vspace{-1em}

\cfboxR{0.4pt}{black}{\includegraphics[width=\textwidth]{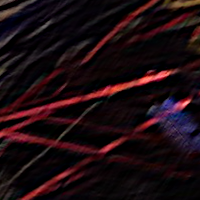}}\vspace{.15em}

\cfboxR{0.4pt}{black}{\includegraphics[width=\textwidth,height=\textwidth]{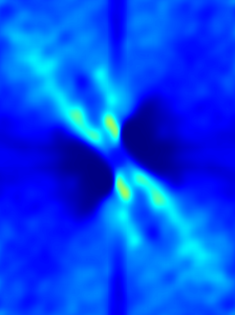}}

\end{minipage}\hspace{-.15em}
\begin{minipage}{.0925\textwidth}
\centering

{\ssmall input 5} \vspace{.12em}

\cfboxR{0.4pt}{black}{\includegraphics[width=\textwidth]{./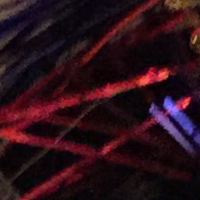}} \vspace{-1em}

\cfboxR{0.4pt}{black}{\includegraphics[width=\textwidth]{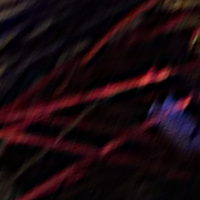}}\vspace{.15em}

\cfboxR{0.4pt}{black}{\includegraphics[width=\textwidth,height=\textwidth]{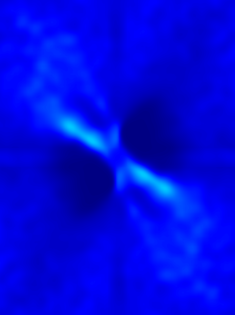}}

\end{minipage}\hspace{-.15em}
\begin{minipage}{.0925\textwidth}
\centering

{\ssmall input 6} \vspace{.12em}

\cfboxR{0.4pt}{black}{\includegraphics[width=\textwidth]{./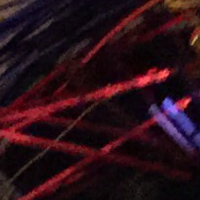}} \vspace{-1em}

\cfboxR{0.4pt}{black}{\includegraphics[width=\textwidth]{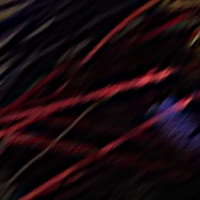}}\vspace{.15em}

\cfboxR{0.4pt}{black}{\includegraphics[width=\textwidth,height=\textwidth]{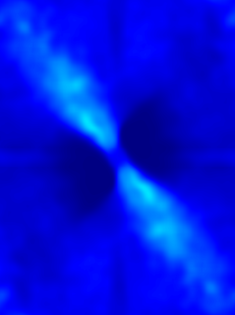}}

\end{minipage}\hspace{-.15em}
\begin{minipage}{.0925\textwidth}
\centering

{\ssmall input 7} \vspace{.12em}

\cfboxR{0.4pt}{black}{\includegraphics[width=\textwidth]{./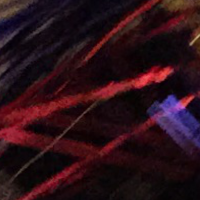}} \vspace{-1em}

\cfboxR{0.4pt}{black}{\includegraphics[width=\textwidth]{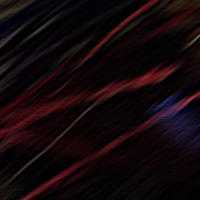}}\vspace{.15em}

\cfboxR{0.4pt}{black}{\includegraphics[width=\textwidth,height=\textwidth]{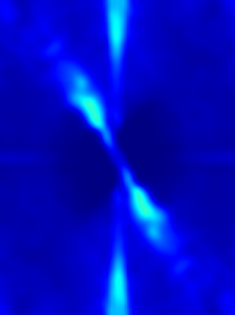}}

\end{minipage}\hspace{-.15em}
\begin{minipage}{.0925\textwidth}
\centering

{\ssmall input 8} \vspace{.12em}

\cfboxR{0.4pt}{black}{\includegraphics[width=\textwidth]{./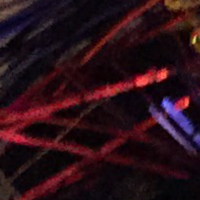}} \vspace{-1em}

\cfboxR{0.4pt}{black}{\includegraphics[width=\textwidth]{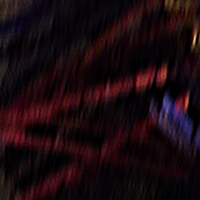}}\vspace{.15em}

\cfboxR{0.4pt}{black}{\includegraphics[width=\textwidth, height=\textwidth]{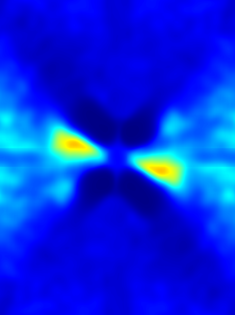}}

\end{minipage}\hspace{-.15em}
\begin{minipage}{.0925\textwidth}
\centering

{\ssmall input 9} \vspace{.12em}

\cfboxR{0.4pt}{black}{\includegraphics[width=\textwidth]{./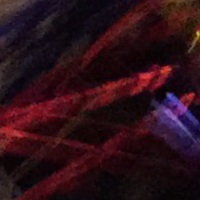}} \vspace{-1em}

\cfboxR{0.4pt}{black}{\includegraphics[width=\textwidth]{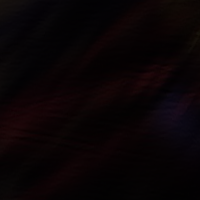}}\vspace{.15em}

\cfboxR{0.4pt}{black}{\includegraphics[width=\textwidth, height=\textwidth]{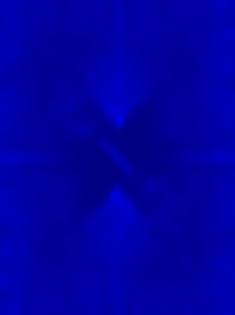}}

\end{minipage}\hspace{-.15em}
\begin{minipage}{.0925\textwidth}
\centering

{\ssmall \textbf{FBA}} \vspace{.12em}

\cfboxR{0.4pt}{black}{\includegraphics[width=\textwidth]{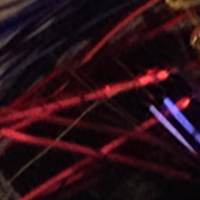}} 

\vspace{5.9em}

{\includegraphics[width=1.1\textwidth]{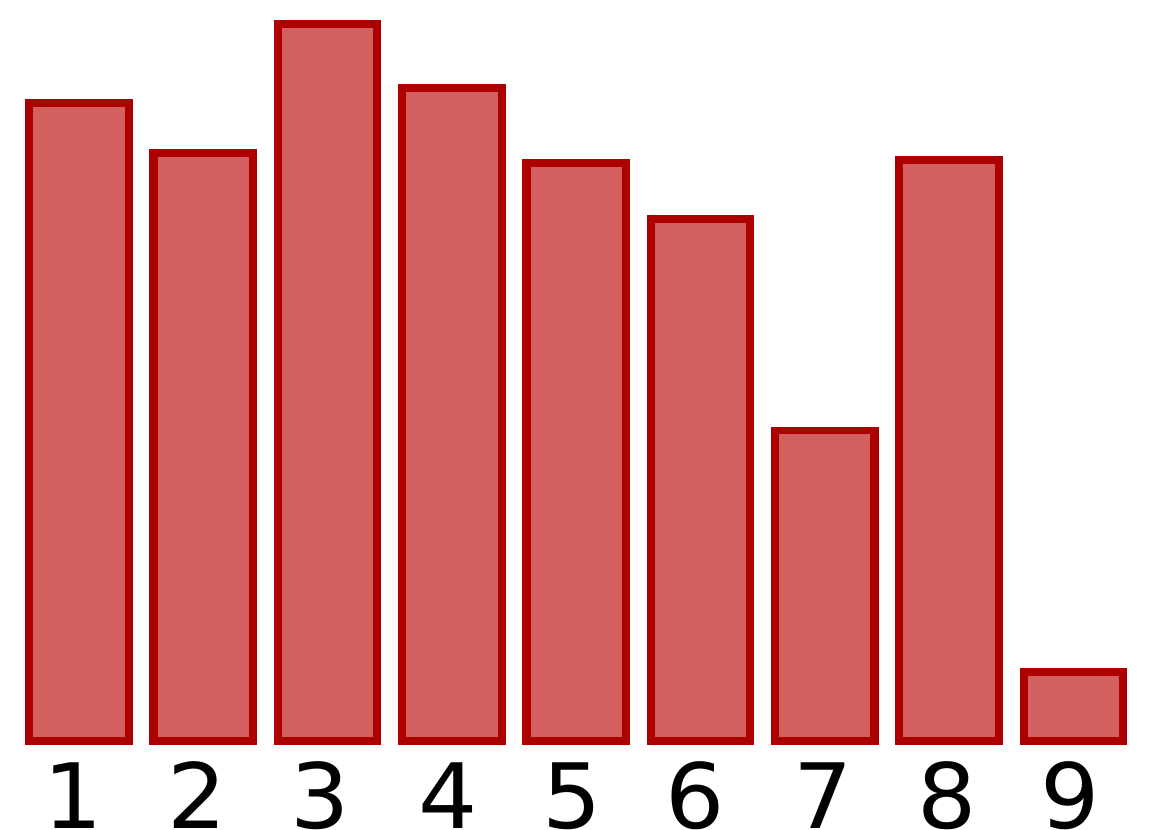} }

\vspace{-1em}

\end{minipage}

\mdG{\caption{Anatomy of the Fourier aggregation.  The first row shows a crop
of each input image $v_i$ (frames 1 to 9) and the proposed Fourier Burst Accumulation for $p=11$ (from~\eqref{eq:fourierWeightsOrig}, no additional sharpening).
The second row shows the contribution of each frame to the final aggregation $\bar{v}_i$ (rescaled for better visualization).
The reader can check that the \textsc{fba} results from the aggregation of different components presented in different frames. This is also confirmed by the Fourier weights distribution shown in the bottom row. The bar plot shown on the right indicates the total contribution of each frame to the \textsc{fba} image. None of the input images contain all the structure presented in the final aggregation.}
\label{fig:anatomy}

}

\end{figure*}

\subsection{Statistical Performance Analysis}
To show the statistical performance of the Fourier weighted accumulation, we carried out an empirical 
analysis applying the proposed aggregation with different values of $p$. We simulated motion kernels following~\cite{gavant2011physiological},
where the (expected value) amount of blur is controlled by a parameter related to the exposure time $T_\text{exp}$. 
The simulated motion kernels were applied to a sharp clean image (ground truth).
We also controlled
the number of images in the burst $M$ and the  noise level in each frame $s$.  The kernels were generated by simulating 
the random shake of the camera from the power spectral density of measured physiological hand tremor~\cite{carignan2010quantifying}. 
All the images were  aligned by pre-centering the motion kernels before blurring the underlying sharp image. 
Figure~\ref{fig:biasVariance} shows several different realizations for different exposure values $T_\text{exp}$. 
Actually, the amount of blur not only depends on the exposure time but also on the focal distance, user expertise, 
and camera dimensions and mass~\cite{boracchi2012modeling}. However, for simplicity, all these variables were
controlled by the single parameter $T_\text{exp}$.

We randomly sampled thousands of different motion kernels and Gaussian noise realizations, and then applied the Fourier aggregation 
procedure for each different configuration ($T_\text{exp}, M, s$) several times.
The empirical mean square error (\textsc{mse}) between the ground truth reference and each \textsc{fba} restoration was computed and averaged
over hundreds of independent realizations.
We decomposed the mean square error into the bias and variance 
terms, namely $\textsc{mse}(u_p) = \text{bias}(u_p)^2 + \text{var}(u_p)$, to help visualize the behavior of the algorithm.

Figure~\ref{fig:biasVariance} shows the average algorithm performance when changing the acquisition conditions. In general, the larger the
value of $p$ the smaller the bias and the larger the variance. There is a minimum of the mean square error for $p \in [7,30]$. 
This is reasonable since there exists a tradeoff between variance reduction and bias. Although both the bias and the variance are affected by the noise level, 
the qualitative performance of the algorithm remains the same. The bias is not altered 
by the number of frames in the burst but the variance is reduced as more images are used, implying a gain in the expected \textsc{mse} as more
images are used. On the other hand, the exposure time
mostly affects the bias, being much more significant for larger exposures as expected.

\begin{figure*}[hptb]
\begin{minipage}[c]{0.3\textwidth}

\begin{center}

\begin{minipage}[c]{0.03\textwidth}
\begin{sideways}\tiny $\,\,\nicefrac{1}{10}$\end{sideways}
\end{minipage}\hspace{-.1em}
\begin{minipage}[c]{0.9\textwidth}
\cfboxR{0.5pt}{black}{\includegraphics[width=.132\columnwidth]{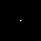}}\hspace{-.23em}
\cfboxR{0.5pt}{black}{\includegraphics[width=.132\columnwidth]{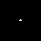}}\hspace{-.23em}
\cfboxR{0.5pt}{black}{\includegraphics[width=.132\columnwidth]{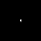}}\hspace{-.23em}
\cfboxR{0.5pt}{black}{\includegraphics[width=.132\columnwidth]{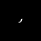}}\hspace{-.23em}
\cfboxR{0.5pt}{black}{\includegraphics[width=.132\columnwidth]{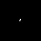}}\hspace{-.23em}
\cfboxR{0.5pt}{black}{\includegraphics[width=.132\columnwidth]{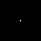}}\hspace{-.23em}
\cfboxR{0.5pt}{black}{\includegraphics[width=.132\columnwidth]{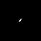}}

\vspace{.1em}

\cfboxR{0.5pt}{black}{\includegraphics[width=.132\columnwidth]{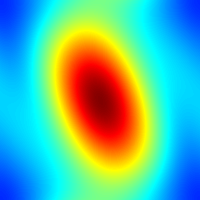}}\hspace{-.23em}
\cfboxR{0.5pt}{black}{\includegraphics[width=.132\columnwidth]{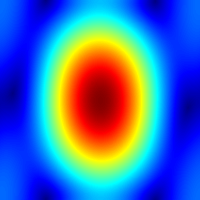}}\hspace{-.23em}
\cfboxR{0.5pt}{black}{\includegraphics[width=.132\columnwidth]{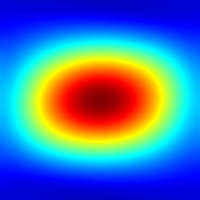}}\hspace{-.23em}
\cfboxR{0.5pt}{black}{\includegraphics[width=.132\columnwidth]{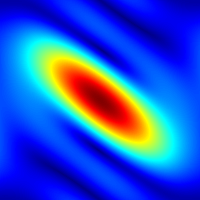}}\hspace{-.23em}
\cfboxR{0.5pt}{black}{\includegraphics[width=.132\columnwidth]{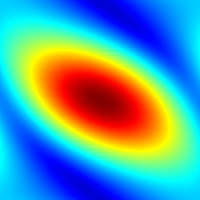}}\hspace{-.23em}
\cfboxR{0.5pt}{black}{\includegraphics[width=.132\columnwidth]{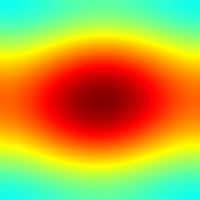}}\hspace{-.23em}
\cfboxR{0.5pt}{black}{\includegraphics[width=.132\columnwidth]{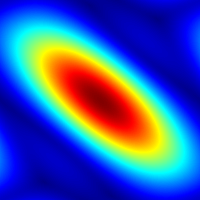}}
\end{minipage}

\vspace{.4em}

\begin{minipage}[c]{0.03\textwidth}
\begin{sideways}\tiny $\,\,\nicefrac{1}{5}$\end{sideways}
\end{minipage}\hspace{-.1em}
\begin{minipage}[c]{0.9\textwidth}
\cfboxR{0.5pt}{black}{\includegraphics[width=.132\columnwidth]{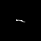}}\hspace{-.23em}
\cfboxR{0.5pt}{black}{\includegraphics[width=.132\columnwidth]{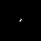}}\hspace{-.23em}
\cfboxR{0.5pt}{black}{\includegraphics[width=.132\columnwidth]{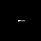}}\hspace{-.23em}
\cfboxR{0.5pt}{black}{\includegraphics[width=.132\columnwidth]{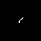}}\hspace{-.23em}
\cfboxR{0.5pt}{black}{\includegraphics[width=.132\columnwidth]{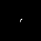}}\hspace{-.23em}
\cfboxR{0.5pt}{black}{\includegraphics[width=.132\columnwidth]{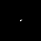}}\hspace{-.23em}
\cfboxR{0.5pt}{black}{\includegraphics[width=.132\columnwidth]{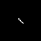}}

\vspace{.1em}

\cfboxR{0.5pt}{black}{\includegraphics[width=.132\columnwidth]{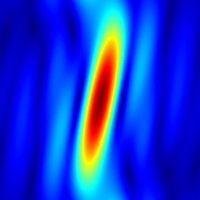}}\hspace{-.23em}
\cfboxR{0.5pt}{black}{\includegraphics[width=.132\columnwidth]{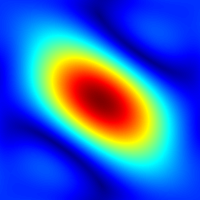}}\hspace{-.23em}
\cfboxR{0.5pt}{black}{\includegraphics[width=.132\columnwidth]{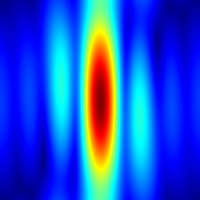}}\hspace{-.23em}
\cfboxR{0.5pt}{black}{\includegraphics[width=.132\columnwidth]{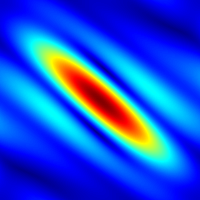}}\hspace{-.23em}
\cfboxR{0.5pt}{black}{\includegraphics[width=.132\columnwidth]{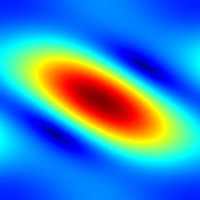}}\hspace{-.23em}
\cfboxR{0.5pt}{black}{\includegraphics[width=.132\columnwidth]{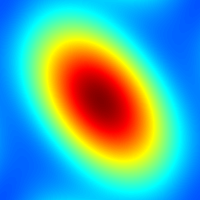}}\hspace{-.23em}
\cfboxR{0.5pt}{black}{\includegraphics[width=.132\columnwidth]{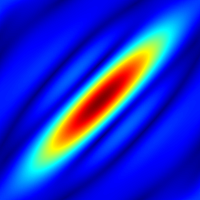}}
\end{minipage}

\vspace{.4em}

\begin{minipage}[c]{0.03\textwidth}
\begin{sideways}\tiny $\,\,\nicefrac{1}{3}$\end{sideways}
\end{minipage}\hspace{-.1em}
\begin{minipage}[c]{0.9\textwidth}
\cfboxR{0.5pt}{black}{\includegraphics[width=.132\columnwidth]{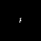}}\hspace{-.23em}
\cfboxR{0.5pt}{black}{\includegraphics[width=.132\columnwidth]{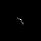}}\hspace{-.23em}
\cfboxR{0.5pt}{black}{\includegraphics[width=.132\columnwidth]{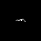}}\hspace{-.23em}
\cfboxR{0.5pt}{black}{\includegraphics[width=.132\columnwidth]{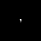}}\hspace{-.23em}
\cfboxR{0.5pt}{black}{\includegraphics[width=.132\columnwidth]{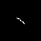}}\hspace{-.23em}
\cfboxR{0.5pt}{black}{\includegraphics[width=.132\columnwidth]{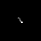}}\hspace{-.23em}
\cfboxR{0.5pt}{black}{\includegraphics[width=.132\columnwidth]{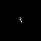}}

\vspace{0.1em}

\cfboxR{0.5pt}{black}{\includegraphics[width=.132\columnwidth]{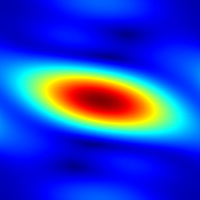}}\hspace{-.23em}
\cfboxR{0.5pt}{black}{\includegraphics[width=.132\columnwidth]{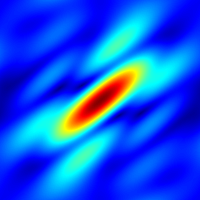}}\hspace{-.23em}
\cfboxR{0.5pt}{black}{\includegraphics[width=.132\columnwidth]{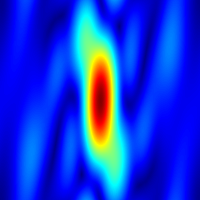}}\hspace{-.23em}
\cfboxR{0.5pt}{black}{\includegraphics[width=.132\columnwidth]{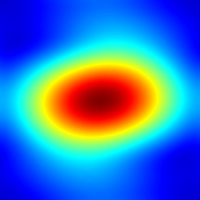}}\hspace{-.23em}
\cfboxR{0.5pt}{black}{\includegraphics[width=.132\columnwidth]{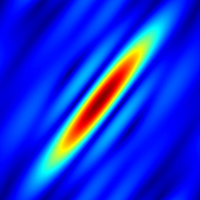}}\hspace{-.23em}
\cfboxR{0.5pt}{black}{\includegraphics[width=.132\columnwidth]{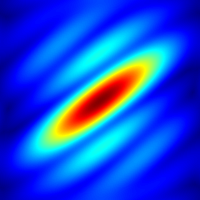}}\hspace{-.23em}
\cfboxR{0.5pt}{black}{\includegraphics[width=.132\columnwidth]{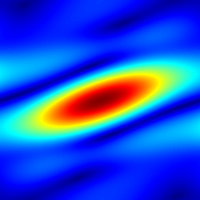}}
\end{minipage}

\vspace{.4em}

\begin{minipage}[c]{0.03\textwidth}
\begin{sideways}\tiny $\,\,\nicefrac{1}{2}$\end{sideways}
\end{minipage}\hspace{-.1em}
\begin{minipage}[c]{0.9\textwidth}
\cfboxR{0.5pt}{black}{\includegraphics[width=.132\columnwidth]{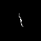}}\hspace{-.23em}
\cfboxR{0.5pt}{black}{\includegraphics[width=.132\columnwidth]{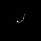}}\hspace{-.23em}
\cfboxR{0.5pt}{black}{\includegraphics[width=.132\columnwidth]{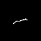}}\hspace{-.23em}
\cfboxR{0.5pt}{black}{\includegraphics[width=.132\columnwidth]{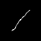}}\hspace{-.23em}
\cfboxR{0.5pt}{black}{\includegraphics[width=.132\columnwidth]{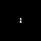}}\hspace{-.23em}
\cfboxR{0.5pt}{black}{\includegraphics[width=.132\columnwidth]{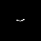}}\hspace{-.23em}
\cfboxR{0.5pt}{black}{\includegraphics[width=.132\columnwidth]{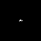}}

\vspace{0.1em}

\cfboxR{0.5pt}{black}{\includegraphics[width=.132\columnwidth]{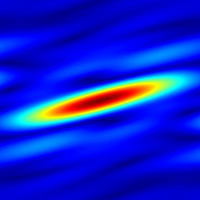}}\hspace{-.23em}
\cfboxR{0.5pt}{black}{\includegraphics[width=.132\columnwidth]{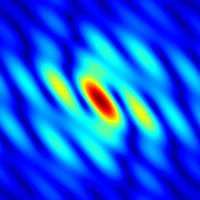}}\hspace{-.23em}
\cfboxR{0.5pt}{black}{\includegraphics[width=.132\columnwidth]{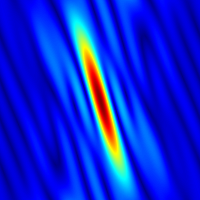}}\hspace{-.23em}
\cfboxR{0.5pt}{black}{\includegraphics[width=.132\columnwidth]{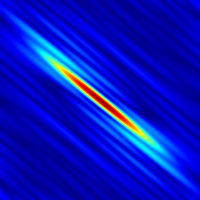}}\hspace{-.23em}
\cfboxR{0.5pt}{black}{\includegraphics[width=.132\columnwidth]{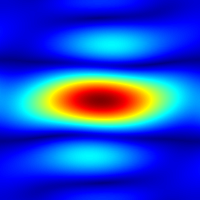}}\hspace{-.23em}
\cfboxR{0.5pt}{black}{\includegraphics[width=.132\columnwidth]{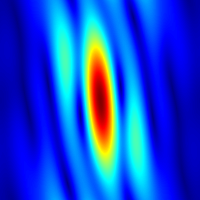}}\hspace{-.23em}
\cfboxR{0.5pt}{black}{\includegraphics[width=.132\columnwidth]{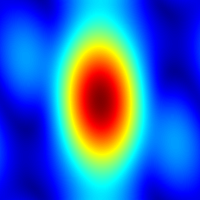}}
\end{minipage}

\vspace{.4em}

\begin{minipage}[c]{0.03\textwidth}
\begin{sideways}\tiny $\,\,\nicefrac{2}{3}$\end{sideways}
\end{minipage}\hspace{-.1em}
\begin{minipage}[c]{0.9\textwidth}
\cfboxR{0.5pt}{black}{\includegraphics[width=.132\columnwidth]{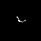}}\hspace{-.23em}
\cfboxR{0.5pt}{black}{\includegraphics[width=.132\columnwidth]{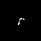}}\hspace{-.23em}
\cfboxR{0.5pt}{black}{\includegraphics[width=.132\columnwidth]{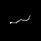}}\hspace{-.23em}
\cfboxR{0.5pt}{black}{\includegraphics[width=.132\columnwidth]{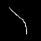}}\hspace{-.23em}
\cfboxR{0.5pt}{black}{\includegraphics[width=.132\columnwidth]{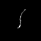}}\hspace{-.23em}
\cfboxR{0.5pt}{black}{\includegraphics[width=.132\columnwidth]{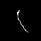}}\hspace{-.23em}
\cfboxR{0.5pt}{black}{\includegraphics[width=.132\columnwidth]{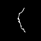}}

\vspace{0.1em}

\cfboxR{0.5pt}{black}{\includegraphics[width=.132\columnwidth]{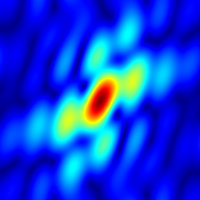}}\hspace{-.23em}
\cfboxR{0.5pt}{black}{\includegraphics[width=.132\columnwidth]{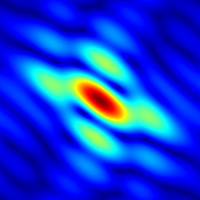}}\hspace{-.23em}
\cfboxR{0.5pt}{black}{\includegraphics[width=.132\columnwidth]{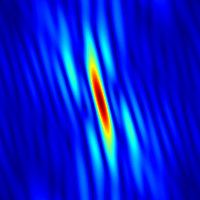}}\hspace{-.23em}
\cfboxR{0.5pt}{black}{\includegraphics[width=.132\columnwidth]{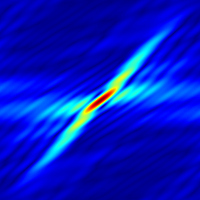}}\hspace{-.23em}
\cfboxR{0.5pt}{black}{\includegraphics[width=.132\columnwidth]{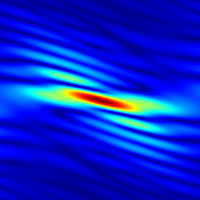}}\hspace{-.23em}
\cfboxR{0.5pt}{black}{\includegraphics[width=.132\columnwidth]{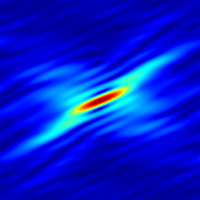}}\hspace{-.23em}
\cfboxR{0.5pt}{black}{\includegraphics[width=.132\columnwidth]{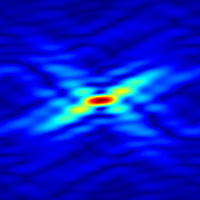}}
\end{minipage}

\vspace{.6em}

\scriptsize (a) Examples of simulated kernels  (exposure $T_\text{exp}$)

\vspace{1.5em}

\begin{minipage}[c]{0.18\textwidth}
\cfboxR{0.5pt}{black}{\includegraphics[width=\textwidth]{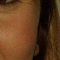}}

\ssmall $s=0.01$ 

\end{minipage}
\begin{minipage}[c]{0.18\textwidth}
\cfboxR{0.5pt}{black}{\includegraphics[width=\textwidth]{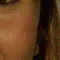}}

\ssmall $s=0.02$ 

\end{minipage}
\begin{minipage}[c]{0.18\textwidth}
\cfboxR{0.5pt}{black}{\includegraphics[width=\textwidth]{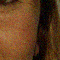}}

\ssmall $s=0.04$ 

\end{minipage}
\begin{minipage}[c]{0.18\textwidth}
\cfboxR{0.5pt}{black}{\includegraphics[width=\textwidth]{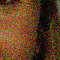}}

\ssmall $s=0.08$ 

\end{minipage}
\begin{minipage}[c]{0.18\textwidth}
\cfboxR{0.5pt}{black}{\includegraphics[width=\textwidth]{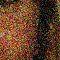}}

\ssmall $s=0.16$ 

\end{minipage}
\vspace{1em}

\scriptsize (b) Examples of noisy simulations  (noise level $s$)

\end{center}

\end{minipage}
\begin{minipage}[c]{0.7\textwidth}
\begin{minipage}[c]{\textwidth}
\centering

\includegraphics[width=0.32\columnwidth]{./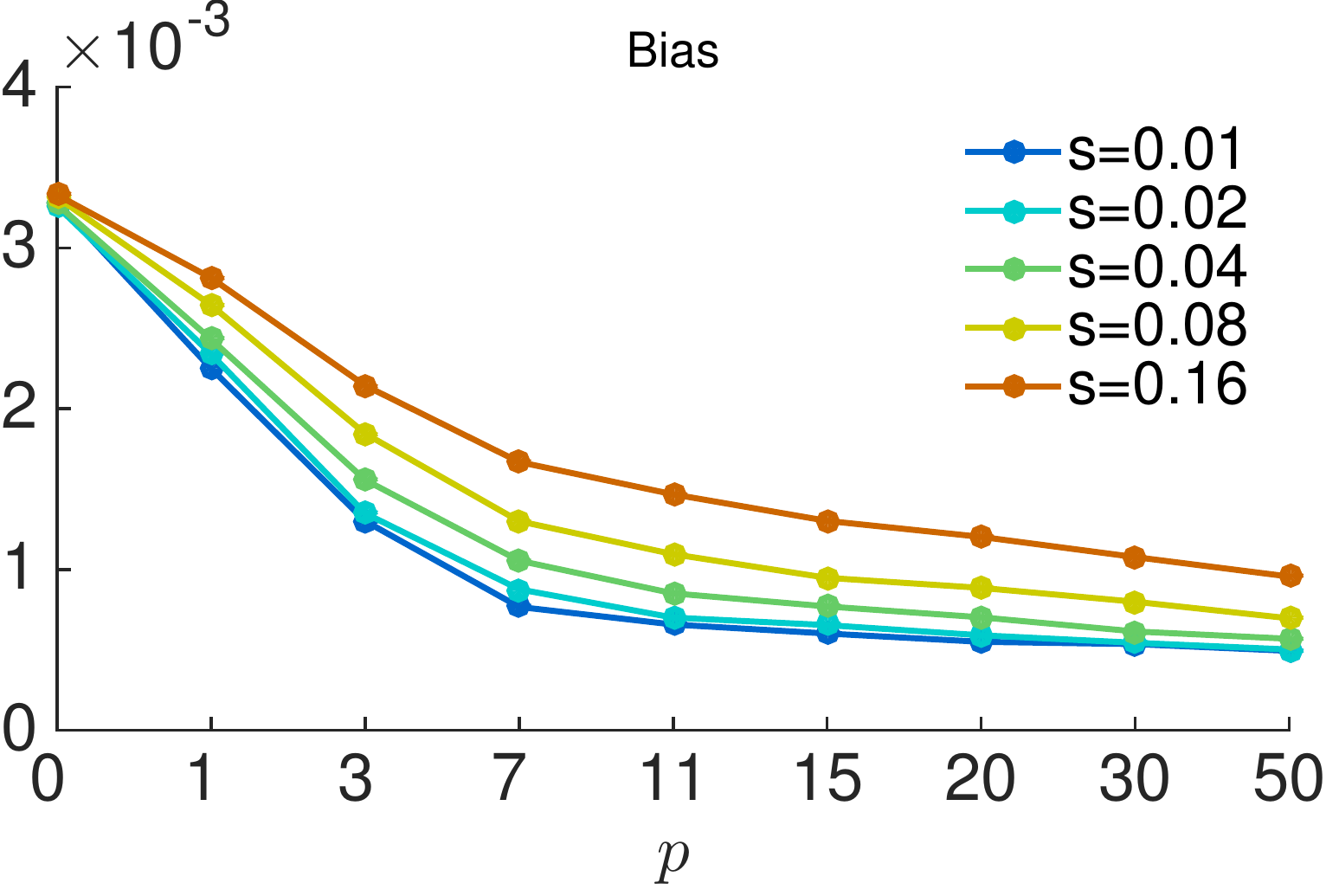}\hspace{.3em} %
\includegraphics[width=0.32\columnwidth]{./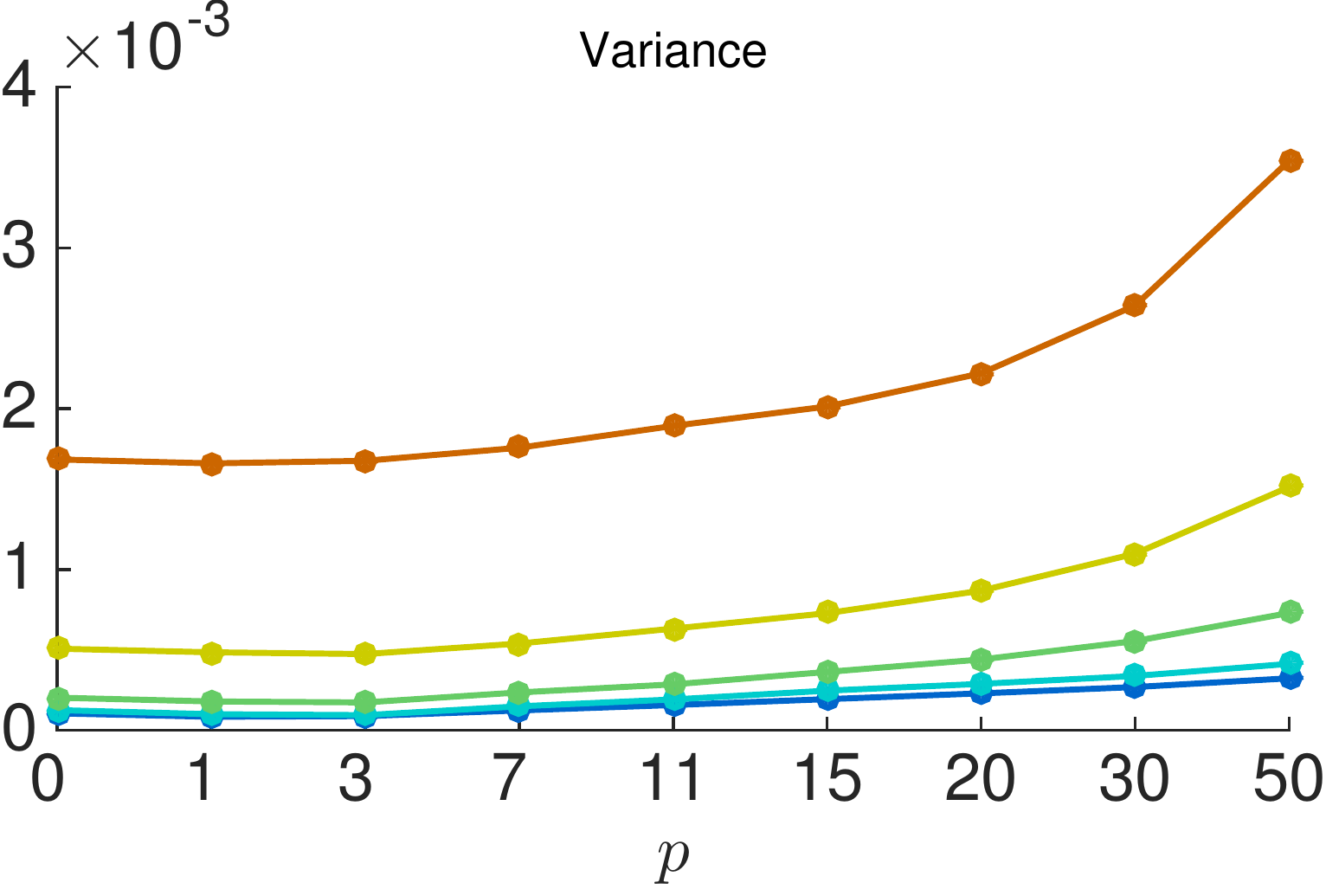}\hspace{.3em}%
\includegraphics[width=0.32\columnwidth]{./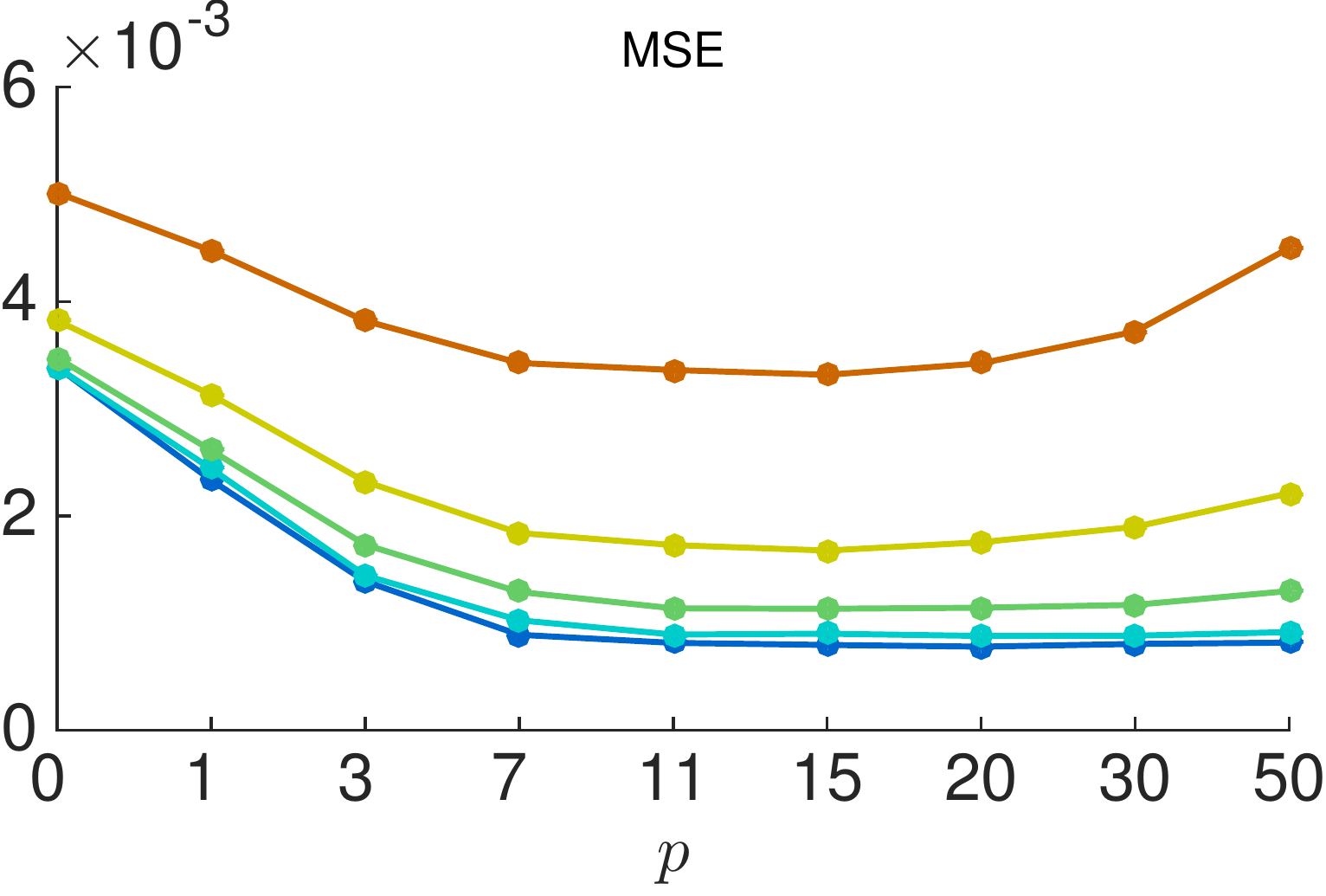}%

\scriptsize (c) Noise level $s$

\end{minipage} \vspace{1.3em}

\begin{minipage}[c]{\textwidth}
\centering

\includegraphics[width=0.32\columnwidth]{./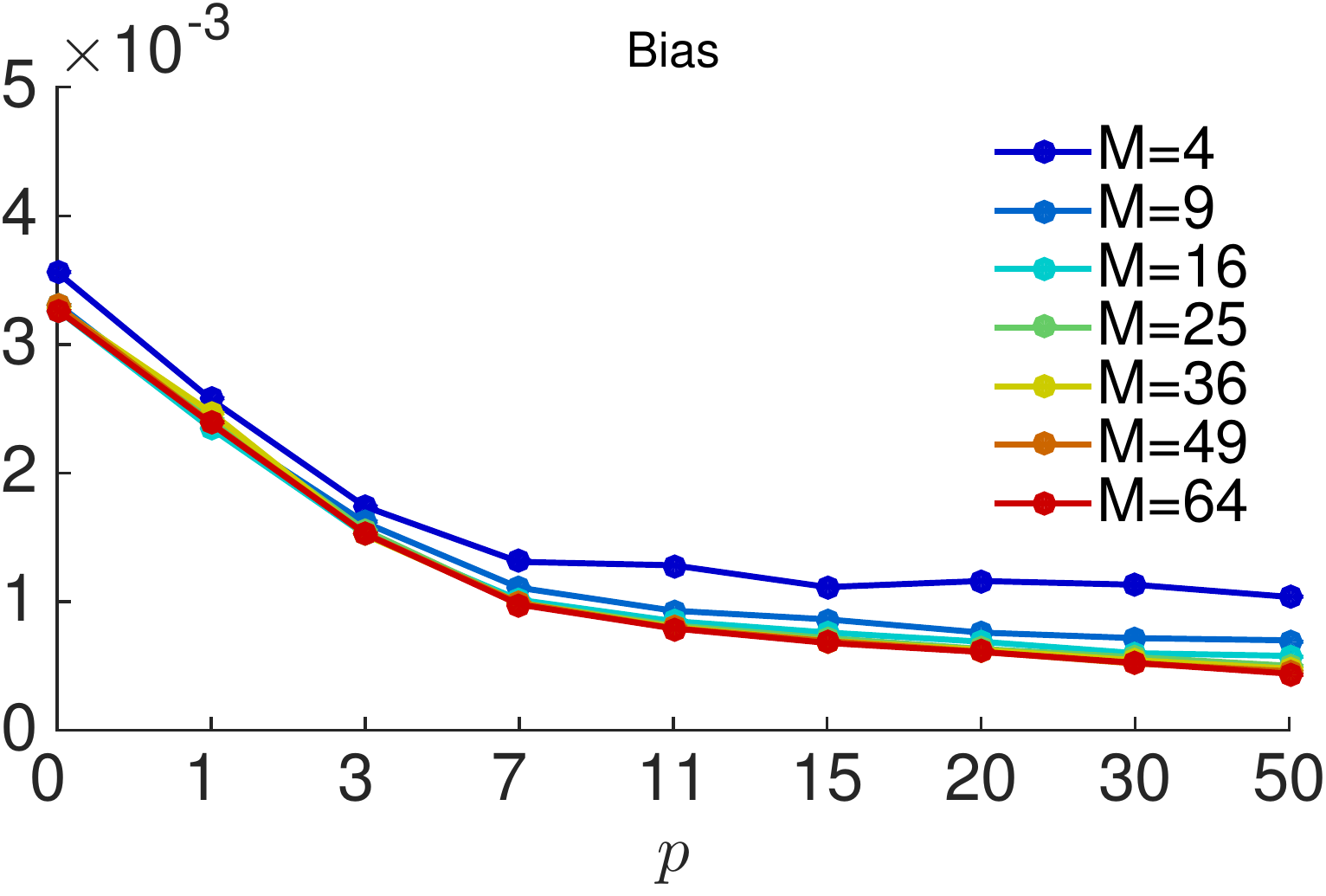}\hspace{.3em}%
\includegraphics[width=0.32\columnwidth]{./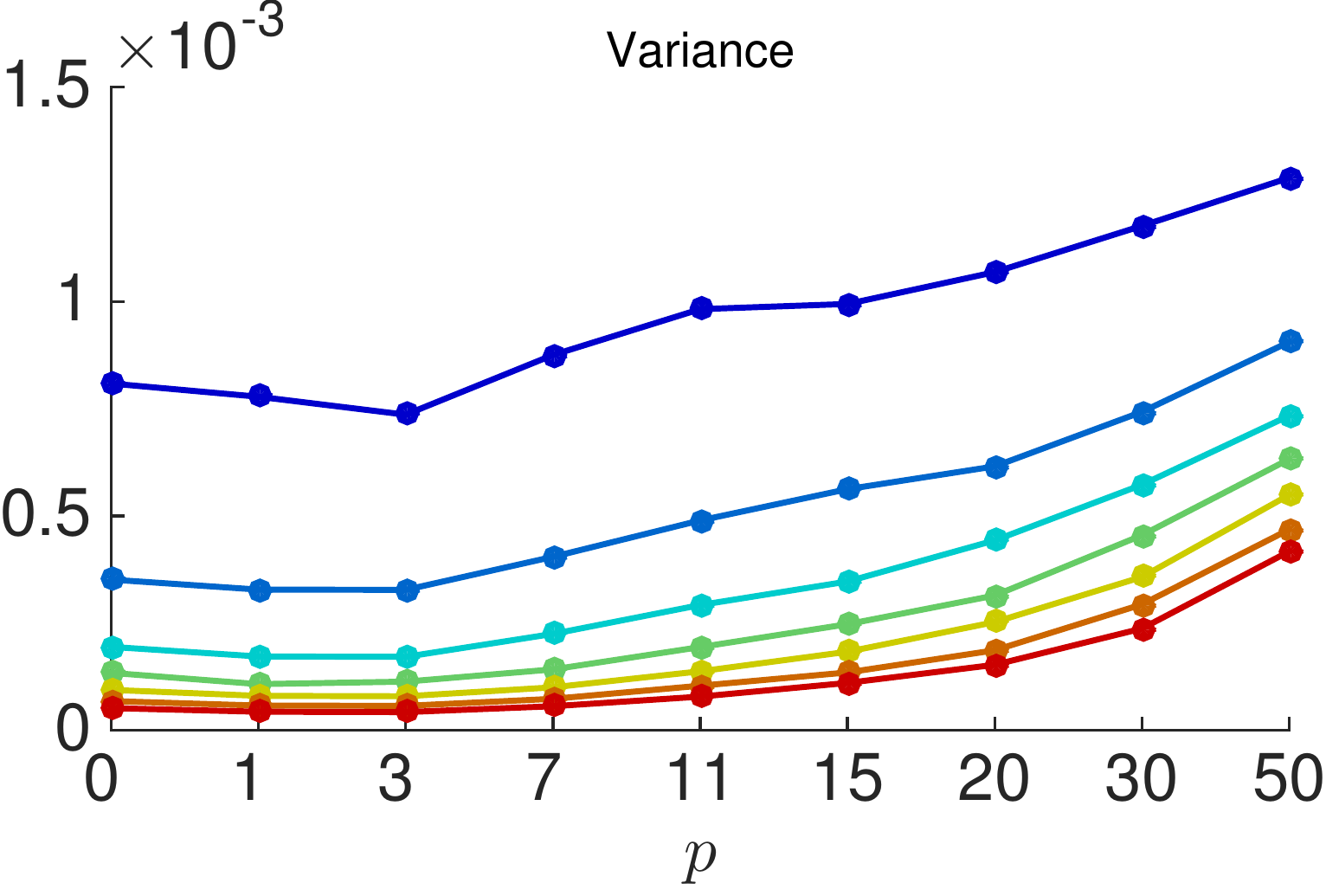}\hspace{.3em}%
\includegraphics[width=0.32\columnwidth]{./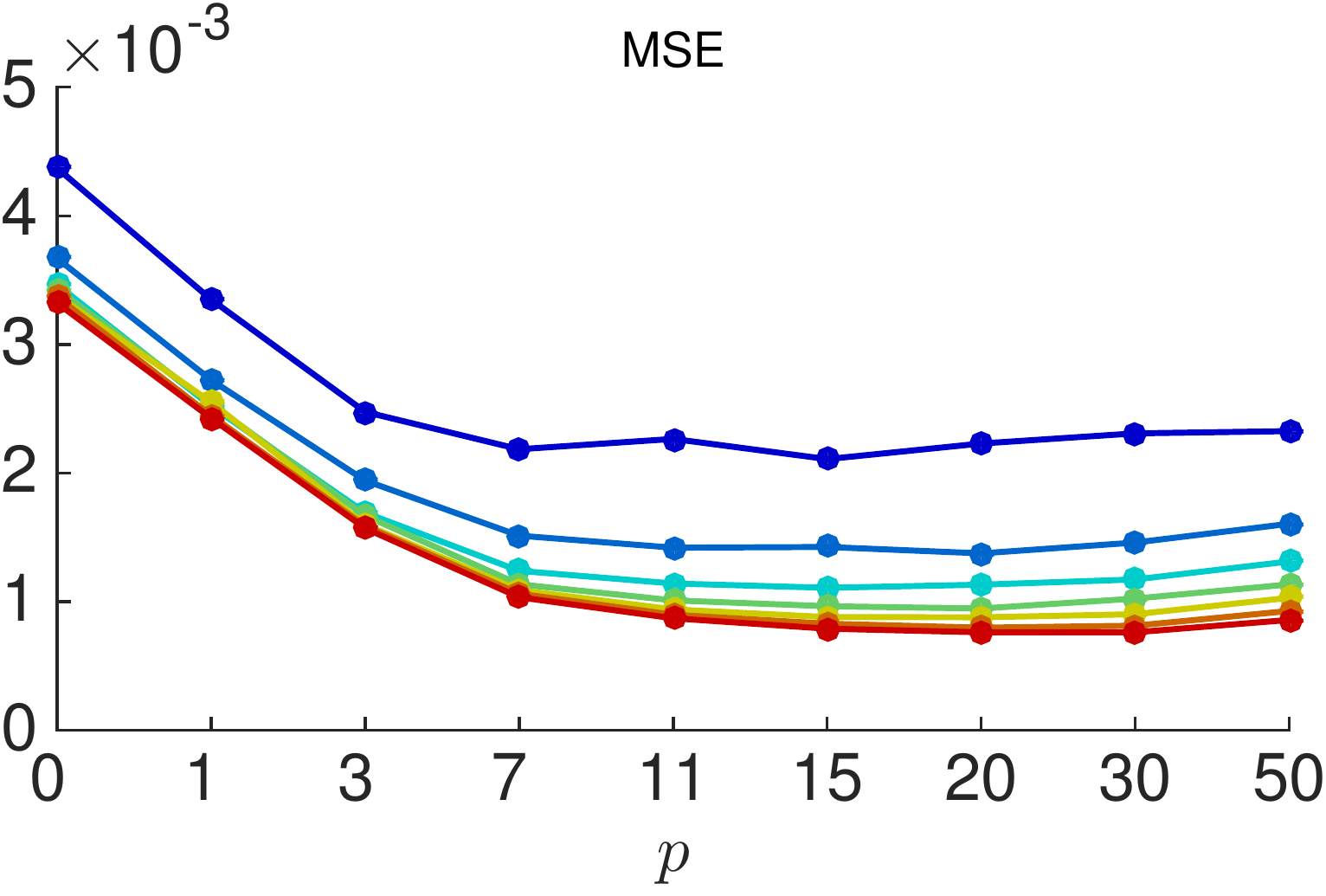}%

\scriptsize (d) Number of images $M$

\end{minipage} \vspace{1.3em}

\begin{minipage}[c]{\textwidth}
\centering

\includegraphics[width=0.32\columnwidth]{./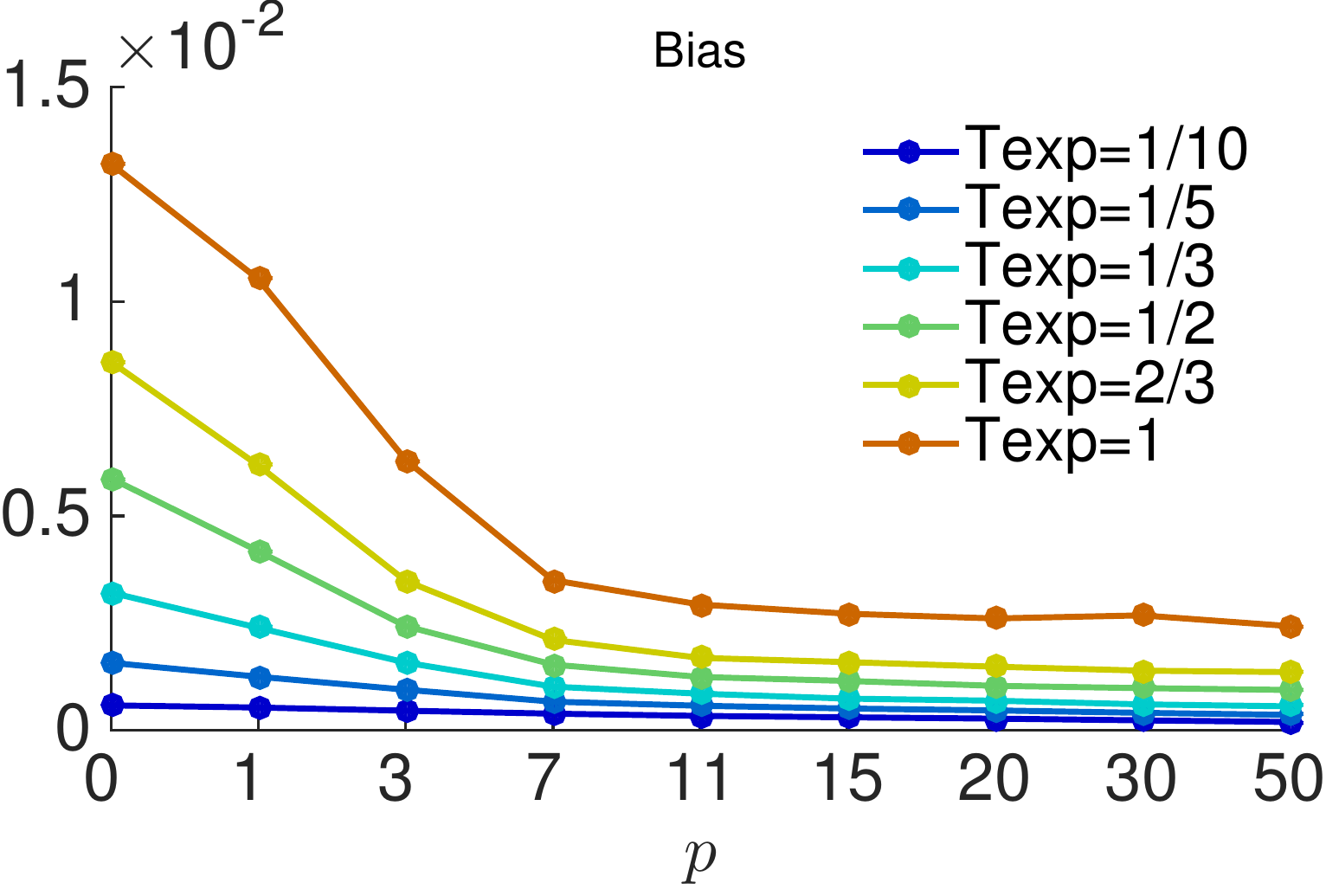}\hspace{.3em}%
\includegraphics[width=0.32\columnwidth]{./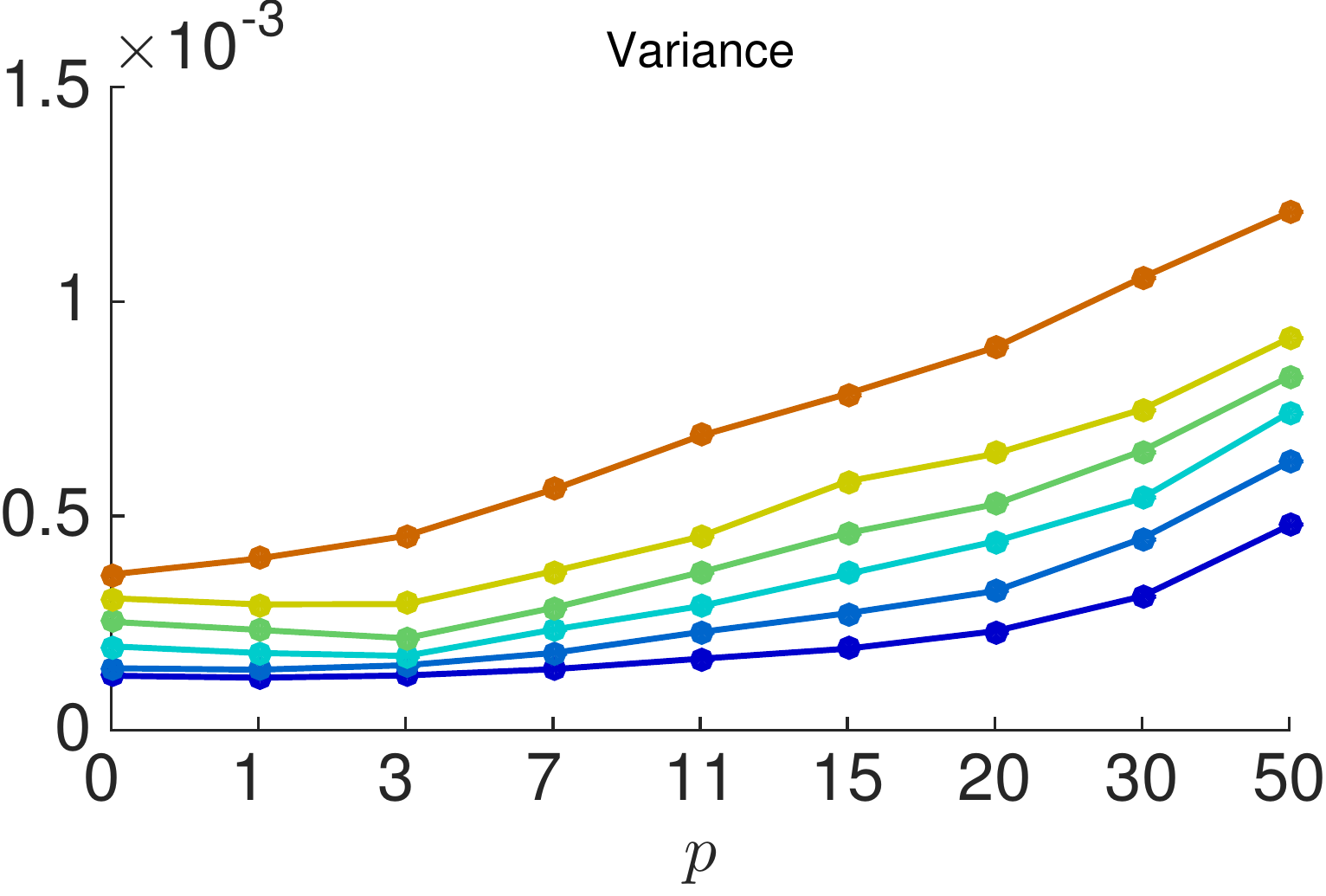}\hspace{.3em}%
\includegraphics[width=0.32\columnwidth]{./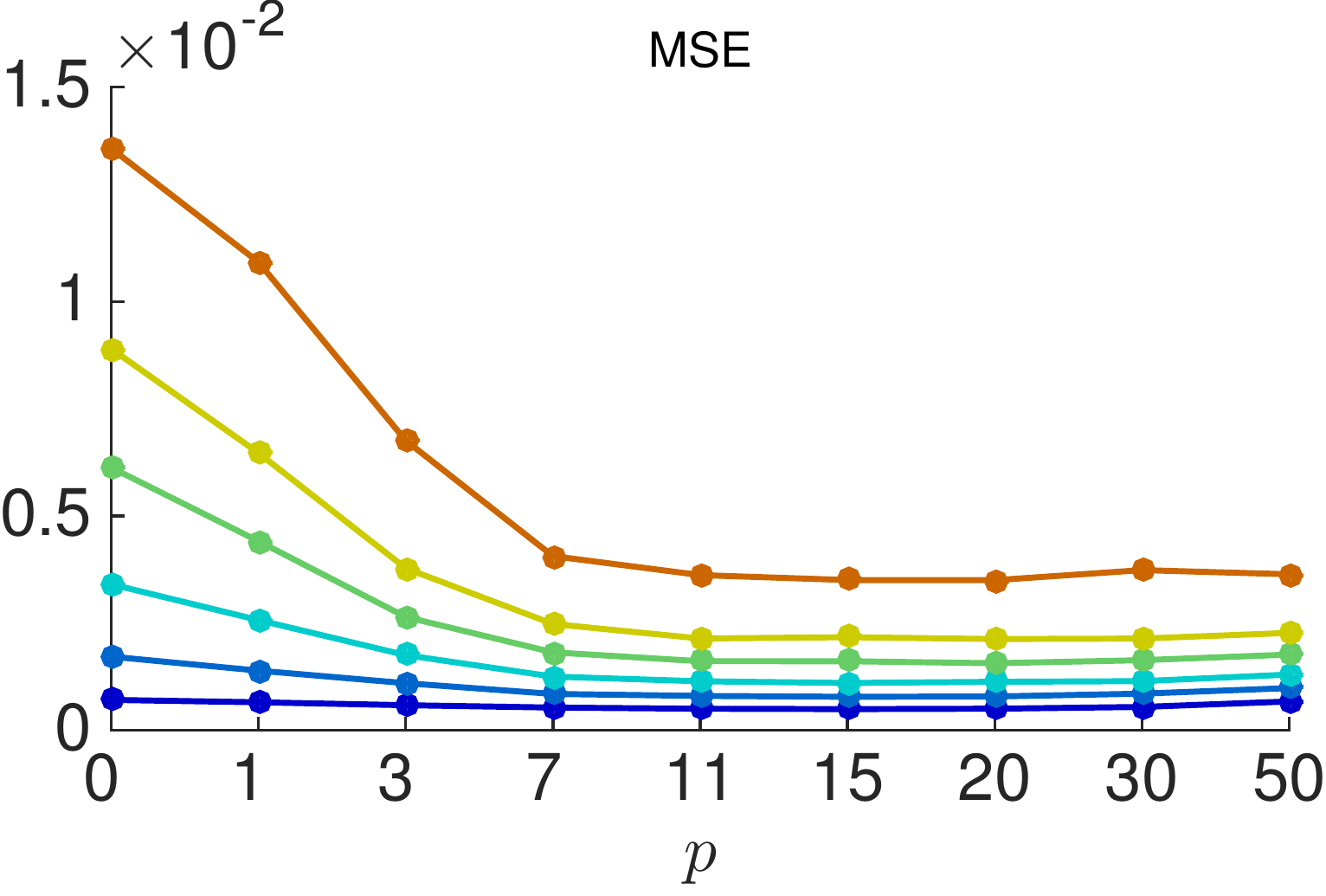}%

\scriptsize (e) Exposure time $T_\text{exp}$

\end{minipage} \vspace{1.3em}

\end{minipage}

\vspace{.5em}

\caption{Bias-Variance tradeoff.
(a) Simulated kernels due to hand tremor following~\cite{gavant2011physiological}.
 Each row shows a set of simulated kernels (left panel) for different exposures $T_\text{exp}=\nicefrac{1}{10},\nicefrac{1}{5},
\nicefrac{1}{3},\nicefrac{1}{2},\nicefrac{2}{3},$ and the respective Fourier spectrum magnitude (bottom panel).
The parameter $T_\text{exp}$ controls the amount of expected blur. (b) Simulated noise levels. 
Average algorithm performance with respect to  $p$  when
 changing (c) the amount of noise $s$ in the input images, (d) the number of images in the burst $M$, and
(e) the exposure time $T_\text{exp}$. The rest of the parameters are set to $M=16$, $s = 0.04$ and 
$T_\text{exp}=\nicefrac{1}{3},$ unless other specified.  With short exposures, the arithmetic 
average ($p=0$) produces the best \textsc{mse} since the images are not blurred. 
The bias does not depend on $M$, but the variance can be significantly reduced by taking
more images (light accumulation procedure). Noise affects the bias and the
 variance terms (with the exception of $p=0$ where the bias is unaffected). The 
\textsc{mse} plots show the existence of a minimum for $p \in [7,30]$,  indicating that the best is to
balance a perfect average and a max pooling. }
\label{fig:biasVariance}
\end{figure*}

\mdG{
\subsection{Impact of Burst Misalignment}
Misalignment of the burst will certainly have an impact on the quality of the  aggregated image.
For the general case where images are noisy but also degraded by anisotropic blur the problem of defining a correct alignment 
is not well defined. 

In what follows we consider that the burst is correctly aligned if each $v_i$ satisfies 
$v_i = u \star k_i + n_i$,
with the blurring kernel $k_i$ having vanishing first moment. That is,
$\int k_i(\bx) \bx d\bx = 0$. 
This constraint on the kernel implies that the kernel does not drift the image $u$, so each $v_i$ is aligned to~$u$ (see Appendix).

To evaluate the impact of misalignment, we considered the particular setting in which the error due to registration is a pure shift.
Although being a simplified case, this helps to understand the general algorithm performance
as a function of the parameter $p$ and the level of misalignment.
In this particular case the translation error can be absorbed in a phase shift of the kernel.
Although the weights will not change, since they only depend on the Fourier magnitude, the average in \eqref{eq:fourierWeightsOrig} 
will be out-of-phase due to the misalignment of the images $v_i$. 
This will  result in blur but also on image artifacts. 

We carried out a similar empirical analysis as before, where several
thousands kernels were simulated and centered by forcing to have vanishing first moment. 
We introduce a Gaussian random \textsc{2d} shift to each blurring kernel (i.e., the first moment of the kernel is shifted)
with zero mean and standard deviation controlled by a parameter $\epsilon$ to simulate the misalignment.  Figure~\ref{fig:missRegistration} shows the algorithm performance when changing the amountof error in the registration (from 0--5 pixels in average) and the value of $p$ controlling the \textsc{fba} aggregaion.
As the alignment error is more significant, the mean square error increases as expected. When the misalignment error is large, the best is to use low $p$ values (close to arithmetic mean).
On the other hand, when the misregistration error is low, large $p$ values will produce the best performance in terms of reconstruction error (\textsc{mse}).
For shift errors in the order of 1~pixel, the $p$ value giving the minimum \textsc{mse} is in $p \in [7,20]$. In general, the bias is always reduced with $p$ while the variance is increased. This is a direct consequence of the fact that smoothness is reduced when increasing the $p$ value (see in Fig.~\ref{fig:weightsTorres} how the energy is concentrated in fewer images as $p\to \infty$, indicating less smoothing). 
}

\begin{figure}[hptb]
\begin{minipage}[c]{0.49\columnwidth}
\centering
{\scriptsize

{\setlength{\tabcolsep}{0.5em}
\begin{tabular}{ c c }
  $\epsilon$ & Avg. error (pixels) \\\hline
  0                     & 0 \\
  \nicefrac{1}{2} & 0.6  \\
  1   & 1.3  \\
  2   & 2.5  \\
  3   & 3.8 \\
  4   & 5.0 
\end{tabular}
}}

\vspace{2em}

\includegraphics[width=.9\textwidth]{./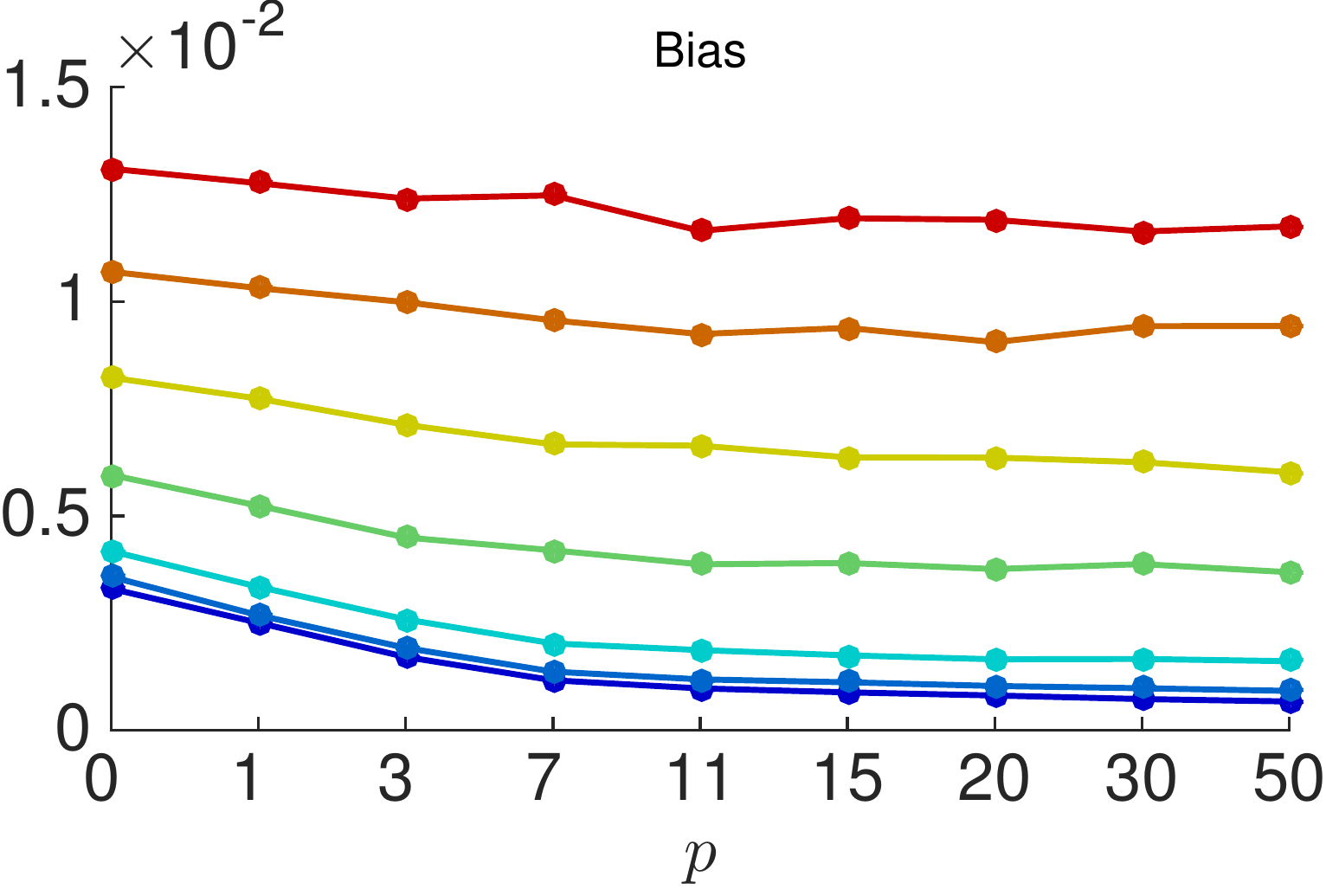}

\end{minipage}
\begin{minipage}[c]{0.49\columnwidth}

\centering
\includegraphics[width=.9\textwidth]{./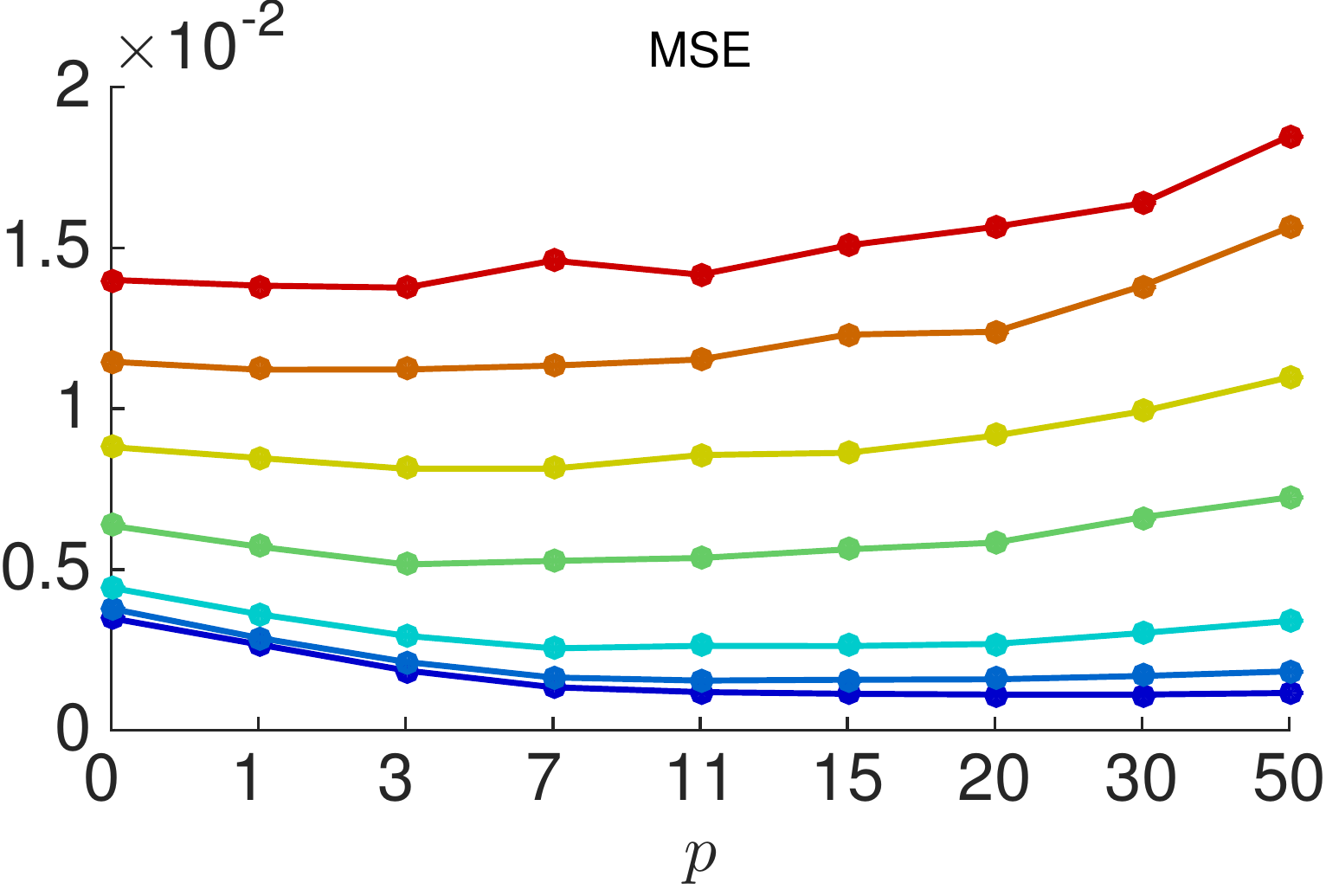}\vspace{.7em}

\includegraphics[width=.9 \textwidth]{./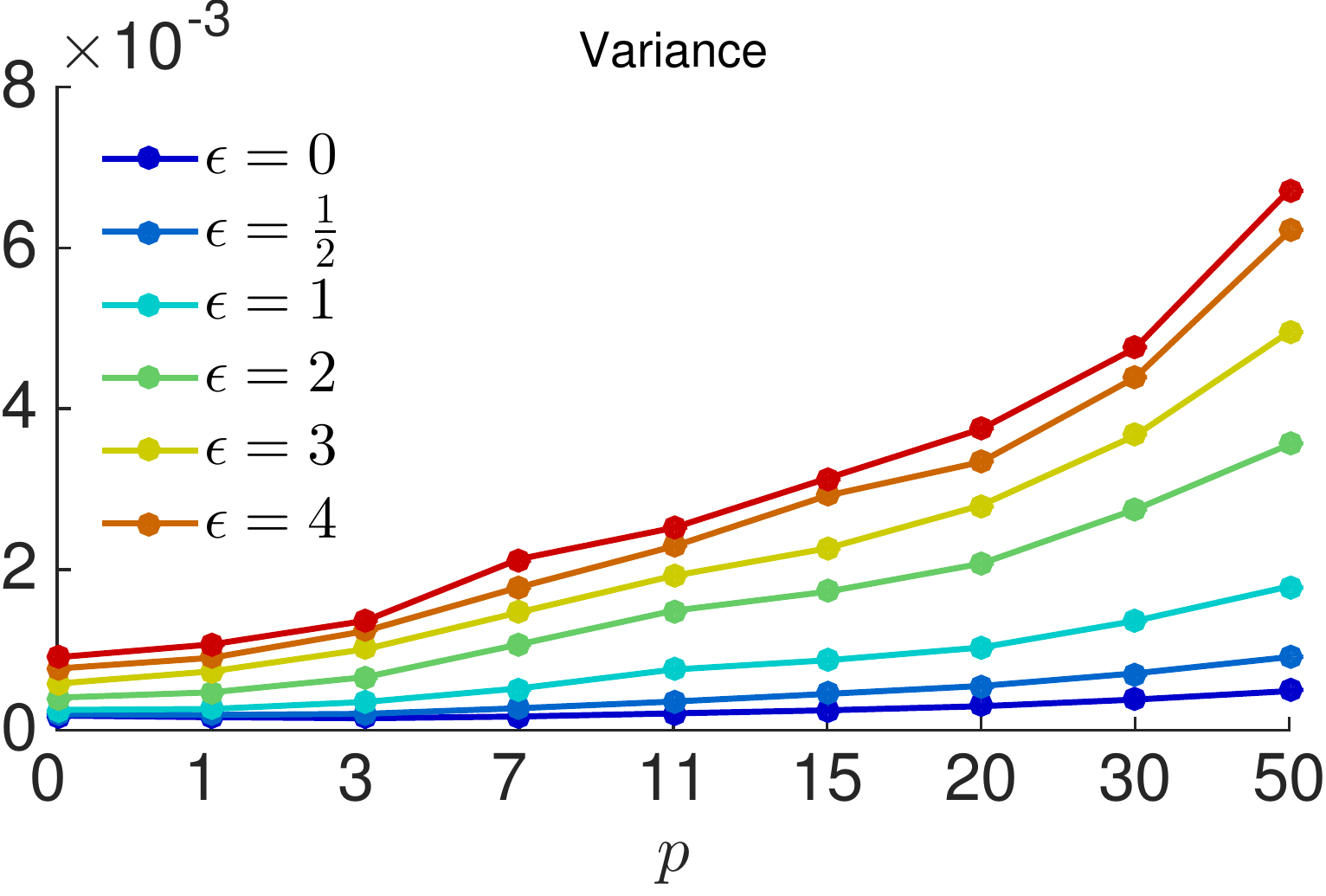}

\scriptsize (e) Registration error $\epsilon$

\end{minipage}

\mdG{\caption{Impact of misalignment.
We simulated shifts following a \textsc{2d} Gaussian distribution of zero mean and standard deviation $\epsilon$. The average displacement is approximately $1.25\epsilon$  (shown in the table).  The top-right plot shows the average algorithm performance (in \textsc{mse}) with respect to  $p$  when
 changing the amount of registration error $\epsilon$.  
 Misalignment deteriorates the algorithm  performance. 
 When the error is large ($\epsilon\ge2$), low $p$ values (aggregation is close to 
 arithmetic  average) produces the best \textsc{mse}, while when the registration
 error is low ($\epsilon\le1$), large $p$ values produce lower reconstruction errors. 
 Using a large $p$ produces results that are less smoothed, so in general, variance is increased while bias is reduced.
}

\label{fig:missRegistration}
}
\end{figure}

\subsection{Comparison to Classical  Lucky Imaging Techniques}

Lucky imaging techniques, very popular when imaging through atmospheric turbulence,
seek to select and average  the sharpest images in a video.
The Fourier weighting scheme can be seen as a generalization
of the lucky imaging family. Lucky imaging selects (or weights more)
frames/regions that are sharper but without paying attention to the 
characteristics of the blur. Thus, when dealing with camera shake, where 
frames are randomly blurred in different directions, classical lucky imaging 
techniques will have a suboptimal performance. In contrast, trying to detect 
the Fourier frequencies that were less affected by the blur and then build an image
with them makes much more sense. This is what the Fourier Burst Accumulation seeks.

\mdG{Recently, Garrel et al.~\cite{garrel2012highly} introduced a selection scheme for astronomic images, based on the relative 
strength of signal for each frequency in the Fourier domain. 
Given a spatial frequency, only the Fourier values having largest magnitude are averaged. The user parameter is the percentage
of largest frames to be averaged (typically ranging from 1\%--10\%).  This method was developed for the particular case of of astronomical
images, where astronomers capture videos having thousands of frames (for example 9000 frames in \cite{garrel2012highly}).
Our algorithm is built on similar ideas, but we do not specify a constant number of frames to be averaged. We let
the Fourier weighting scheme select the total contribution of each frame depending on the relative strength of the Fourier magnitudes.
The authors showed that this generalized lucky imaging procedure produces astronomical images of higher resolution and better signal to noise ratio
than traditional lucky image fusion schemes, further stressing our findings.
}

\begin{figure*}[htpb]
\centering

\begin{minipage}[c]{.0770\textwidth}
\centering
\cfboxR{1.3pt}{red1}{\includegraphics[width=1.00\textwidth]{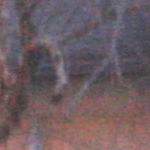}}
\end{minipage}
\begin{minipage}[c]{.0770\textwidth}
\centering
\cfboxR{1.0pt}{red1}{\includegraphics[width=1.00\textwidth]{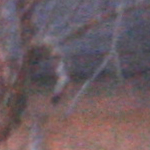}} 
\end{minipage}
\begin{minipage}[c]{.0770\textwidth}
\centering
\cfboxR{1.3pt}{red1}{\includegraphics[width=1.00\textwidth]{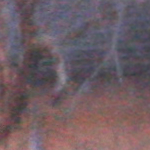}}
\end{minipage}
\begin{minipage}[c]{.0770\textwidth}
\centering
\cfboxR{1.3pt}{red1}{\includegraphics[width=1.00\textwidth]{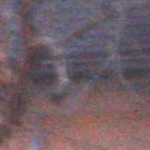}}
\end{minipage}
\begin{minipage}[c]{.0770\textwidth}
\centering
\cfboxR{1.3pt}{red1}{\includegraphics[width=1.00\textwidth]{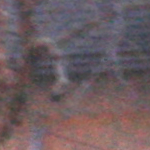}}
\end{minipage}
\begin{minipage}[c]{.0770\textwidth}
\centering
\cfboxR{1.3pt}{red1}{\includegraphics[width=1.00\textwidth]{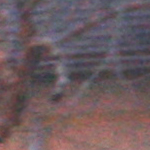}}
\end{minipage}
\begin{minipage}[c]{.0770\textwidth}
\centering
\cfboxR{1.3pt}{red1}{\includegraphics[width=1.00\textwidth]{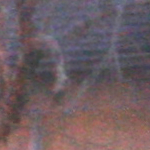}}
\end{minipage}
\begin{minipage}[c]{.0770\textwidth}
\centering
\cfboxR{1.3pt}{red1}{\includegraphics[width=1.00\textwidth]{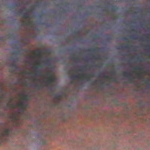}}
\end{minipage}
\begin{minipage}[c]{.0770\textwidth}
\centering
\cfboxR{1.3pt}{red1}{\includegraphics[width=1.00\textwidth]{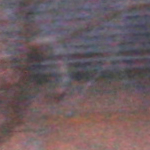}}
\end{minipage}
\begin{minipage}[c]{.0770\textwidth}
\centering
\cfboxR{1.3pt}{red1}{\includegraphics[width=1.00\textwidth]{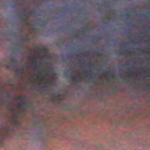}}
\end{minipage}
\begin{minipage}[c]{.0770\textwidth}
\centering
\cfboxR{1.3pt}{red1}{\includegraphics[width=1.00\textwidth]{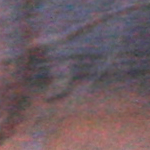}}
\end{minipage}
\begin{minipage}[c]{.0770\textwidth}
\centering
\cfboxR{1.3pt}{red1}{\includegraphics[width=1.00\textwidth]{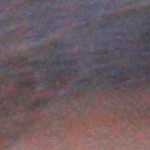}}
\end{minipage}

\vspace{0.2em}

{\ssmall Input frames crops 1--12 ordered from highest (left) to lowest (right) gradient energy}

\vspace{0.4em}

\begin{minipage}[c]{.155\textwidth}
\centering

\begin{tikzpicture}
    \node[anchor=north west,inner sep=0] (image) at (0,0) {\cfboxR{1.0pt}{black}{
               \includegraphics[width=0.99\textwidth]{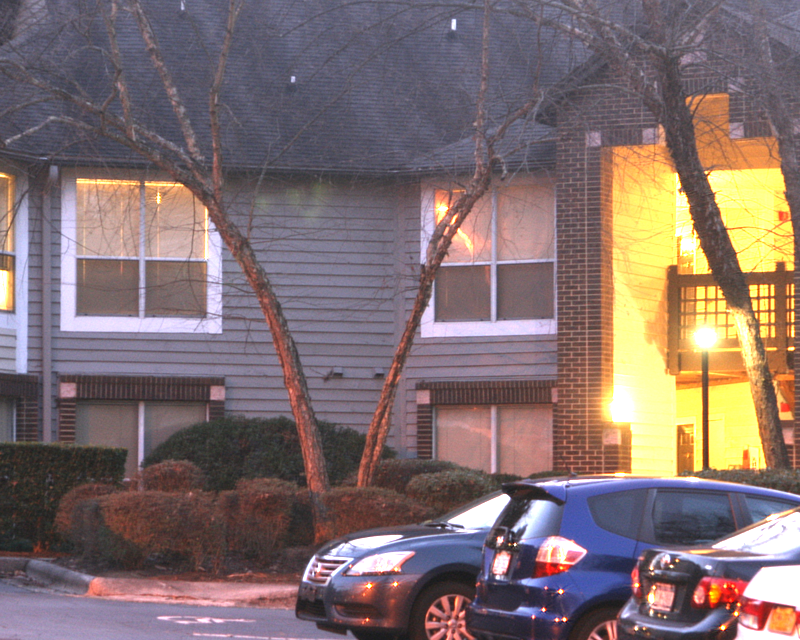}}};
    \begin{scope}[x={(image.north east)},y={(image.south west)}]
        \draw[red1, thick] (0.600,0.2188) rectangle (0.675,0.3125); %
    \end{scope}
\end{tikzpicture}

\vspace{1em}
\end{minipage}
\begin{minipage}[c]{.134\textwidth}
\centering
\cfboxR{1.2pt}{black}{\includegraphics[width=0.995\textwidth]{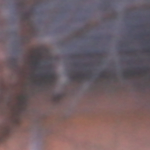}} \vspace{-1.5em}

{\ssmall Arithmetic average}
\end{minipage}
\begin{minipage}[c]{.134\textwidth}
\centering
\cfboxR{1.2pt}{black}{\includegraphics[width=0.995\textwidth]{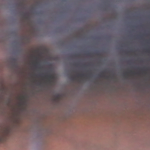}} \vspace{-1.5em}

{\ssmall Sharpness--selectivity~\cite{joshi2010seeing}}
\end{minipage}
\begin{minipage}[c]{.134\textwidth}
\centering
\cfboxR{1.2pt}{black}{\includegraphics[width=0.995\textwidth]{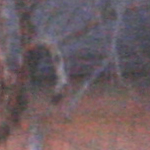}} \vspace{-1.5em}

{\ssmall \textsc{lfa} $1$}\vspace{.1em}
\end{minipage}
\begin{minipage}[c]{.134\textwidth}
\centering
\cfboxR{1.2pt}{black}{\includegraphics[width=0.995\textwidth]{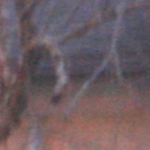}} \vspace{-1.5em}

{\ssmall \textsc{lfa} $2$}\vspace{.1em}
\end{minipage}
\begin{minipage}[c]{.134\textwidth}
\centering
\cfboxR{1.2pt}{black}{\includegraphics[width=0.995\textwidth]{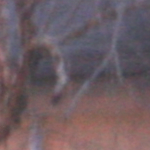}} \vspace{-1.5em}

{\ssmall \textsc{lfa} $3$}\vspace{.1em}
\end{minipage}
\begin{minipage}[c]{.134\textwidth}
\centering
\cfboxR{1.3pt}{blue1}{\includegraphics[width=0.995\textwidth]{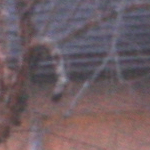}} \vspace{-1.5em}

{\ssmall \textbf{\textsc{fba} $p=11$}}
\end{minipage}

\caption{Comparison to lucky imaging techniques on a  real data burst (\textsf{building}). The arithmetic average produces the best noise reduction but completely removes the details.
The lucky imaging algorithm proposed in~\cite{joshi2010seeing} (sharpness--selectivity) slightly improves the arithmetic average. In this particular experiment
there is blur in different directions and no single frame sharp in all directions. The sharpest frame detected (\textsc{lfa} 1) is still blurred and significantly noisier than the
result given by the Fourier Burst Accumulation with $p=11$ (final sharpening step has been disabled to do a fair comparison). As more sharp frames are averaged
(\textsc{lfa} 2 and \textsc{lfa} 3) noise is reduced at the expense of blurring the final image.}
\label{fig:LuckyParking}

\end{figure*}

Traditional Lucky imaging techniques, are based on evaluating the quality of a given frame.
In astronomy, the most common sharpness measure is the 
intensity of the brightest spot, being a direct measure 
of the concentration of the system's point spread function. 
However, this is not applicable in the context of classical photography. 

Haro et al.~\cite{haro2012photographing} propose to locally estimate the 
level of sharpness using the integral of the energy of 
the image gradient in a surrounding region (i.e., the Dirichlet energy).
If all the images in the series have similar noise level, the Dirichlet energy 
provides a direct way of ordering the images according to  sharpness.  
(i.e., large Dirichlet energy implies sharpness).
 
Let $v_i$ $i=1,\ldots,M$, be a series of registered images of the same scene,
the per-pixel Dirichlet energy weights are~\cite{haro2012photographing}
\begin{align}
w^i_\text{dirichlet} (\bx) = \int_{\Omega_\bx} | \nabla v_i (\bx) |^2 d\bx,
\label{eq:weightsGloria}
\end{align}
where $\Omega_\bx$ is a block of $(100\times100)$ pixels around the pixel $\bx$.
In practice, the Dirichlet weights vary poorly with camera shake blur, so although blurry images 
will contribute less to the final image, their contribution will be still significant.

Joshi and Cohen~\cite{joshi2010seeing} propose to use a combination of sharpness and selectivity per-pixel weights
to determine the contribution of each pixel to the restored image. 
The  sharpness weight is built from the local intensity of the image Laplacian and it is
pondered by a local selectivity term. The selectivity term enforces more noise reduction
in smooth/flat areas. The final sharpness--selectivity weights are\footnote{In the present analysis we did not consider
the terms due to the resampling error nor the sensor dust that were originally included in the
formulation presented in~\cite{joshi2010seeing}.}
\begin{align}
w^i_\text{sharp-sel}(\bx) =   w^i_\text{tex} (\bx) ^ {\gamma (\bx)}
\label{eq:joshiWeights}
\end{align}
where $w^i_\text{tex}(\bx) = \frac{ | \Delta v_i (\bx)| }{ \max_\bx  | \Delta v_i (\bx)| }$ is the local sharpness measure and
$\gamma (\bx) = \lambda \nicefrac{ | \Delta \bar{v} (\bx)| }{ \max_\bx  | \Delta \bar{v} (\bx)| }$ is the selectivity term controlled
by a parameter $\lambda$.  We have denoted by $\bar{v}$ the average of all input frames.

In all cases, the final image is computed as a per-pixel weighted average of the input images,
$$
v_\text{lucky} (\bx) = \frac{\sum_{i=1}^M w^i  (\bx) \cdot v_i (\bx)}{ \sum_{i=1}^n w^i(\bx)}.
$$

Figure~\ref{fig:LuckyParking} shows a comparison of these two approaches to the proposed Fourier Burst Accumulation.
We did not include a final sharpening step to faithfully compare all the approaches, as this last step could be included
in all of them (see Section~\ref{sec:algodetails}).
Since the weighted averaged image using~\eqref{eq:weightsGloria} did not show any difference with respect to the arithmetic average, we
instead used the total Dirichlet energy to rank all the input images and then average only the top $K$  (the sharpest ones).
We named this method Lucky frame average (\textsc{lfa}) and tested different values of $K=1, 2, 3$.

The weights given by~\eqref{eq:joshiWeights} lead to an over-smoothed image with significant less noise. 
The sharpest frame detected (\textsc{lfa}, $K=1$) is still blurred and noisier
 than the result given by the Fourier Burst Accumulation with $p=11$. 
As more lucky frames are averaged (\textsc{lfa} 2 and \textsc{lfa} 3), more noise is eliminated at the expense of introducing blur in the final image.

\section{Algorithm Implementation}
\label{sec:algodetails}

The proposed burst restoration algorithm is built on three main blocks:
Burst Registration, Fourier Burst Accumulation, and Noise Aware Sharpening as a post-processing.
These are described in what follows. \\

\noindent \textbf{Burst Registration.}
There are several ways of registering images (see \cite{zitova2003image} for a survey). In this work, we use image 
correspondences to estimate the dominant homography relating every image of the burst and a reference 
image (the first one in the burst). The homography assumption is valid if
the scene is planar (or far from the camera) or the viewpoint location is fixed, e.g., the camera only rotates around its optical center.
Image correspondences are found using SIFT features~\cite{Lowe2004}  and then filtered out through
the \textsc{orsa} algorithm~\cite{moisan2012ipol}, a variant of the so called \textsc{ransac} method~\cite{fischler1981random}.
To mitigate the effect of the camera shake blur we only detect SIFT features having a larger scale than $\sigma_\text{min} = 1.8$.

Recall that as in prior art, e.g.,~\cite{park2014gyro}, the registration can be done with the gyroscope and accelerometer information from the camera.\\

\noindent \textbf{Fourier Burst Accumulation.}
Given the registered images $\{v_i\}_{i=1}^M$ we directly compute the corresponding Fourier transforms  $\{\hat{v}_i\}_{i=1}^M$.
Since camera shake motion kernels have a small spatial support, their Fourier spectrum magnitudes vary very smoothly. 
Thus,  $|\hat{v}_i|$ can be lowpass filtered before computing the weights, that is,
$
|\bar{\hat{v}}_i| =  G_\sigma  |\hat{v}_i|,
$
where $G_\sigma$  is a Gaussian filter of standard deviation $\sigma$. 
The strength of the low pass filter (controlled by the parameter $\sigma$)  should depend
on the assumed motion kernel size (the smaller the kernel the more regular its Fourier spectrum magnitude). 
In our implementation we set $\sigma =  \nicefrac{\text{min}(m_h,m_w)}{k_s}$, where $k_s=50$~pixels 
and the image size is $m_h \times m_w$ pixels.  Although this low pass filter is important, the results are not too sensitive 
to the exact value of $\sigma$ \mdG{(see Figure~\ref{fig:weightsmoothing}).}

The final Fourier burst aggregation is (note that the smoothing is only applied to the weights calculation)
\begin{align}
u_p =  \mathcal{F}^{-1} \left ( \sum_{i=1}^M w_i \cdot \hat{v}_i \right),  \qquad  w_i   = \frac{ |\bar{ \hat{v}}_i |^p }{\sum_{j=1}^M | \bar{\hat{v}}_j |^p}.
\label{eq:fourierWeights}
\end{align}

The extension to color images is straightforward. The accumulation is done channel by channel using
the same Fourier weights for all channels. The weights are computed by arithmetically averaging the Fourier magnitude of the channels 
before the low pass filtering.\\

\begin{figure}[hptb]
\centering

\begin{minipage}[c]{.235\columnwidth}
\begin{center}

\cfboxR{1.3pt}{red1}{\includegraphics[width=1.00\textwidth]{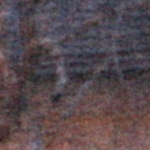}} \vspace{-0.9em}

\begin{tikzpicture}
    \node[anchor=north west,inner sep=0] (image) at (0,0){
              \includegraphics[width=1.03\textwidth]{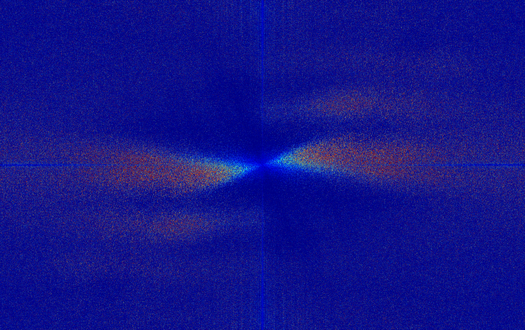}};
    \begin{scope}[x={(image.north east)},y={(image.south west)}]
        \draw[red, thick] (0.5758,0.4318) rectangle (0.6667,0.5682); %
    \end{scope}
\end{tikzpicture} \vspace{-0.9em}

\cfboxR{1.3pt}{red}{\includegraphics[width=1.00\textwidth, height=1.00\textwidth]{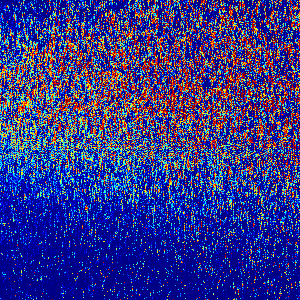}}

\footnotesize no smoothing
\end{center}
\end{minipage}\hspace{.15em}
\begin{minipage}[c]{.235\columnwidth}
\centering
\cfboxR{1.3pt}{red1}{\includegraphics[width=1.00\textwidth]{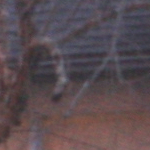}} \vspace{-0.9em}

\begin{tikzpicture}
    \node[anchor=north west,inner sep=0] (image) at (0,0){
              \includegraphics[width=1.03\textwidth]{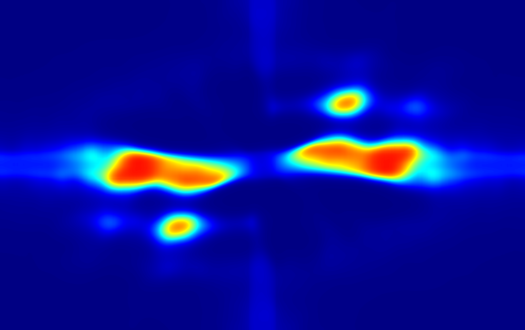}};
    \begin{scope}[x={(image.north east)},y={(image.south west)}]
        \draw[red, thick] (0.5758,0.4318) rectangle (0.6667,0.5682); %
    \end{scope}
\end{tikzpicture} \vspace{-0.9em}

\cfboxR{1.3pt}{red}{\includegraphics[width=1.00\textwidth, height=1.00\textwidth]{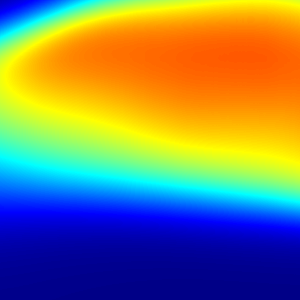}}

\footnotesize $\sigma$
\end{minipage}\hspace{.15em}
\begin{minipage}[c]{.235\columnwidth}
\centering
\cfboxR{1.3pt}{red1}{\includegraphics[width=1.00\textwidth]{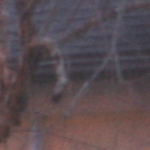}} \vspace{-0.9em}

\begin{tikzpicture}
    \node[anchor=north west,inner sep=0] (image) at (0,0){
              \includegraphics[width=1.03\textwidth]{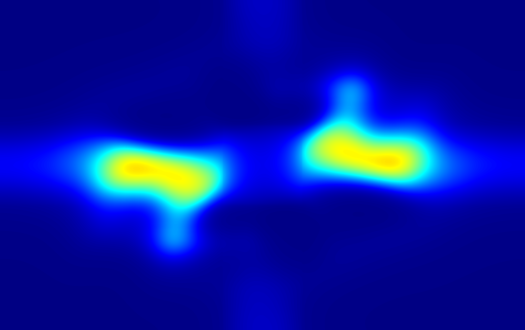}};
    \begin{scope}[x={(image.north east)},y={(image.south west)}]
        \draw[red, thick] (0.5758,0.4318) rectangle (0.6667,0.5682); %
    \end{scope}
\end{tikzpicture} \vspace{-0.9em}

\cfboxR{1.3pt}{red}{\includegraphics[width=1.00\textwidth, height=1.00\textwidth]{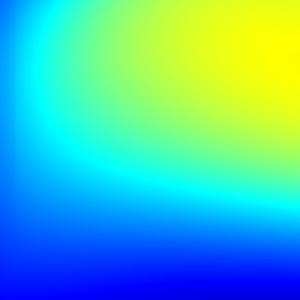}}

\footnotesize $3\sigma$
\end{minipage}\hspace{.15em}
\begin{minipage}[c]{.235\columnwidth}
\centering
\cfboxR{1.3pt}{red1}{\includegraphics[width=1.00\textwidth]{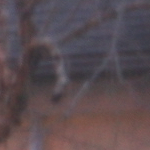}} \vspace{-0.9em}

\begin{tikzpicture}
    \node[anchor=north west,inner sep=0] (image) at (0,0){
              \includegraphics[width=1.03\textwidth]{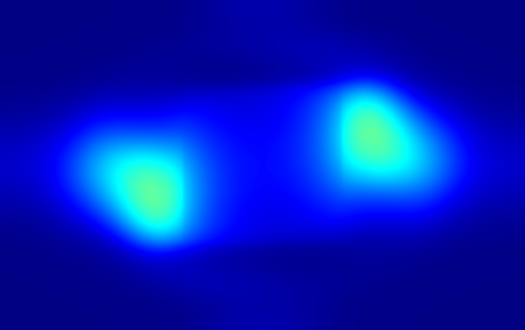}};
    \begin{scope}[x={(image.north east)},y={(image.south west)}]
        \draw[red, thick] (0.5758,0.4318) rectangle (0.6667,0.5682); %
    \end{scope}
\end{tikzpicture} \vspace{-0.9em}

\cfboxR{1.3pt}{red}{\includegraphics[width=1.00\textwidth, height=1.00\textwidth]{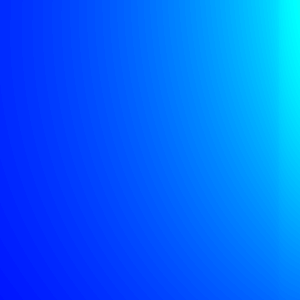}}

\footnotesize $6\sigma$

\end{minipage}

\mdG{\caption{Impact of smoothing the Fourier weights.
To eliminate image artifacts and noise,  $|\hat{v}_i|$ are smoothed before computing the weights $w_i$.
Top row shows the results of the \textsc{fba} average (for the burst shown in Figure~\ref{fig:LuckyParking}), 
middle row the Fourier weights, and the bottom row
a crop of the Fourier weights, when considering different levels of Gaussian smoothing (no smoothing, $\sigma$, $3\sigma$, $6\sigma$).
The strength of the low pass filter is controlled by the parameter 
$\sigma =  \nicefrac{\text{min}(m_h,m_w)}{k_s}$, where $k_s=50$~pixels  and the image size is $m_h \times m_w$ pixels.
As shown in the left column (no smoothing), this filtering step is very important. It provides stabilization to the Fourier weights and also helps to remove noise.
The results are stable for a large range of smoothing levels.}
\label{fig:weightsmoothing}
}
\end{figure}

\noindent \textbf{Noise Aware Sharpening.}
While the results of the Fourier burst accumulation are already very good, 
considering that the process so far has been  computationally non-intensive, one
can optionally apply a final sharpening step if resources are still available.
The sharpening must contemplate that the reconstructed image may have some remaining noise. Thus, we first
apply a denoising algorithm (we used the \textsc{nlbayes} algorithm~\cite{nlbayes2013ipol}\footnote{A variant of this is already available on camera phones, so we stay at the level of potential on-board implementations.}), then on  the filtered image we apply a Gaussian sharpening. To avoid removing fine details
we finally add back a percentage of what has been removed during the denoising step. \vspace{.6em}

The complete method is detailed in Algorithm~\ref{algo:pAggregation}.\\

\begin{algorithm}

\caption{Aggregation of  Blurred Images}
\label{algo:pAggregation}

\Input{A series of images $\tilde{v}_1,\tilde{v}_2,\ldots,\tilde{v}_n$ of size $m\times n \times c^\ast$. An integer value $p$.} \vspace{.5em}
\Output{The aggregated image $u_p$}
\BlankLine

$w = \text{zeros}(m,n)$; $\hat{u}_p = \text{zeros}(m,n,c)$\;
  
\BlankLine
              
\For{image $i$ in $\{1,\ldots,n\}$} 
  {
 \BlankLine
  \Comment{Burst Registration}                 
  $M_i = \textsc{sift}(\tilde{v}_i,\tilde{v}_1)$ \Comment*{\footnotesize  $M_i$ set of corresponding points.}
  $H_i = \textsc{orsa}(M_i)$ \Comment*{\footnotesize $H_i$ dominant homography in $M_i$.}
  $v_i = \tilde{v}_i \circ H_i$ \Comment*{\footnotesize Image resampling.}
\BlankLine

 \Comment{Fourier Burst Accumulation}
 $\hat{v}_i = \textsc{fft}(v_i)$\;
 $w_i = \frac{1}{c}\sum_{j=1}^c |\hat{v}^j_i|$\Comment*{\footnotesize Mean over color channels}
 $w_i = G_\sigma w_i$ \Comment*{\footnotesize Gaussian smoothing}

 $\hat{u}_p =  \hat{u}_p + w^p_i \cdot \hat{v}_i$\Comment*{\footnotesize  Weighted Fourier accumulation}
 $w = w + w^p_i$\;

}

$u_p =  \textsc{ifft} ( \hat{u}_p / w)$\;

\BlankLine

\Comment{Noise Aware Sharpening (Optional)}
$\bar{u}_p = \textsc{denoise}(u_p)$\;
$\bar{u}^s_p =   2\bar{u}_p - G_\rho \bar{u}_p$\Comment*{\footnotesize Gaussian sharpening,  $\rho \in [1,3]$} 
$u_p = \bar{u}^s_p + \delta ( u_p - \bar{u}_p)$\Comment*{\footnotesize Add a fraction of removed noise, $\delta=0.4$} 

\vspace{1em}

\Comment{[*] \footnotesize $c$ is the number of color channels, typically 3. 
The color channels of $v$ are denoted by $v^j$, for $j=1,\ldots,c$.
All the regular operations (e.g, $+, /, \cdot$)  are  point-wise. }
\end{algorithm}

\noindent \textbf{Memory and Complexity Analysis.}
Once the images are registered, the algorithm runs in $O(M \cdot m \cdot \log m)$, where $m = m_h \times m_w$ is the number of image pixels and $M$ the number
of images in the burst. The heaviest part of the algorithm is the computation of $M$ \textsc{fft}s, very suitable and popular in \textsc{vlsi} implementations. This is the reason why the method has a very low complexity.
Regarding memory consumption, the algorithm does not need to access all the images simultaneously and can proceed in an online fashion.
This keeps the memory requirements to only three buffers: one for the current image, one for the
current average, and one for the current weights sum.

\section{Experimental Results}
\label{sec:results}

\begin{figure}[thbp]
\centering
\begin{minipage}[c]{\columnwidth}
\centering

\begin{minipage}[c]{0.32\textwidth}
\centering
{\includegraphics[width=\textwidth]{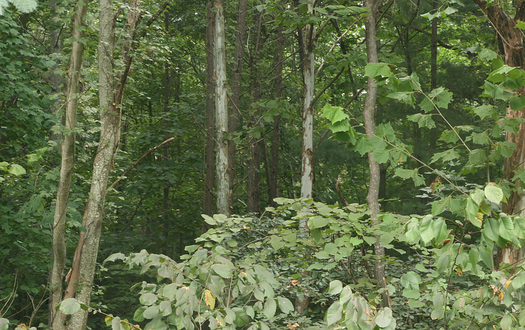}}

\tiny \textsf{woods} 13 imgs \\ \textsc{iso}~1600, $\nicefrac{1}{8}$'' \\ Canon 400D
\end{minipage}
\begin{minipage}[c]{0.32\textwidth}
\centering

{\includegraphics[width=\textwidth]{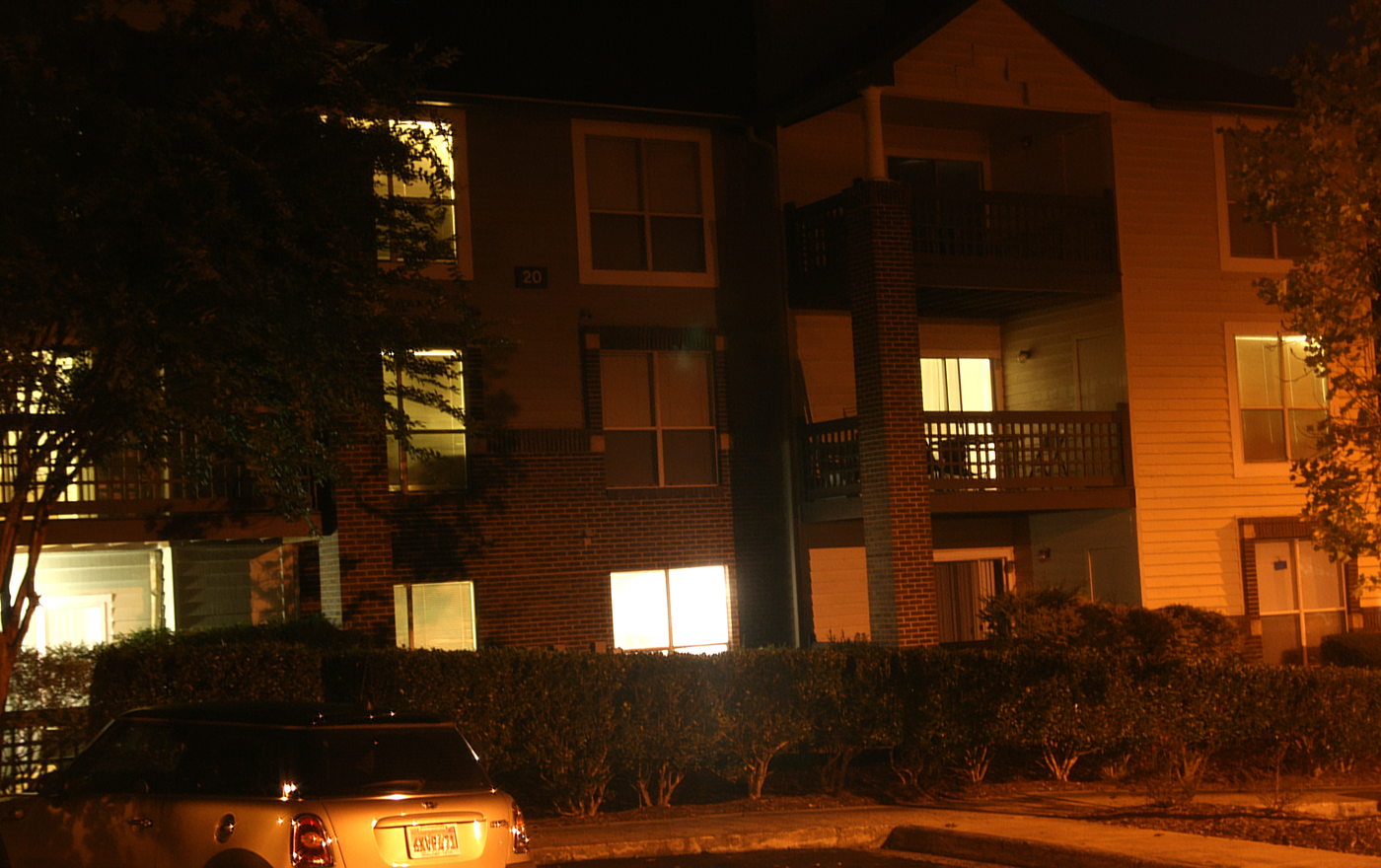}}

\tiny \textsf{parking night} 10 imgs \\ \textsc{iso}~1600, $\nicefrac{1}{3}$'' \\ Canon 400D
\end{minipage}
\begin{minipage}[c]{0.32\textwidth}
\centering

{\includegraphics[width=\textwidth]{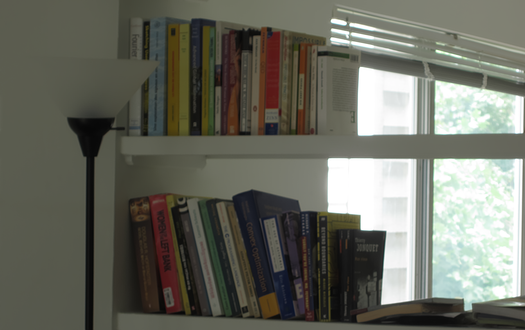}}

\tiny \textsf{bookshelf}  10 imgs \\ \textsc{iso}~100, $\nicefrac{1}{6}$'' \\ Canon 400D
\end{minipage}
\end{minipage}

\vspace{.5em}

\begin{minipage}[c]{\columnwidth}
\centering

\begin{minipage}[c]{0.185\textwidth}
\centering

{\includegraphics[width=\textwidth]{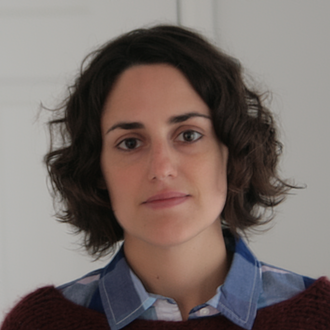}}

\mdG{\tiny \textsf{portrait}  10 imgs \\ \textsc{iso}~800, $\nicefrac{1}{8}$'' \\\vspace{-.05em} Canon 400D}
\end{minipage}
\begin{minipage}[c]{0.25\textwidth}
\centering

{\includegraphics[width=\textwidth]{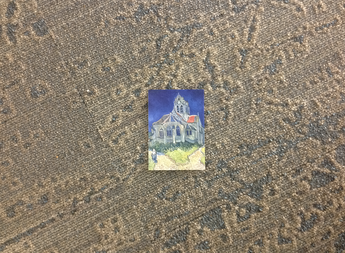}}

\tiny \textsf{auvers} 12 imgs \\ \textsc{iso} 400, $\nicefrac{1}{2}$'', iPad \vspace{1em}
\end{minipage}
\begin{minipage}[c]{0.25\textwidth}
\centering

{\includegraphics[width=\textwidth]{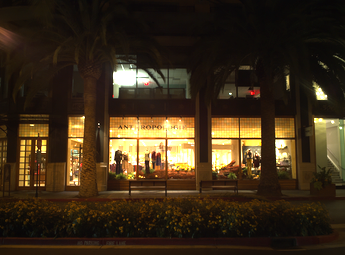}}

\tiny \textsf{anthropologie}~\cite{park2014gyro} 8 imgs \\ \textsc{iso}~100,  353~ms \vspace{1em}
\end{minipage}
\begin{minipage}[c]{0.25\textwidth}
\centering

{\includegraphics[width=\textwidth]{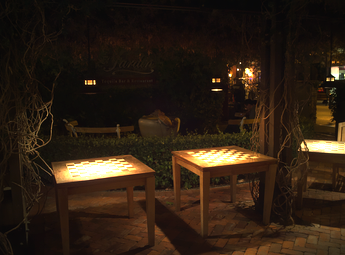}}

\tiny \textsf{tequila}~\cite{park2014gyro} 8 imgs \\ \textsc{iso}~100,  177~ms \vspace{1em}
\end{minipage}
\end{minipage}

\caption{Restoration of image bursts captured with two different cameras. The number of frames, the \textsc{iso} sensitivity and the exposure time is indicated for each burst. Note that in the case of the iPad tablet, the exposure time is the one indicated by the app., and there is  no guarantee that this is the real exposure time.  Full images are available at the project's website.}
\label{fig:burstsFull}

\end{figure}

We captured several handheld bursts with different number of images using a Canon 400D \textsc{dslr} camera and the back camera of an iPad tablet.
The full restored images and the details of the camera parameters are shown in Figure~\ref{fig:burstsFull} and Figure~\ref{fig:hdrResults}.
The photographs contain complex structure, texture, noise and saturated pixels, and were acquired
under different lighting conditions. %
All the results were computed using $p=11$. The full high resolution images are available at the project's website.\footnote{\url{http://dev.ipol.im/~mdelbra/fba/}}

\subsection{Comparison to Multi-image Blind Deblurring}
Since this problem is typically addressed by multi-image blind deconvolution techniques,
we selected two state-of-the-art algorithms for comparison~\cite{zhang2013multi,sroubek2012robust}.
Both algorithms are built on variational formulations and estimate first  the blurring kernels using all the frames in the burst and then
do a step of multi-image non-blind deconvolution, requiring significant memory for normal operation. 
We used the code provided by the authors.\footnote{%
{\scriptsize 
\url{http://zoi.utia.cas.cz/files/fastMBD.zip}\\
\url{https://drive.google.com/file/d/0BzoBvkfRHe5bUF9jQ1ZsWXRYSkk/}}
}
The algorithms rely on parameters that were manually tuned to get the best possible results.
We also compare to the simple \textsf{align and average} algorithm (which indeed is the particular case $p=0$). 

Figures~\ref{fig:AllComparison2}, \ref{fig:AllComparison3} and \ref{fig:AllComparison1} show some crops of the restored 
images by all the algorithms. In addition, we show two input images for each burst:
the best one in the burst and a typical one in the series.
The proposed algorithm obtains similar or better results than the one by 
Zhang {\it et al.}~\cite{zhang2013multi}, at a significantly lower computational and memory cost. Since this algorithm explicitly seeks to deconvolve the sequence, if the convolution model is not
perfectly valid or there is misalignment, the restored image will have deconvolution artifacts. This is clearly 
observed in the \textsf{bookshelf} sequence where \cite{zhang2013multi} produces a slightly sharper restoration
but having ringing artifacts (see \emph{Jonquet} book). Also, it is hard to read
the word ``Women'' in the spine of the red book. Due to the strong assumed priors, \cite{zhang2013multi} generally leads to 
very sharp images but it may also produce overshooting/ringing in some regions like in the brick wall (\textsf{parking night}).

The proposed method clearly outperforms \cite{sroubek2012robust} in all the sequences. This algorithm
 introduces strong artifacts that degraded the performance in most of the tested bursts. 
Tuning the parameters was not trivial since this algorithm relies on 4 parameters that the authors
have linked to a single one (named $\gamma$).  We swept the parameter $\gamma$ to get the best possible performance.

Our approach is conceptually similar to a regular \textsf{align and average} algorithm, 
but it produces significantly sharper images while keeping the noise reduction power of the average principle.
In some cases with numerous images in the burst (e.g., see the \textsf{parking night} sequence), there might already 
be a relatively sharp image in the burst (lucky imaging). 
Our algorithm does not need to explicitly detect such ``best'' frame, and naturally uses the others to denoise
the frequencies not containing image information but  noise. \vspace{.5em}

\subsection{Execution Time} 
Once the images are registered, the proposed approach runs in only a few seconds in our Matlab experimental code, 
while \cite{zhang2013multi} needs several hours for bursts of 8-10 images. Even if the estimation
of the blurring kernels is done in a cropped version (i.e., $200\times200$ pixels region), the multi-image non-blind deconvolution
step is very slow, taking several hours for  6-8~megapixel images. 

\subsection{Multi-image Non-blind Deconvolution}
Figure~\ref{fig:resultsAllComparisonGyroscope} shows the algorithm results  in two sequences provided in~\cite{park2014gyro}. 
The algorithm proposed in~\cite{park2014gyro}  uses gyroscope information present in new camera devices
to register the burst and also to have an estimation of the local blurring kernels.
Then a more expensive multi-image non-blind deconvolution algorithm is applied to recover the latent sharp image. 
Our algorithm produces  similar results without explicitly solving any inverse problem nor using any information about the motion kernels.

\begin{figure}[hptb]
\centering
\begin{minipage}[c]{.158\textwidth}
\centering
\cfboxR{1.3pt}{black}{\includegraphics[width=\textwidth]{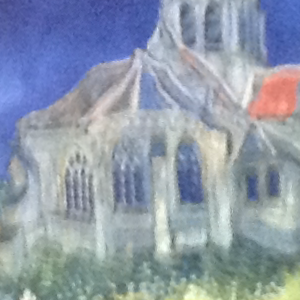}} \vspace{-0.95em}

\cfboxR{1.3pt}{black}{\includegraphics[width=\textwidth]{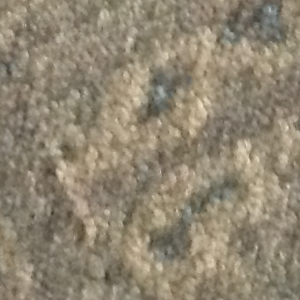}}
\scriptsize Typical Shot \vspace{.9em}
\end{minipage}
\hspace{-0.25em}
\begin{minipage}[c]{.156\textwidth}
\centering
\cfboxR{1.3pt}{black}{\includegraphics[width=\textwidth]{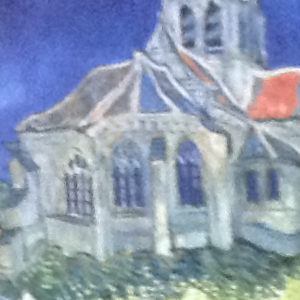}} \vspace{-0.95em}

\cfboxR{1.3pt}{black}{\includegraphics[width=\textwidth]{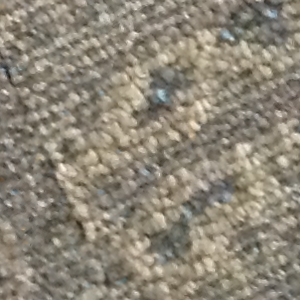}}
\scriptsize Best Shot \vspace{.9em}
\end{minipage}
\hspace{-0.25em}
\begin{minipage}[c]{.156\textwidth}
\centering
\cfboxR{1.3pt}{black}{\includegraphics[width=\textwidth]{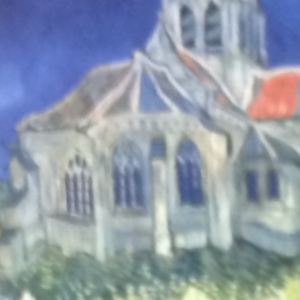}} \vspace{-0.95em}

\cfboxR{1.3pt}{black}{\includegraphics[width=\textwidth]{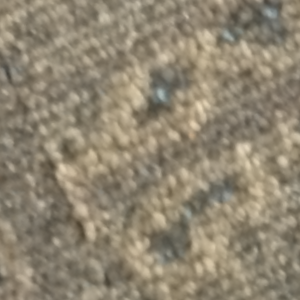}}
\scriptsize Align and average \vspace{.9em}
\end{minipage}

\begin{minipage}[c]{.156\textwidth}
\centering
\fbox{\includegraphics[width=\textwidth]{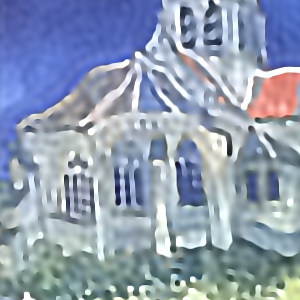}} \vspace{-0.9em}

\fbox{\includegraphics[width=\textwidth]{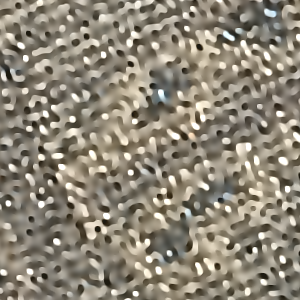}}
\scriptsize \v{S}roubek \& Milanfar~\cite{sroubek2012robust} 
\end{minipage}
\hspace{-0.25em}
\begin{minipage}[c]{.156\textwidth}
\centering
\cfboxR{1.3pt}{black}{\includegraphics[width=\textwidth]{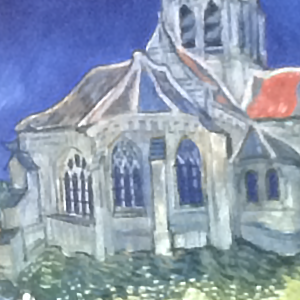}} \vspace{-0.95em}

\cfboxR{1.3pt}{black}{\includegraphics[width=\textwidth]{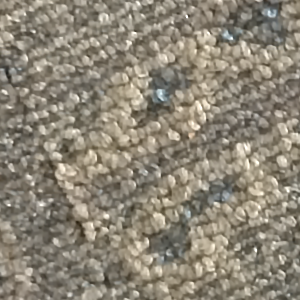}}
\scriptsize  Zhang {\it et al.}~\cite{zhang2013multi} 
\end{minipage}
\hspace{-0.25em}
\begin{minipage}[c]{.156\textwidth} 
\centering
\cfboxR{1.3pt}{blue1}{\includegraphics[width=\textwidth]{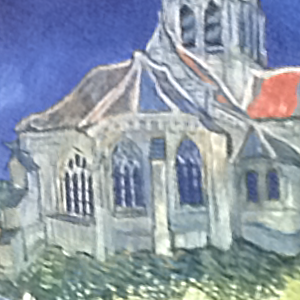}} \vspace{-0.95em}

\cfboxR{1.3pt}{blue1}{\includegraphics[width=\textwidth]{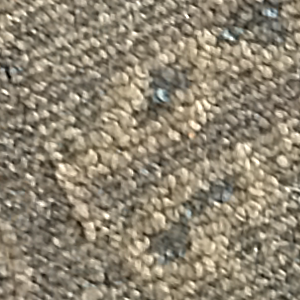}}
\scriptsize \algoname %
\end{minipage}

\caption{Real data burst deblurring results and comparison with multi-image blind deconvolution methods (\textsf{auvers}). }

\label{fig:AllComparison2}
\end{figure}

 \begin{figure}[hptb]
\centering
\begin{minipage}[c]{.158\textwidth}
\centering
\cfboxR{1.3pt}{black}{\includegraphics[width=\textwidth]{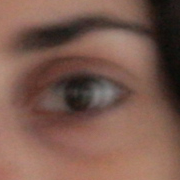}} \vspace{-0.95em}

\cfboxR{1.3pt}{black}{\includegraphics[width=\textwidth]{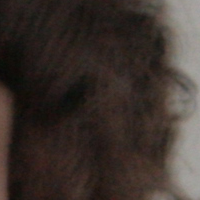}}
\scriptsize Typical Shot \vspace{.9em}
\end{minipage}
\hspace{-0.25em}
\begin{minipage}[c]{.156\textwidth}
\centering
\cfboxR{1.3pt}{black}{\includegraphics[width=\textwidth]{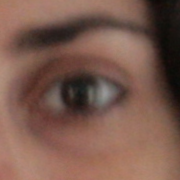}} \vspace{-0.95em}

\cfboxR{1.3pt}{black}{\includegraphics[width=\textwidth]{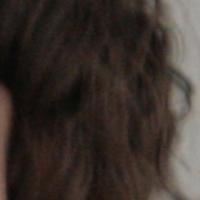}}
\scriptsize Best Shot \vspace{.9em}
\end{minipage}
\hspace{-0.25em}
\begin{minipage}[c]{.156\textwidth}
\centering
\cfboxR{1.3pt}{black}{\includegraphics[width=\textwidth]{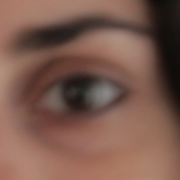}} \vspace{-0.95em}

\cfboxR{1.3pt}{black}{\includegraphics[width=\textwidth]{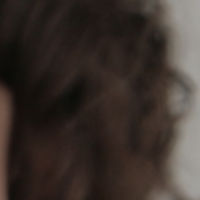}}
\scriptsize Align and average \vspace{.9em}
\end{minipage}

\begin{minipage}[c]{.156\textwidth}
\centering
\fbox{\includegraphics[width=\textwidth]{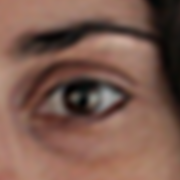}} \vspace{-0.9em}

\fbox{\includegraphics[width=\textwidth]{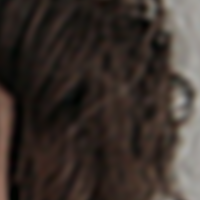}}
\scriptsize \v{S}roubek \& Milanfar~\cite{sroubek2012robust} 
\end{minipage}
\hspace{-0.25em}
\begin{minipage}[c]{.156\textwidth}
\centering
\cfboxR{1.3pt}{black}{\includegraphics[width=\textwidth]{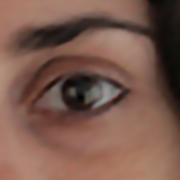}} \vspace{-0.95em}

\cfboxR{1.3pt}{black}{\includegraphics[width=\textwidth]{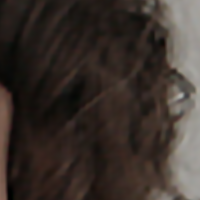}}
\scriptsize  Zhang {\it et al.}~\cite{zhang2013multi} 
\end{minipage}
\hspace{-0.25em}
\begin{minipage}[c]{.156\textwidth} 
\centering
\cfboxR{1.3pt}{blue1}{\includegraphics[width=\textwidth]{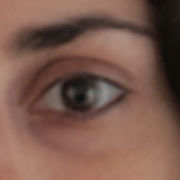}} \vspace{-0.95em}

\cfboxR{1.3pt}{blue1}{\includegraphics[width=\textwidth]{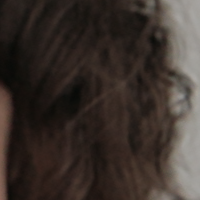}}
\scriptsize \algoname %
\end{minipage}

\mdG{\caption{Real data burst deblurring results and comparison with multi-image blind deconvolution methods (\textsf{portrait}).}
\label{fig:AllComparison3}
}

\end{figure}

\begin{figure*}[hptb]
\centering

\begin{minipage}[c]{.137\textwidth}
\centering
\cfboxR{1.3pt}{black}{\includegraphics[width=\textwidth]{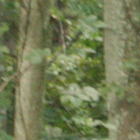}} \vspace{-.95em}

\cfboxR{1.3pt}{black}{\includegraphics[width=\textwidth]{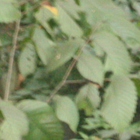}} \vspace{-.95em}

\cfboxR{1.3pt}{black}{\includegraphics[width=\textwidth]{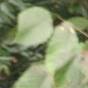}} \vspace{-.95em}

\end{minipage}
\begin{minipage}[c]{.137\textwidth}
\centering
\cfboxR{1.3pt}{black}{\includegraphics[width=\textwidth]{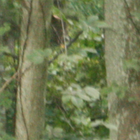}} \vspace{-.95em}

\cfboxR{1.3pt}{black}{\includegraphics[width=\textwidth]{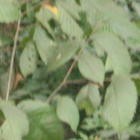}} \vspace{-.95em}

\cfboxR{1.3pt}{black}{\includegraphics[width=\textwidth]{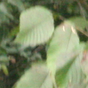}} \vspace{-.95em}

\end{minipage}
\begin{minipage}[c]{.137\textwidth}
\centering
\cfboxR{1.3pt}{black}{\includegraphics[width=\textwidth]{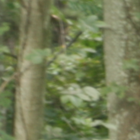}} \vspace{-.95em}

\cfboxR{1.3pt}{black}{\includegraphics[width=\textwidth]{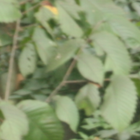}} \vspace{-.95em}

\cfboxR{1.3pt}{black}{\includegraphics[width=\textwidth]{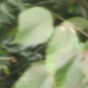}} \vspace{-.95em}

\end{minipage}
\begin{minipage}[c]{.137\textwidth}
\centering
\cfboxR{1.3pt}{black}{\includegraphics[width=\textwidth]{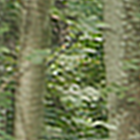}} \vspace{-.95em}

\cfboxR{1.3pt}{black}{\includegraphics[width=\textwidth]{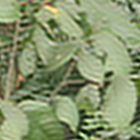}} \vspace{-.95em}

\cfboxR{1.3pt}{black}{\includegraphics[width=\textwidth]{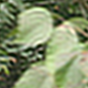}} \vspace{-.95em}

\end{minipage}
\begin{minipage}[c]{.137\textwidth}
\centering
\cfboxR{1.3pt}{black}{\includegraphics[width=\textwidth]{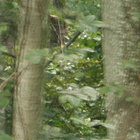}} \vspace{-.95em}

\cfboxR{1.3pt}{black}{\includegraphics[width=\textwidth]{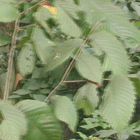}} \vspace{-.95em}

\cfboxR{1.3pt}{black}{\includegraphics[width=\textwidth]{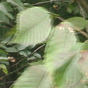}} \vspace{-.95em}

\end{minipage}
\begin{minipage}[c]{.137\textwidth}
\centering
\cfboxR{1.3pt}{black}{\includegraphics[width=\textwidth]{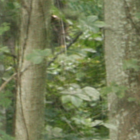}} \vspace{-.95em}

\cfboxR{1.3pt}{black}{\includegraphics[width=\textwidth]{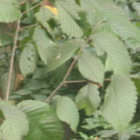}} \vspace{-.95em}

\cfboxR{1.3pt}{black}{\includegraphics[width=\textwidth]{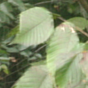}} \vspace{-.95em}

\end{minipage}
\begin{minipage}[c]{.137\textwidth}
\centering
\cfboxR{1.3pt}{blue1}{\includegraphics[width=\textwidth]{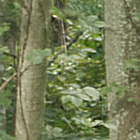}} \vspace{-.95em}

\cfboxR{1.3pt}{blue1}{\includegraphics[width=\textwidth]{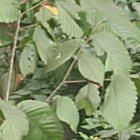}} \vspace{-.95em}

\cfboxR{1.3pt}{black}{\includegraphics[width=\textwidth]{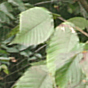}} \vspace{-.95em}

\end{minipage}

\vspace{.3em}

\begin{minipage}[c]{.137\textwidth}
\centering
\cfboxR{1.3pt}{black}{\includegraphics[width=\textwidth]{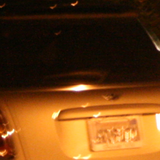}} \vspace{-.95em}

\cfboxR{1.3pt}{black}{\includegraphics[width=\textwidth]{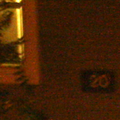}} \vspace{-.95em}

\cfboxR{1.3pt}{black}{\includegraphics[width=\textwidth]{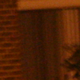}} \vspace{-.95em}

\end{minipage}
\begin{minipage}[c]{.137\textwidth}
\centering
\cfboxR{1.3pt}{black}{\includegraphics[width=\textwidth]{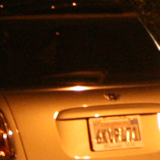}} \vspace{-.95em}

\cfboxR{1.3pt}{black}{\includegraphics[width=\textwidth]{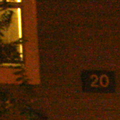}} \vspace{-.95em}

\cfboxR{1.3pt}{black}{\includegraphics[width=\textwidth]{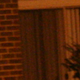}} \vspace{-.95em}
\end{minipage}
\begin{minipage}[c]{.137\textwidth}
\centering
\cfboxR{1.3pt}{black}{\includegraphics[width=\textwidth]{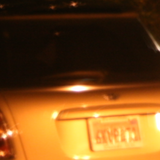}} \vspace{-.95em}

\cfboxR{1.3pt}{black}{\includegraphics[width=\textwidth]{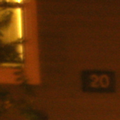}} \vspace{-.95em}

\cfboxR{1.3pt}{black}{\includegraphics[width=\textwidth]{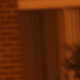}} \vspace{-.95em}
\end{minipage}
\begin{minipage}[c]{.137\textwidth}
\centering
\cfboxR{1.3pt}{black}{\includegraphics[width=\textwidth]{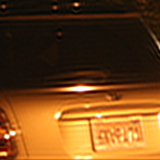}} \vspace{-.95em}

\cfboxR{1.3pt}{black}{\includegraphics[width=\textwidth]{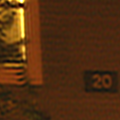}} \vspace{-.95em}

\cfboxR{1.3pt}{black}{\includegraphics[width=\textwidth]{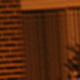}} \vspace{-.95em}
\end{minipage}
\begin{minipage}[c]{.137\textwidth}
\centering
\cfboxR{1.3pt}{black}{\includegraphics[width=\textwidth]{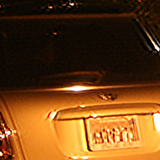}} \vspace{-.95em}

\cfboxR{1.3pt}{black}{\includegraphics[width=\textwidth]{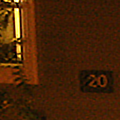}} \vspace{-.95em}

\cfboxR{1.3pt}{black}{\includegraphics[width=\textwidth]{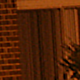}} \vspace{-.95em}
\end{minipage}
\begin{minipage}[c]{.137\textwidth}
\centering
\cfboxR{1.3pt}{black}{\includegraphics[width=\textwidth]{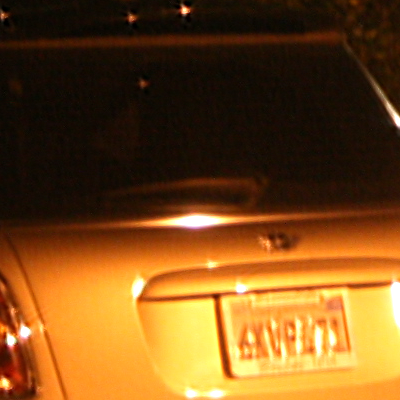}} \vspace{-.95em}

\cfboxR{1.3pt}{black}{\includegraphics[width=\textwidth]{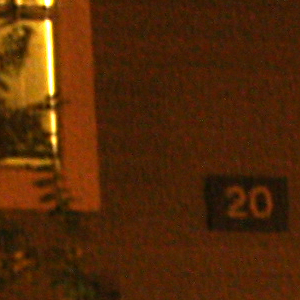}} \vspace{-.95em}

\cfboxR{1.3pt}{black}{\includegraphics[width=\textwidth]{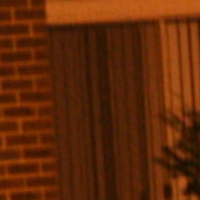}} \vspace{-.95em}

\end{minipage}
\begin{minipage}[c]{.137\textwidth}
\centering
\cfboxR{1.3pt}{blue1}{\includegraphics[width=\textwidth]{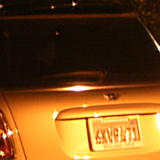}} \vspace{-.95em}

\cfboxR{1.3pt}{blue1}{\includegraphics[width=\textwidth]{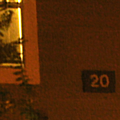}} \vspace{-.95em}

\cfboxR{1.3pt}{blue1}{\includegraphics[width=\textwidth]{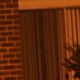}} \vspace{-.95em}
\end{minipage}

\vspace{.3em}

\begin{minipage}[c]{.137\textwidth}
\centering
\cfboxR{1.3pt}{black}{\includegraphics[width=\textwidth]{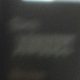}} \vspace{-.95em}

\cfboxR{1.3pt}{black}{\includegraphics[width=\textwidth]{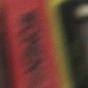}} \vspace{-.95em}

\ssmall Typical Shot \vspace{1.1em}
\end{minipage}
\begin{minipage}[c]{.137\textwidth}
\centering
\cfboxR{1.3pt}{black}{\includegraphics[width=\textwidth]{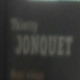}} \vspace{-.95em}

\cfboxR{1.3pt}{black}{\includegraphics[width=\textwidth]{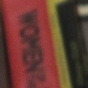}} \vspace{-.95em}

\ssmall Best Shot \vspace{1.2em}

\end{minipage}
\begin{minipage}[c]{.137\textwidth}
\centering
\cfboxR{1.3pt}{black}{\includegraphics[width=\textwidth]{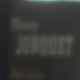}} \vspace{-.95em}

\cfboxR{1.3pt}{black}{\includegraphics[width=\textwidth]{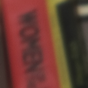}} \vspace{-.95em}

\ssmall Align and average \vspace{1.1em}
\end{minipage}
\begin{minipage}[c]{.137\textwidth}
\centering
\cfboxR{1.3pt}{black}{\includegraphics[width=\textwidth]{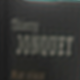}} \vspace{-.95em}

\cfboxR{1.3pt}{black}{\includegraphics[width=\textwidth]{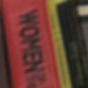}} \vspace{-.95em}

\ssmall \v{S}roubek \& Milanfar~\cite{sroubek2012robust}  \vspace{1.1em}
\end{minipage}
\begin{minipage}[c]{.137\textwidth}
\centering
\cfboxR{1.3pt}{black}{\includegraphics[width=\textwidth]{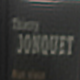}} \vspace{-.95em}

\cfboxR{1.3pt}{black}{\includegraphics[width=\textwidth]{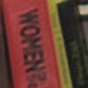}} \vspace{-.95em}

\ssmall Zhang {\it et al.}~\cite{zhang2013multi} \vspace{1.0em}
\end{minipage}
\begin{minipage}[c]{.137\textwidth}
\centering
\cfboxR{1.3pt}{black}{\includegraphics[width=\textwidth]{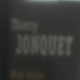}} \vspace{-.95em}

\cfboxR{1.3pt}{black}{\includegraphics[width=\textwidth]{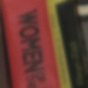}} \vspace{-.95em}

\ssmall Proposed Method\\ (no final sharp.)\vspace{-.1em}
\end{minipage}
\begin{minipage}[c]{.137\textwidth}
\centering
\cfboxR{1.3pt}{blue1}{\includegraphics[width=\textwidth]{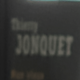}} \vspace{-.95em}

\cfboxR{1.3pt}{blue1}{\includegraphics[width=\textwidth]{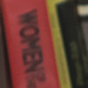}} \vspace{-.95em}

\ssmall \algoname \vspace{1.1em}
\end{minipage}

\caption{Real data burst deblurring results and comparison to state-of-the-art multi-image blind deconvolution algorithms~(\textsf{woods} rows 1-3, \textsf{parking night} rows 4-6, \textsf{bookshelf} rows 7-8,  sequences). }
\label{fig:AllComparison1}

\end{figure*}

\begin{figure}[hptb]
\centering
\begin{minipage}[c]{.32\columnwidth}
\centering
\cfboxR{1.3pt}{black}{\includegraphics[width=\textwidth]{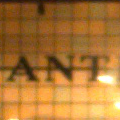}} \vspace{-0.9em}

\cfboxR{1.3pt}{black}{\includegraphics[width=\textwidth]{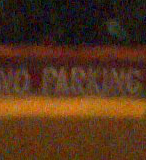}}

\end{minipage}
\hspace{-0.22em}
\begin{minipage}[c]{.32\columnwidth}
\centering
\cfboxR{1.3pt}{black}{\includegraphics[width=\textwidth]{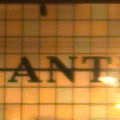}} \vspace{-0.9em}

\cfboxR{1.3pt}{black}{\includegraphics[width=\textwidth]{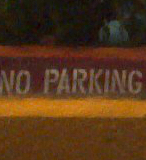}}

\end{minipage}
\hspace{-0.22em}
\begin{minipage}[c]{.32\columnwidth}
\centering
\cfboxR{1.3pt}{blue1}{\includegraphics[width=\textwidth]{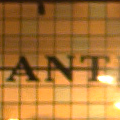}} \vspace{-0.9em}

\cfboxR{1.3pt}{blue1}{\includegraphics[width=\textwidth]{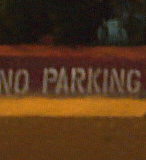}}

\end{minipage}

\vspace{0.2em}

\begin{minipage}[c]{.32\columnwidth}
\centering
\cfboxR{1.3pt}{black}{\includegraphics[width=\textwidth]{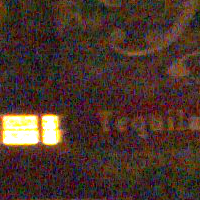}} \vspace{-0.9em}

\cfboxR{1.3pt}{black}{\includegraphics[width=\textwidth]{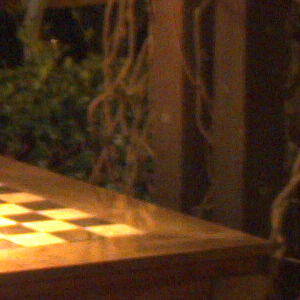}}
\scriptsize Input 1 %
\end{minipage}
\hspace{-0.22em}
\begin{minipage}[c]{.32\columnwidth}
\centering
\cfboxR{1.3pt}{black}{\includegraphics[width=\textwidth]{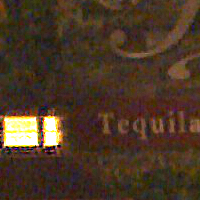}} \vspace{-0.9em}

\cfboxR{1.3pt}{black}{\includegraphics[width=\textwidth]{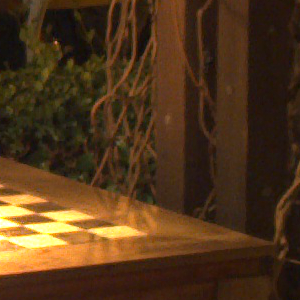}}
\scriptsize Park \& Levoy~\cite{park2014gyro}
\end{minipage}
\hspace{-0.22em}
\begin{minipage}[c]{.32\columnwidth}
\centering
\cfboxR{1.3pt}{blue1}{\includegraphics[width=\textwidth]{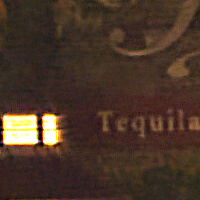}} \vspace{-0.9em}

\cfboxR{1.3pt}{blue1}{\includegraphics[width=\textwidth]{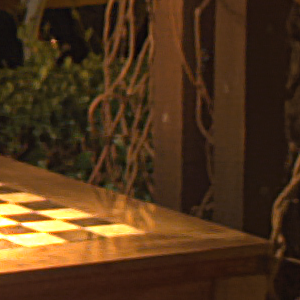}}
\scriptsize \algoname \vspace{0.15em}
\end{minipage}

\caption{Restoration results with the data provided in~\cite{park2014gyro} (\textsf{anthropologie} and \textsf{tequila} sequences).}
\label{fig:resultsAllComparisonGyroscope}

\end{figure}

\subsection{HDR imaging: Multi-Exposure Fusion}
\label{sec:hdr}
In many situations, the dynamic range of the photographed scene is larger than the
one of the camera's image sensor. A popular solution to this problem is to capture
several images with varying exposure settings and  combine them to produce a
single high dynamic range high-quality image (see e.g., \cite{reinhard10,banterle2011book}). 

However, in dim light conditions, large exposure times are needed, leading to
the presence of image blur when the images are captured without a tripod.
This presents an additional challenge.
A direct extension of the present work is to capture several bursts, 
each one covering a different exposure level. Then, each of the bursts
is processed with the \textsc{fba} procedure leading to
a clean sharper representation of each burst. The obtained sharp images
can then be merged to produce a high quality image using any existent 
exposure fusion algorithm. 

Figure~\ref{fig:hdrResults} shows the results of taking two image bursts with 
two different exposure times, 
and separately aggregating them using
the proposed algorithm. We then applied the exposure fusion algorithm
of~\cite{zhang2010gradient} to get a clean tone mapped image from the two burst representations.
The fusion is much sharper and cleaner when using the Fourier Burst Accumulation for combining
each burst than the one given by the arithmetic average or the best frames in each burst.

\begin{figure*}
\begin{center}
\centering

\begin{minipage}[c]{.49\textwidth}
\centering
\ssmall \textsf{building}, $2\times12$ imgs, $\nicefrac{1}{8}$'' and $\nicefrac{1}{40}$'', \textsc{iso} 800, Canon 400D 
\end{minipage}
\begin{minipage}[c]{.49\textwidth}
\centering
\ssmall \textsf{indoors}, $2\times12$ imgs, $\nicefrac{1}{4}$'' and $\nicefrac{1}{80}$'', \textsc{iso} 250, iPad 
\end{minipage}

\vspace{.1em}

\hspace{-1em}
\begin{minipage}[c]{.16\textwidth}
\centering
\cfboxR{1.0pt}{black}{\includegraphics[width=0.99\textwidth]{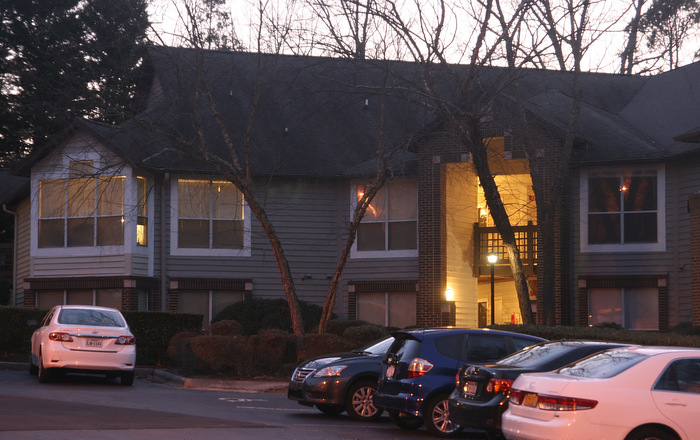}}

\ssmall FBA - long exposure (LE)
\end{minipage}
\begin{minipage}[c]{.16\textwidth}
\centering
\cfboxR{1.0pt}{black}{\includegraphics[width=.99\textwidth]{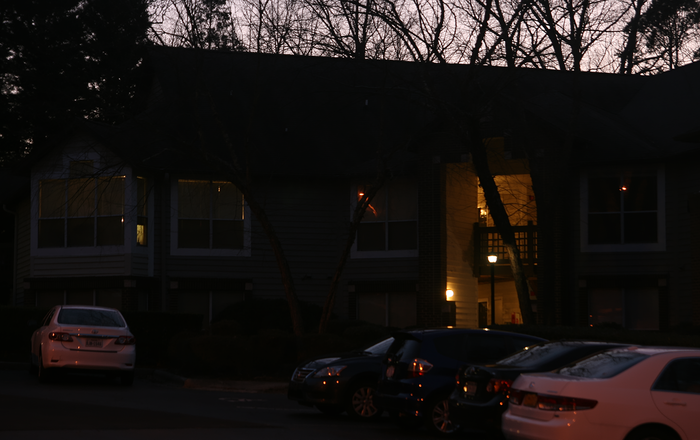}}

\ssmall FBA - short exposure (SE)
\end{minipage}
\begin{minipage}[c]{.16\textwidth}
\centering
\cfboxR{1.0pt}{black}{\includegraphics[width=0.99\textwidth]{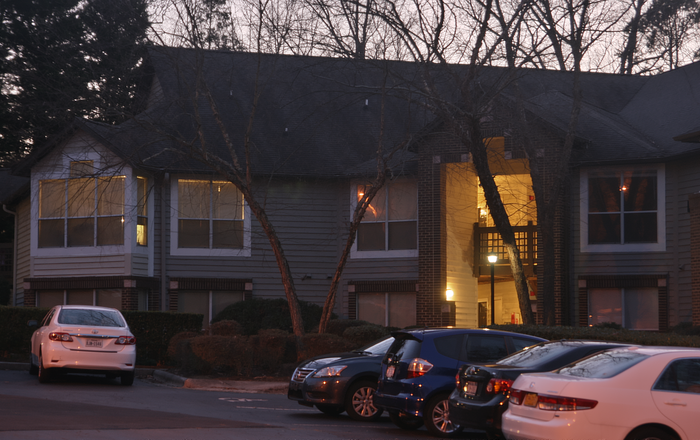}}

\ssmall FBA - exposure fusion
\end{minipage}
\hspace{.4em}
\begin{minipage}[c]{.15\textwidth}
\centering
\cfboxR{1.0pt}{black}{\includegraphics[width=0.99\textwidth]{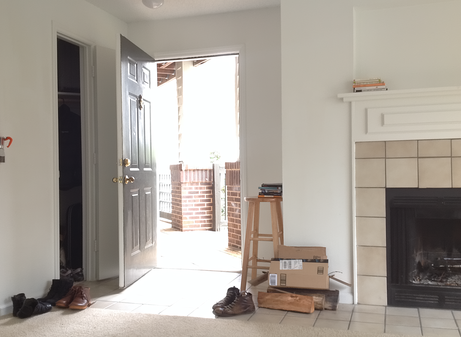}}

\ssmall FBA - long exposure (LE)
\end{minipage}
\begin{minipage}[c]{.15\textwidth}
\centering
\cfboxR{1.0pt}{black}{\includegraphics[width=.99\textwidth]{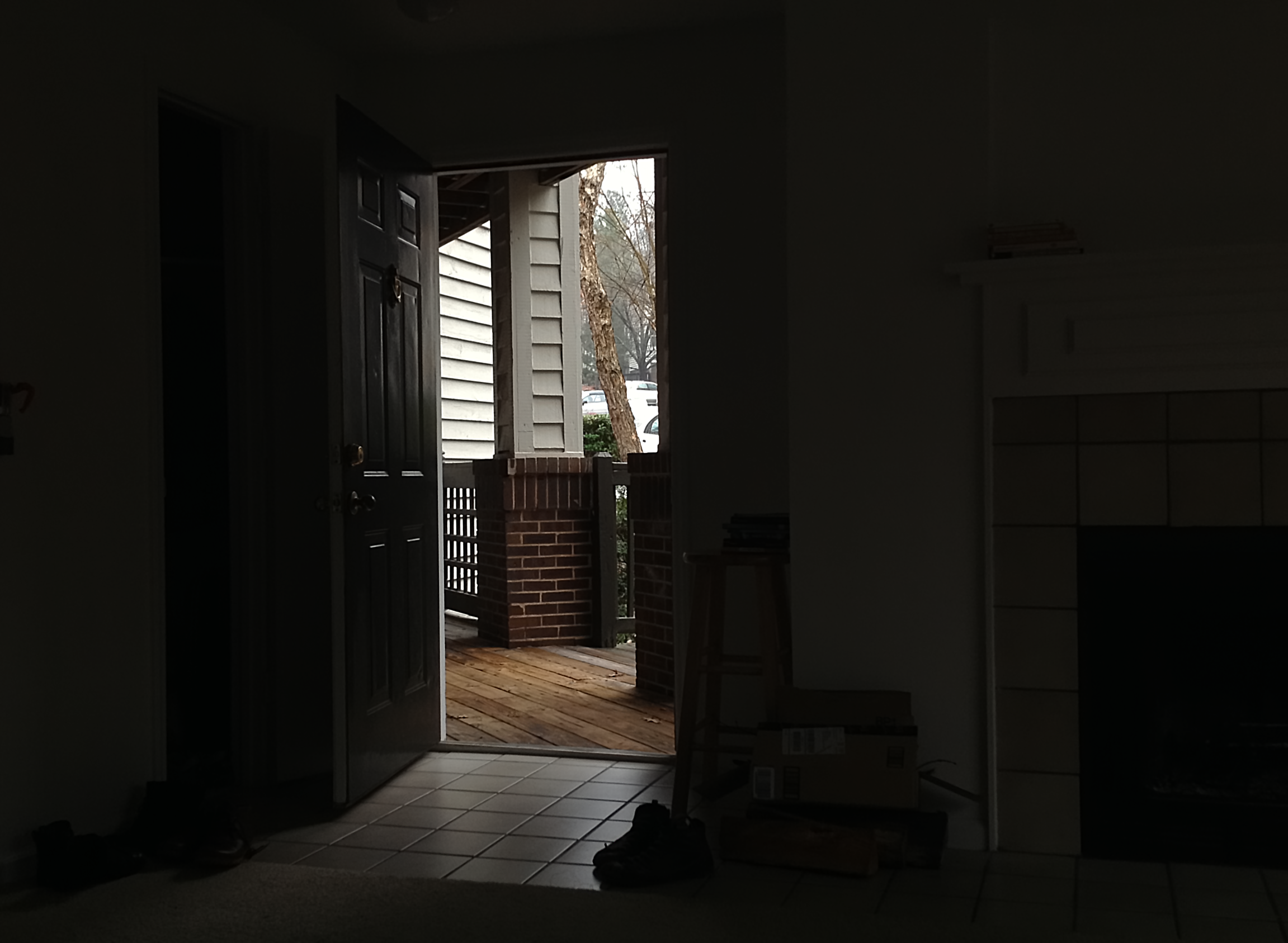}}

\ssmall FBA - short exposure (SE)
\end{minipage}
\begin{minipage}[c]{.15\textwidth}
\centering
\cfboxR{1.0pt}{black}{\includegraphics[width=0.99\textwidth]{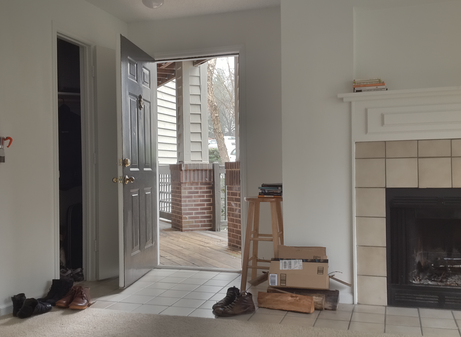}}

\ssmall FBA - exposure fusion
\end{minipage}

\vspace{.8em}

\begin{minipage}[c]{\textwidth}
\centering
\hspace{-0.55em}
\begin{sideways}{\quad \quad {\scriptsize best frame  SE}}\end{sideways}\hspace{-.15em}
\cfboxR{1.3pt}{black}{\includegraphics[width=.153\textwidth]{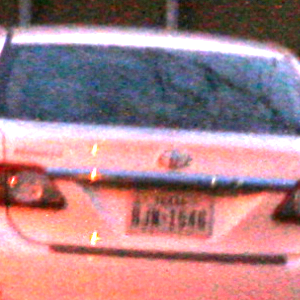}}\hspace{-.1em}
\cfboxR{1.3pt}{black}{\includegraphics[width=.153\textwidth]{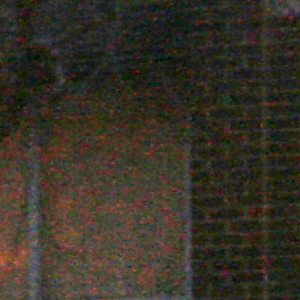}}\hspace{-.1em}
\cfboxR{1.3pt}{black}{\includegraphics[width=.153\textwidth]{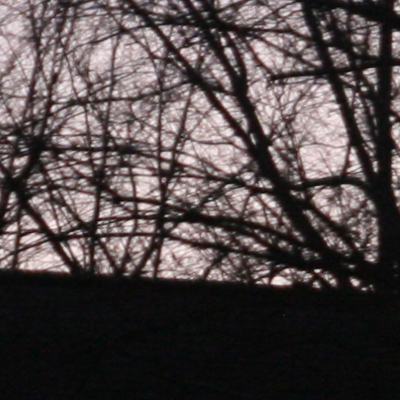}}\hspace{0.3em}
\cfboxR{1.3pt}{black}{\includegraphics[width=.153\textwidth]{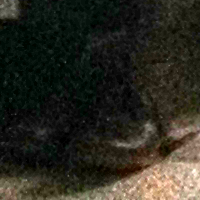}}\hspace{-.1em}
\cfboxR{1.3pt}{black}{\includegraphics[width=.153\textwidth]{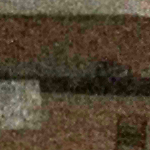}}\hspace{-.1em}
\cfboxR{1.3pt}{black}{\includegraphics[width=.153\textwidth]{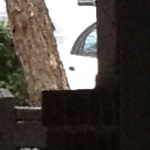}} 
\end{minipage}

\vspace{.1em}

\begin{minipage}[c]{\textwidth}
\centering
\hspace{-0.65em}
\begin{sideways}{\quad  {\scriptsize  typical frame SE}}\end{sideways}\hspace{-.15em}
\cfboxR{1.3pt}{black}{\includegraphics[width=.153\textwidth]{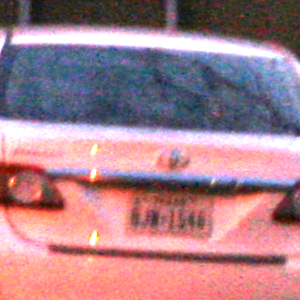}}\hspace{-.1em}
\cfboxR{1.3pt}{black}{\includegraphics[width=.153\textwidth]{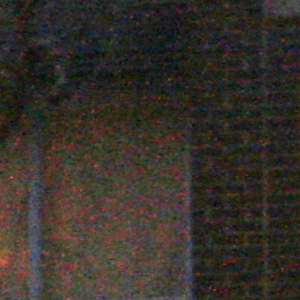}}\hspace{-.1em}
\cfboxR{1.3pt}{black}{\includegraphics[width=.153\textwidth]{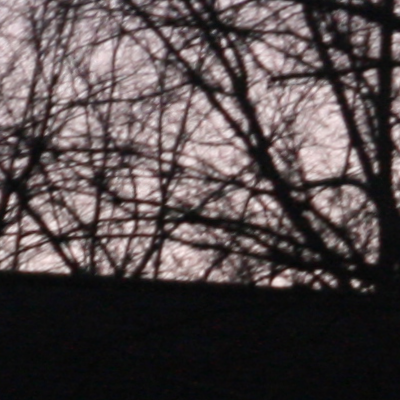}}\hspace{0.3em}
\cfboxR{1.3pt}{black}{\includegraphics[width=.153\textwidth]{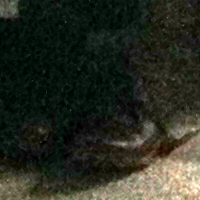}}\hspace{-.1em}
\cfboxR{1.3pt}{black}{\includegraphics[width=.153\textwidth]{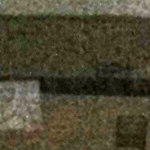}}\hspace{-.1em}
\cfboxR{1.3pt}{black}{\includegraphics[width=.153\textwidth]{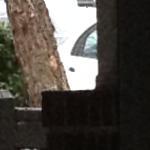}} 
\end{minipage}

\vspace{.1em}

\begin{minipage}[c]{\textwidth}
\begin{sideways}{\qquad  {\scriptsize best frame LE}}\end{sideways}
\cfboxR{1.3pt}{black}{\includegraphics[width=.153\textwidth]{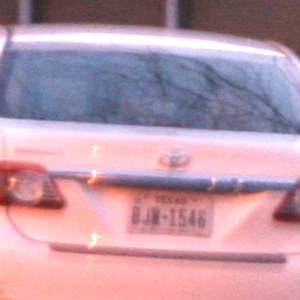}}\hspace{-.1em}
\cfboxR{1.3pt}{black}{\includegraphics[width=.153\textwidth]{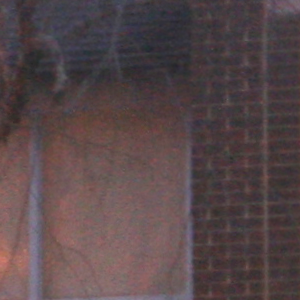}}\hspace{-.1em}
\cfboxR{1.3pt}{black}{\includegraphics[width=.153\textwidth]{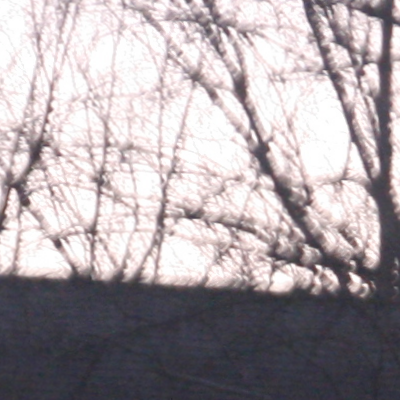}}\hspace{0.3em}
\cfboxR{1.3pt}{black}{\includegraphics[width=.153\textwidth]{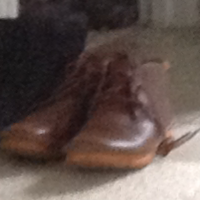}}\hspace{-.1em}
\cfboxR{1.3pt}{black}{\includegraphics[width=.153\textwidth]{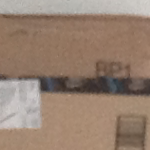}}\hspace{-.1em}
\cfboxR{1.3pt}{black}{\includegraphics[width=.153\textwidth]{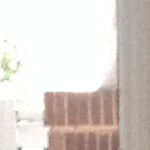}} 
\end{minipage}

\vspace{.1em}

\begin{minipage}[c]{\textwidth}
\begin{sideways}{\,\,\quad {\scriptsize typical frame LE}}\end{sideways}\hspace{-.15em}
\cfboxR{1.3pt}{black}{\includegraphics[width=.153\textwidth]{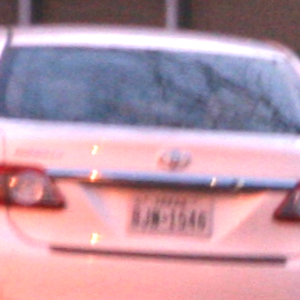}}\hspace{-.1em}
\cfboxR{1.3pt}{black}{\includegraphics[width=.153\textwidth]{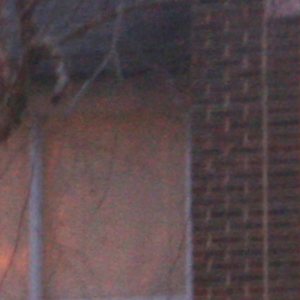}}\hspace{-.1em}
\cfboxR{1.3pt}{black}{\includegraphics[width=.153\textwidth]{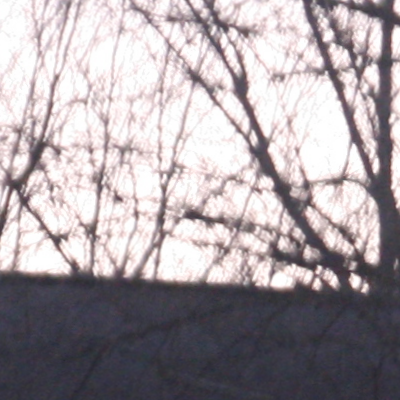}}\hspace{0.3em}
\cfboxR{1.3pt}{black}{\includegraphics[width=.153\textwidth]{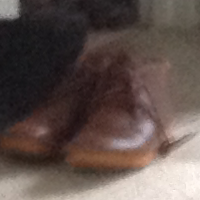}}\hspace{-.1em}
\cfboxR{1.3pt}{black}{\includegraphics[width=.153\textwidth]{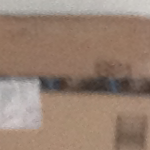}}\hspace{-.1em}
\cfboxR{1.3pt}{black}{\includegraphics[width=.153\textwidth]{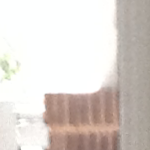}}
\end{minipage}

\vspace{.1em}

\begin{minipage}[c]{\textwidth}
\begin{sideways}{ {\scriptsize align \& average  fusion}}\end{sideways}\hspace{-.15em}
\cfboxR{1.3pt}{black}{\includegraphics[width=.153\textwidth]{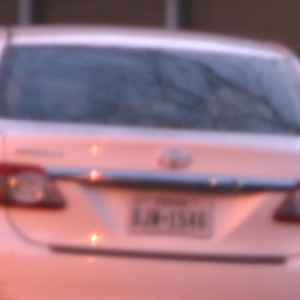}}\hspace{-.1em}
\cfboxR{1.3pt}{black}{\includegraphics[width=.153\textwidth]{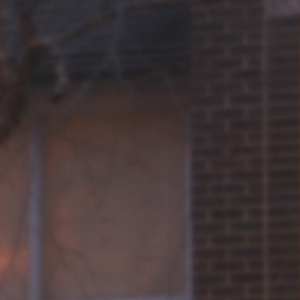}}\hspace{-.1em}
\cfboxR{1.3pt}{black}{\includegraphics[width=.153\textwidth]{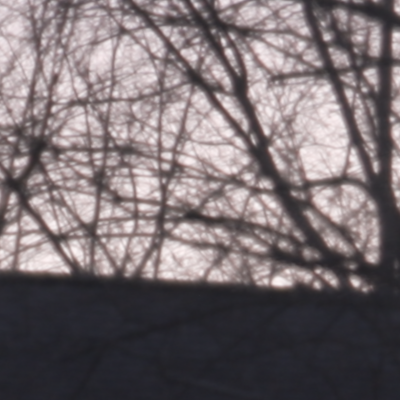}}\hspace{0.3em}
\cfboxR{1.3pt}{black}{\includegraphics[width=.153\textwidth]{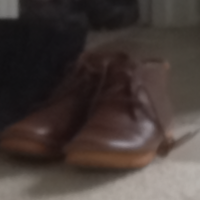}}\hspace{-.1em}
\cfboxR{1.3pt}{black}{\includegraphics[width=.153\textwidth]{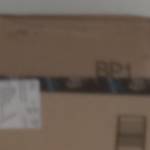}}\hspace{-.1em}
\cfboxR{1.3pt}{black}{\includegraphics[width=.153\textwidth]{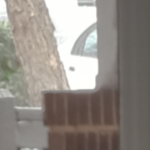}}
\end{minipage}

\vspace{.1em}

\begin{minipage}[c]{\textwidth}
\begin{sideways}{\quad  {\scriptsize best frames fusion}}\end{sideways}
\cfboxR{1.3pt}{black}{\includegraphics[width=.153\textwidth]{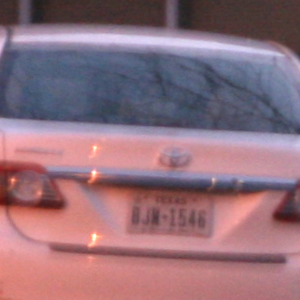}}\hspace{-.1em}
\cfboxR{1.3pt}{black}{\includegraphics[width=.153\textwidth]{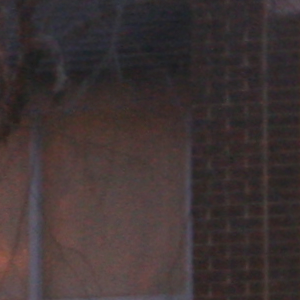}}\hspace{-.1em}
\cfboxR{1.3pt}{black}{\includegraphics[width=.153\textwidth]{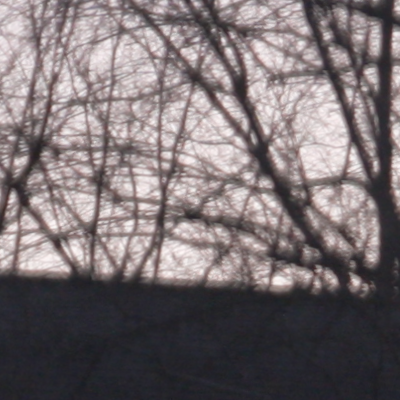}}\hspace{0.3em}
\cfboxR{1.3pt}{black}{\includegraphics[width=.153\textwidth]{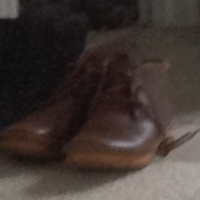}}\hspace{-.1em}
\cfboxR{1.3pt}{black}{\includegraphics[width=.153\textwidth]{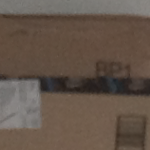}}\hspace{-.1em}
\cfboxR{1.3pt}{black}{\includegraphics[width=.153\textwidth]{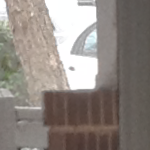}}
\end{minipage}

\vspace{.1em}

\begin{minipage}[c]{\textwidth}
\begin{sideways}{\textbf{\quad\quad \scriptsize FBAs fusion}}\end{sideways}
\cfboxR{1.3pt}{blue1}{\includegraphics[width=.153\textwidth]{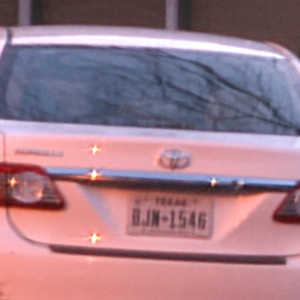}}\hspace{-.1em}
\cfboxR{1.3pt}{blue1}{\includegraphics[width=.153\textwidth]{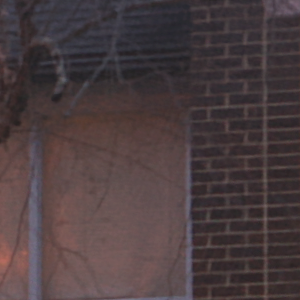}}\hspace{-.1em}
\cfboxR{1.3pt}{blue1}{\includegraphics[width=.153\textwidth]{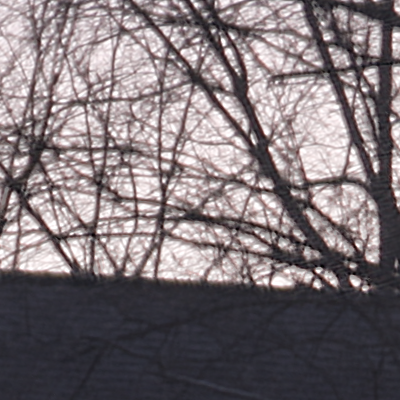}}\hspace{0.3em}
\cfboxR{1.3pt}{blue1}{\includegraphics[width=.153\textwidth]{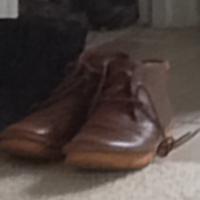}}\hspace{-.1em}
\cfboxR{1.3pt}{blue1}{\includegraphics[width=.153\textwidth]{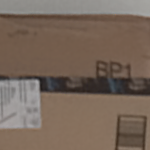}}\hspace{-.1em}
\cfboxR{1.3pt}{blue1}{\includegraphics[width=.153\textwidth]{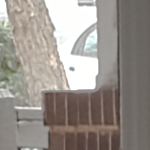}}
\end{minipage}

\end{center}

\caption{HDR burst exposure fusion. Two different examples of fusion of two 12-image bursts captured with two different exposure levels. 
On top, the \textsc{fba} average of each burst  (left and middle) and the HDR fusion of these two images using~\cite{zhang2010gradient} (right). 
Below, image crops of the best shot (sharpest, manually selected) in each burst
and another typical in the series; the exposure fusion using the regular align and average to combine all the images of a burst, the exposure fusion using 
the best shot in each series, and the fusion using the proposed Fourier weighted average. Some of the crops have been rescaled to improve their contrast.}
\label{fig:hdrResults}

\end{figure*}

\section{Conclusions}
\label{sec:conclusions}
We presented an algorithm to remove the camera shake blur in an image burst. The algorithm
is built on the idea that each image in the burst is generally differently blurred; this being a consequence 
of the random nature of hand tremor. By doing a weighted average in the Fourier domain, we  reconstruct 
an image combining the least attenuated frequencies in each frame.
Experimental results showed that the reconstructed image is sharper and less noisy than the original ones.

This algorithm has several advantages. First, it does not introduce typical ringing or overshooting artifacts present in most deconvolution algorithms. 
This is avoided by not formulating the  deblurring problem as an inverse problem of deconvolution. 
The algorithm produces similar or better results than the state-of-the-art multi-image deconvolution while being significantly faster and with lower memory footprint.

We also presented a direct application of the Fourier Burst Accumulation algorithm to HDR imaging with a hand-held camera.
As a future work, we would like to incorporate a gyroscope registration technique, e.g.,~\cite{park2014gyro}, to create
a real-time system for removing camera shake in image bursts.

A very related problem is how to determine the best capture strategy. Giving a total exposure time, would it be more convenient to take 
several pictures with a short exposure (i.e., noisy) or only a few with a larger exposure time (i.e., blurred)? Variants of these questions 
have been previously tackled~\cite{park2014gyro,boracchi2012modeling,zhang2010denoising} in the context of denoising / deconvolution tradeoff.
We would like to explore this analysis using the Fourier Burst Accumulation principle.

\appendix
\label{sec:appendix}
We consider that a burst is correctly aligned if each $v_i$ satisfies 
$v_i = u \star k_i + n_i$,
with the blurring kernel $k_i$ having vanishing first moment. That is,
$\int k_i(\bx) \bx d\bx = 0$. 
This constraint on the kernel implies that the kernel does not drift the image $u$, so each $v_i$ is aligned to~$u$.

An intuitive way to motivate this requirement is by analyzing the result of iteratively applying the blurring kernel a large number of times. 
Let $k(\bx)$ be a non-negative blurring kernel, with $\boldsymbol\mu = \int k(\bx)\bx d\bx$ and 
$\Sigma = \int k(\bx)(\bx - \boldsymbol\mu)(\bx - \boldsymbol\mu)^t d\bx$. Then, 
$$
k^{\star n} := k \star k \star \cdots \star k  \longrightarrow G \left(n\boldsymbol\mu, n\Sigma\right),
$$
where $G \left(n\boldsymbol\mu, n\Sigma\right)$ is a Gaussian function with mean $n\boldsymbol \mu$ and variance $n\Sigma$. 
This is a direct consequence of the
Central Limit Theorem. This Gaussian function can be decomposed into two different components: a centered Gaussian kernel 
(the low pass filter component) and a shifting kernel given by a Dirac delta function, that is,
$$
G \left(n\boldsymbol\mu, n\Sigma\right) = G\left(0,n\Sigma\right) \star \delta_{n\boldsymbol\mu}.
$$
This means that iteratively applying  $n$ times the kernel $k$  is (asymptotically) equivalent to 
applying a Gaussian blur with variance $n\Sigma$ and then shifting the image an amount $n\boldsymbol\mu$.
Thus, we can say that the original kernel $k(\bx)$ drifts the image ``in average'' an amount $\boldsymbol\mu$. 
Therefore, if we do not want the image
to be shifted, the blurring kernel should have zero first moment.
Following this argument all the simulated kernels were centered by forcing $\boldsymbol \mu = \int k(\bx) \bx d\bx = 0$. \vspace{.5em}

\section*{Acknowledgment}
The authors would like to thank Cecilia Aguerrebere and Tomer Michaeli for fruitful comments and discussions. 

\ifCLASSOPTIONcaptionsoff
  \newpage
\fi

\bibliographystyle{IEEEtran}
\bibliography{burst_deblurring_clean_short}

\begin{IEEEbiography}[{\includegraphics[width=1in,height=1.25in,clip,keepaspectratio]{./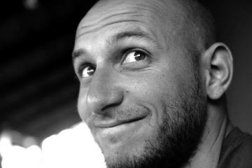} \vspace{4em}}]{Mauricio Delbracio}
received the graduate degree from the Universidad
de la Rep\'{u}blica, Uruguay, in electrical engineering in 2006, the MSc
and PhD degrees in applied mathematics from \'{E}cole normale sup\'{e}rieure de Cachan, France, in 2009
and 2013 respectively. He currently has a postdoctoral position at
the Department of Electrical and Computer Engineering, Duke
University. His research interests include image and signal
processing, computer graphics, photography and computational
imaging.
\end{IEEEbiography}
\vfill
\begin{IEEEbiography}[{\includegraphics[width=1in,height=1.25in,clip,keepaspectratio]{./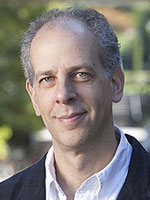}}]{Guillermo Sapiro}
was born in Montevideo, Uruguay, on April 3, 1966. He received his B.Sc. (summa cum laude), M.Sc., and
Ph.D. from the Department of Electrical Engineering at the Technion, Israel Institute of Technology, in 1989, 1991, and 1993 
respectively. After post-doctoral research at MIT, Dr. Sapiro became Member of Technical Staff at the research facilities of HP 
Labs in Palo Alto, California. He was with the Department of Electrical and Computer Engineering at the University of Minnesota, 
where he held the position of Distinguished McKnight University Professor and Vincentine Hermes-Luh Chair in Electrical and 
Computer Engineering. Currently he is the Edmund T. Pratt, Jr. School Professor with Duke University.

G. Sapiro works on theory and applications in computer vision, computer graphics, medical imaging, image analysis, and machine
 learning. He has authored and co-authored over 300 papers in these areas and has written a book published by Cambridge 
 University Press, January 2001.

G. Sapiro was awarded the Gutwirth Scholarship for Special Excellence in Graduate Studies in 1991, 
the  Ollendorff Fellowship for Excellence in Vision and Image Understanding Work in 1992, 
the Rothschild Fellowship for Post-Doctoral Studies in 1993, the Office of Naval Research Young Investigator Award in 1998, 
the Presidential Early Career Awards for Scientist and Engineers (PECASE) in 1998, the National Science Foundation Career Award in 1999, and
the National Security Science and Engineering Faculty Fellowship in 2010.
He received the test of time award at ICCV 2011.

G. Sapiro is a Fellow of IEEE and SIAM.

G. Sapiro was the founding Editor-in-Chief of the SIAM Journal on Imaging Sciences.
\end{IEEEbiography}

\vfill

\end{document}